\def\var{\mathop{\rm var}\nolimits}%
\def\diag{\mathop{\rm diag}\nolimits}%
\def\rank{\mathop{\rm rank}\nolimits}%
\newcommand{\Bc}{\mathcal{B}}
\newcommand{\Cc}{\mathcal{C}}
\newcommand{\Ec}{\mathcal{E}}
\newcommand{\Nc}{\mathcal{N}}
\newcommand{\Xv}{{\bf X}}
\newcommand{\xv}{{\bf x}}
\newcommand{\av}{{\bf a}}
\newcommand{\Av}{{\bf A}}
\newcommand{\Bv}{{\bf B}}
\newcommand{\Cv}{{\bf C}}
\newcommand{\mv}{{\bf m}}
\newcommand{\sh}{{\hat{s}}}
\def\g{\gamma}
\def\e{\epsilon}
\def\l{\lambda}
\DeclareMathOperator\E{E}
\let\P\relax
\DeclareMathOperator\P{P}
\newcommand\ie{i.e.,\xspace}
\def\textiid{i.i.d.\@\xspace}
\newcommand\iid{\ifmmode\text{ i.i.d. } \else \textiid \fi}
\newcommand{\ind}{\mathbbmss{1}}
\newcommand{\Dhmu}{\hat{\delta}_{\mu}}
\newcommand{\dmu}{\delta_{\mu}}
\newcommand{\muh}{\hat{\mu}}
\title{Linear Time Clustering for High Dimensional Mixtures of Gaussian Clouds}
\author{Dan Kushnir\thanks{Corresponding Author: dan.kushnir@nokia-bell-labs.com}}
\author{Shirin Jalali\thanks{shirin.jalali@nokia-bell-labs.com}}
\author{Iraj Saniee\thanks{iraj.saniee@nokia-bell-labs.com}}
\affil{Bell Laboratories, Nokia, Murray Hill, NJ 07974}
\date{\today}
\begin{document}

\maketitle
\newtheorem{claim}{Claim}
\newtheorem{theorem}{Theorem}
\newtheorem{lemma}{Lemma}
\newtheorem{remark}{Remark}
\newtheorem{corollary}{Corollary}

\begin{abstract}
Clustering mixtures of Gaussian distributions is a fundamental and challenging problem that is ubiquitous in various high-dimensional data processing tasks. While state-of-the-art work on learning Gaussian mixture models has focused primarily on improving separation bounds and their generalization to arbitrary classes of mixture models, less emphasis has been paid to practical computational efficiency of the proposed solutions. In this paper, we propose a novel and highly efficient clustering algorithm for  $n$ points drawn from a mixture of two arbitrary Gaussian distributions in $\mathbb{R}^p$. The algorithm involves performing random 1-dimensional projections until a direction is found that yields a user-specified clustering error $e$. For a 1-dimensional separation parameter $\gamma$ satisfying $\gamma=Q^{-1}(e)$, the expected number of such projections is shown to be bounded by $o(\ln p)$, when $\gamma$ satisfies $\gamma\leq c\sqrt{\ln{\ln{p}}}$, with $c$ as the separability parameter of the two Gaussians in $\mathbb{R}^p$. Consequently, the  expected overall running time of the algorithm is linear in $n$ and quasi-linear in $p$ at $o(\ln{p})O(np)$, and the sample complexity is independent of $p$. This result stands in contrast to prior works which provide polynomial, with at-best quadratic, running time in $p$ and $n$.  We show that our bound on the expected number of 1-dimensional projections extends to the case of three or more Gaussian components, and we present a generalization of our results to mixture distributions beyond the Gaussian model.
\end{abstract}

\section{Introduction}

Consider the problem of clustering a mixture of two Gaussian distributions in $\mathds{R}^p$, based on $n$ random drawings   $\Xv_1,\ldots,\Xv_n$. Each point $\Xv_j$, with probability $w_1$,  is drawn from $\Nc(\mv_1,\Sigma_1)$,  and with probability $w_2=1-w_1$ from $\Nc(\mv_2,\Sigma_2)$, where $\mv_i\in\mathds{R}^p$ and $\Sigma_i\in\mathds{R}^{p\times p}$, for $i=1,2$.  Given unlabeled observation points  $\Xv_1,\ldots,\Xv_n$ in $\mathds{R}^p$, the clustering task aims at labeling each point in $\mathds{R}^p$ as either 1 or 2. The goal is to minimize the clustering error probability, which is defined as the probability that  the label of the  Gaussian that has generated a point disagrees with its assigned label, up to a fixed permutation of all labels.

%The main focus of the research on learning parameters of Gaussian Mixture Model (GMM) has been on using clustering as means for estimating the parameters, where only in few cases the estimated parameters are used as a step towards assigning data points to clusters. In such cases, once the parameters are learned, then as a byproduct,
The main focus of early research on learning Gaussian Mixture Models (GMMs) has been on estimating the parameters of the GMM first and then using the estimated parameters as a step towards assigning data points to clusters. In such cases, once the parameters are learned with sufficient accuracy, then as a byproduct, one can cluster points by assigning to each point the Gaussian cloud with highest posterior probability. Of high popularity in this context are the EM and K-means algorithms \cite{Dempster,LloydKmeans}, whose convergence has weak guarantees. These shortcomings are especially critical at high dimensions, where the number of parameters to be estimated is quadratic in the dimension $p$.
%In particular, the estimation of the $p \times p$ component covariance matrices (especially in high dimensional spaces) is very challenging, both theoretically and algorithmically.
In particular, the estimation of the $p \times p$ component covariance matrices is very challenging, especially in high dimensions both theoretically and algorithmically.
%When the sample size $n$ is comparable with the ambient dimension $p$, and not with $p^2$, while clustering of the points might be feasible, without any further knowledge about the structure of the Gaussians, estimating the covariance matrices becomes almost impossible. To this end, not much research has been dedicated to the effect of parameters estimation error on the actual clustering error.
When the sample size $n$ is comparable with the ambient dimension $p$, and not with $p^2$, while clustering of the points might be feasible, estimating the covariance matrices becomes almost impossible without any further knowledge about the structure of the Gaussians. To alleviate these difficulties many approaches for learning GMMs employ random projections for embedding the data in a lower dimensional subspace where parameter learning can be performed more efficiently. %\cite{Dasgupta:99,VempalaWang,Kenan:05,Sinha1,kalai2010efficiently,Moitra,Sinha1}.}

%We provide a comprehensive table on related work on GMM parameter learning and their complexity in table \ref{tab:relwrk}.
In Table \ref{tab:relwrk} we provide a summary of prior work on GMM parameter learning, clustering and their computational complexity.
A clear trend is visible from the table in which most of the research has concentrated on improving the performance with respect to parameter estimation for decreasing component separation and on generalizing the achieved bounds to arbitrary GMMs. On the other hand, the running-time complexity of these methods  was addressed only as polynomially bounded for most methods, and further algorithmic analysis shows at least quadratic running time in the dimension or sample size. Thus, the majority of these methods are inapplicable for very high dimensional and big data. An exception of this is the algorithm suggested in \cite{schulman} for learning a spherical GMM. Most notably, it works in the ambient dimension with a modified 2-round EM algorithm using auxiliary procedures for pruning overlapping and small clusters. However, this method is restricted to well-separated spherical GMMs only, while for a non-spherical GMM EM requires at least a $p^2$ computation of the covariance matrix.
\begin{table}[t]\scriptsize
\begin{tabular}{ |p{12mm}|p{13mm}|p{13mm}|p{32mm}|p{21mm}|p{6mm}|p{26mm}| }
 \hline
 Author & Method & GMM Class & Complexity  & Sample Complexity & Min. means Sep. & Parameters\textbackslash comments  \\\hline
  \cite{Dasgupta:99} & Random projection &Shared Spherical Covariance & $O(dn^2+ndp)$ & $k^{O(\log^2(1/{\varepsilon \delta}))}$ & $\sqrt{p}$& $d$- num. projections, $\varepsilon:  \{\|\hat{\mu}_i-\mu_i \|_{L_2} \}\leq \varepsilon \sigma_{max}\sqrt{p}$  \\\hline
    \cite{schulman} & EM & Spherical GMM& $npk$ & $\Omega(l max(1; c^{-2}))$ & $\Omega (p^{\frac{1}{4}})$ & $l=\Omega(\frac{1}{w_{min}}\ln{\frac{1}{w_{min}}})$ \\\hline
  \cite{Arora} & Distance based & Arbitrary GMM & $O(p^2poly(k) \log^2 {\frac{p}{\delta}})$ and $O(pn^2)$ for distance computation & $O(p \cdot poly(k) \log{\frac{p}{\delta}})$& $\Omega(p^{\frac{1}{4}})$ & $k$ - num. Gaussians\\\hline
 \cite{VempalaWang} & Spectral, Distance & Spherical  GMM& $poly(p,k)$, $p^3$ for SVD and $O(pn^2)$ for distance computation& $poly(p,k)$ & $\Omega(k^{\frac{1}{4}})$ & \\\hline
 \cite{Kenan:05} & Spectral & log-concave & $poly(p,\frac{1}{\epsilon},\log{\frac{1}{\delta}})$, $p^3$ for SVD& $O(\frac{p}{\epsilon}\log^3{\frac{p}{\delta}})$ & $ \frac{k^{\frac{3}{2}}}{\epsilon^2} $ & need to know $w_i$, $\epsilon=w_{min}$ \\\hline
%Achlioptas and McSherry \cite{Achiloptas} & Spectral & Arbitrary & $p^3$ (spectral decomposition) &  & & & \\\hline
\cite{Feldman}& Method of moments (MoM) & Axis Aligned GMM & $poly(\frac{p}{\epsilon})$&  & $>0$& Learns density (not params.), $\epsilon$: $L_1$ distance of means and $w_i$'s \\\hline
 \cite{Sinha1} & Spectral & Identical spherical GMM & $\frac{k^{\frac{3}{2}}}{(\sigma_{min}^4\sigma)^4}\left(\frac{p^{3/ 2}k^{1/ 2}}{\epsilon}\right)^{C_1 k^2} $& $poly((\frac{pk^2}{\epsilon})^{k^3}\cdot \log^k{\frac{2}{\delta}})$ & $>0$ & \\\hline
\cite{kalai2010efficiently} & Random projection and MoM & Arbitrary 2-GMM &  $poly(p,\frac{1}{\epsilon},\frac{1}{\delta},\frac{1}{w},\frac{1}{D_{1,2}})$, $p^2$ projections & same as running time& $\geq 0$ & $D_{1,2}=D(F_1,F_2)$ - statistical distance with $\epsilon$ as its accuracy param. \\\hline
 \cite{Moitra} & Random projection and MoM & Arbitrary GMM & same as \cite{kalai2010efficiently} & same as \cite{kalai2010efficiently} &$\geq 0$  &   \\\hline
 \cite{Sinha2} & Deterministic projection MoM & Arbitrary GMM & $poly(p,\frac{1}{\epsilon},\frac{1}{\delta},B)$, algorithm uses $\binom{p}{2k^2}$ projections & $poly(p,\frac{1}{\epsilon},\frac{1}{\delta},B)$  & $\geq 0$ & $\epsilon$ - $L_2$ error in params. $B$ - radios of params. identifiability ball\\\hline
Our Algorithm & Random Projection and MoM & Arbitrary 2-mixture of distributions & expected $o(\ln{p})O(np)$ when $\frac{\gamma}{c}\leq \sqrt{\ln{\ln{p}}}$ & $O(\frac{1}{\epsilon^2}\log{\frac{1}{\delta}}) $& $\sqrt{p}$ & $\epsilon$ - error in params., $\gamma$ - separation in 1-dimension s.t. $Q^{-1}(e)=\gamma$ for clustering error $e$, $Q$ - Q-function\\\hline
\end{tabular}
\label{tab:relwrk}
\caption{Related GMM learning methods.}
\end{table}
%\end{center}
\normalsize

More recent approaches learn the parameters of a GMM by projecting the data points to a lower dimensional space to estimate
the labels of the points, and then learn the parameters of the Gaussians in
the high dimensional space using the estimated labels \cite{Dasgupta:99,VempalaWang,Kenan:05,Sinha1,kalai2010efficiently,Moitra,Sinha2}. The number of projections as well as the
learning of the parameters in the projected space have been shown to be possible
in polynomial time in the GMM parameters with algorithms at least quadratic in the dimension $p$ and$/$or number $n$
of samples. However, it is an open question if one can learn clusters
accurately enough in linear time in the ambient dimension $p$ or
the sample size $n$, before or after estimating the Gaussian parameters
in the ambient or lower dimensional sub-spaces.
This question has motivated our work to explore the tradeoff between the accuracy
of clustering in collections of 1-dimensional random projections and the
running time. Surprisingly enough, we show that when a user-specified
clustering error reflects a 1-dimensional separability close to the separation of the Gaussians in $\mathds{R}^p$,
our proposed algorithm achieves the desired accuracy by using a number of random  projections that is sub-logarithmic in $p$.
With a 1-dimensional learner (such as one based on the method of moments (MoM) \cite{pearson1894contributions} or expectation maximization (EM) \cite{Dempster}) that runs in $O(n)$ time, we achieve an expected running time that is quasi-linear in $p$ and linear in $n$, at $o(\ln{p})O(np)$. This running time is achieved when the prescribed clustering error $e$ corresponds to a 1-dimensional separability $\gamma$ such that $\gamma\leq c\sqrt{\ln{\ln{p}}}$, where $c$ is defined as
\begin{align}
\label{eq:sep_intro}
c={\|\mv_1-\mv_2\|\over  \sqrt{p} \left(\sqrt{\lambda_{\max}(\Sigma_1)}+\sqrt{\lambda_{\max}(\Sigma_2)}\right)},
\end{align}
and $\lambda_{\max}(\Sigma_i)$ denotes the the maximum eigenvalue of $\Sigma_i$, $i=1,2$.

%Our results are motivated by an initial observation that a random projection into 1-dimension not only  preserves separability $c$ in expectation (as one may hope for), but also, with a probability that depends on $c$, and the ambient dimension  $p$, can be larger than $c$.
Our results are motivated by the observation that a random projection into 1-dimension not only preserves the separability $c$ in expectation (as one may hope for), but with a probability that depends on $c$ and the ambient dimension $p$, may be even larger than $c$. Thus, learning parameters in 1-dimension is not only computationally (much) faster but is also easier as the sample concentration grows in 1-dimension.
The above can be seen clearly in the following empirical distribution of the 1-dimensional separability values $\gamma$ obtained by projecting a mixture of two Gaussians with separability $c$ into multiple 1-dimensional random directions and registering their separability values. As seen in Fig.~\ref{fig:prob_sep}, a significant fraction of the probability mass lies in values higher than $c$. Recovering the random directions in which the separability is higher than $c$ can be done remarkably fast by sequentially scanning each of them with a learning algorithm that runs in 1-dimension in $O(n)$ time.

\begin{figure}[!htpb]
\centering
\includegraphics[width=3in]{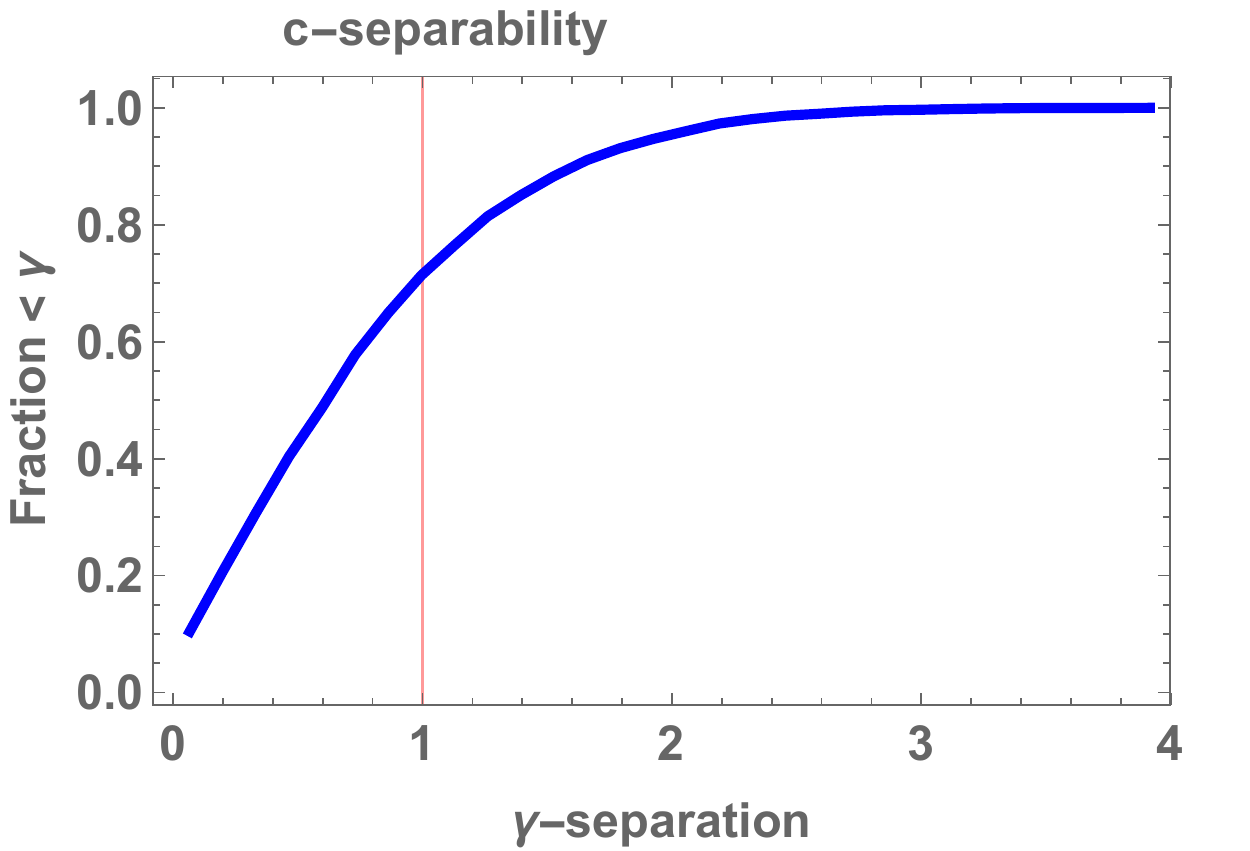}
\caption{\scriptsize{Cumulative distribution of $\gamma$ generated from random projections of spherical Gaussians mixtures in $\mathds{R}^{1000}$ with $c=1$.}}
 \label{fig:prob_sep}
\end{figure}

%
%\begin{figure}[!htpb]
%\begin{center}
%    \psfig{figure=./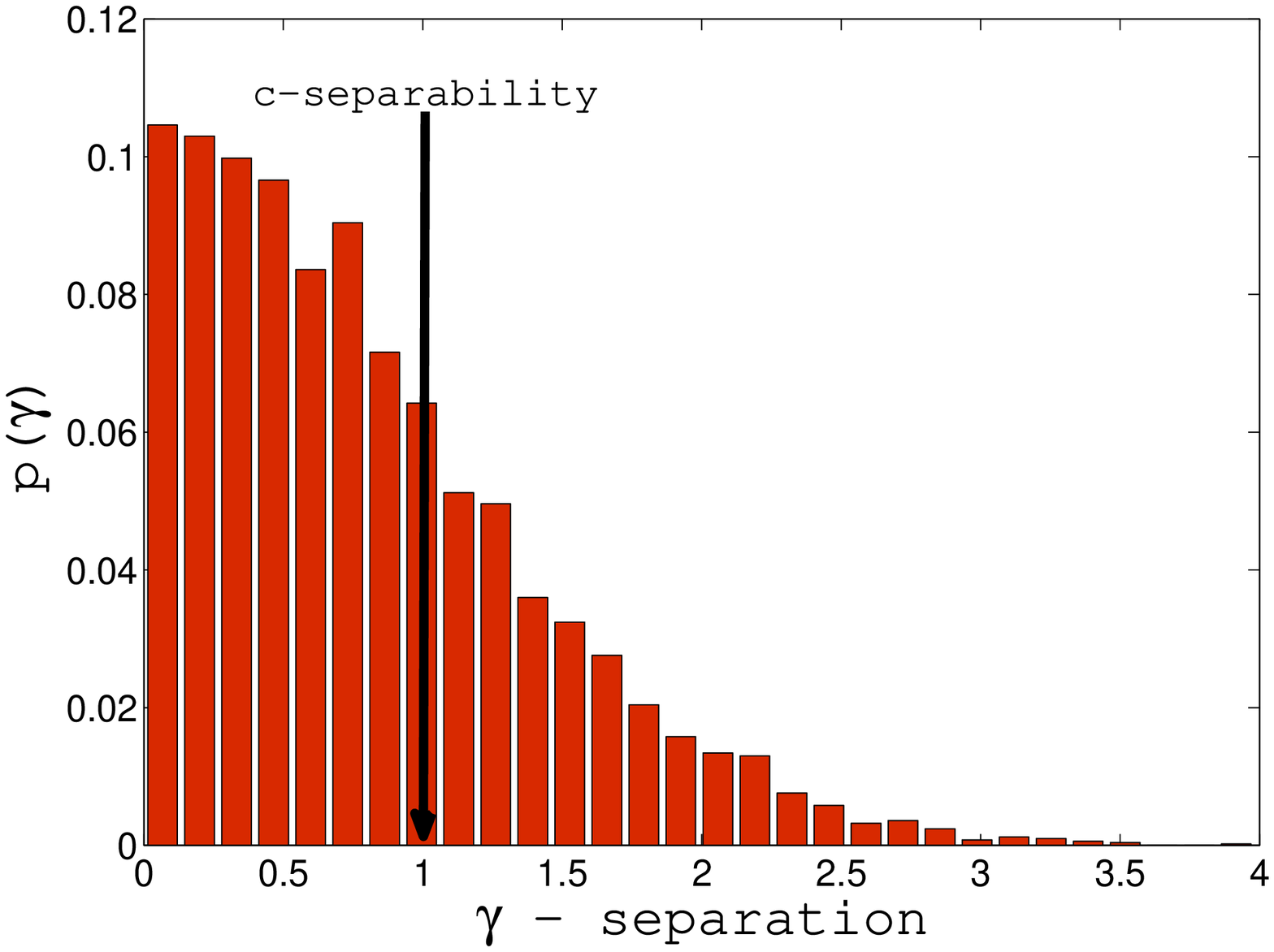,height=50mm,width=55mm}
%%        \psfig{figure=./cdf_gamma.eps,height=50mm,width=55mm}
%\caption{\scriptsize{Empirical probability of $\gamma$ generated from random projections of spherical Gaussians mixtures in $\mathds{R}^{1000}$ with $c=1$.}}
% \label{fig:prob_sep}
% \end{center}
% \end{figure}
The two observations above, namely, non-negligible probability of random directions with $\gamma>c$ and $O(n)$ efficiency in 1-dimensional parametric clustering give rise to running time that is linear in $n$ and quasi-linear in $p$ for clustering GMMs via 1-dimensional projections. This complexity is a significant running time improvement over state-of-the-art algorithms that cluster GMMs by learning their parameters. For instance, the number of 1-dimensional projections is at best quadratic in $p$ in \cite{Moitra,kalai2010efficiently}, or cubic in $p$ for spectral-projections-based methods \cite{VempalaWang,Kenan:05,Achiloptas,Sinha1}.  In other cases, where higher order random projection subspaces are used, the running time is quadratic in $n$ \cite{Dasgupta:99}, similarly to the quadratic complexity achieved in distance-based methods \cite{Arora}. Clearly, the purpose of these methods is to learn the parameters to a high accuracy, even for arbitrary separation. On the other hand, for clustering, we show that one can generate accurate clustering in time linear in $n$ and quasi-linear $p$, with separation that scales as $\sqrt{p}$, similar to the minimal separation required in \cite{Dasgupta:99}. For example, data sets such as the handwritten digits of USPS \cite{uci} in $[-1,1]^{256}$ are 0.63-separated.

We generalize our results to the case of $k>2$, providing the dependance of the number of projections to achieve separability $\gamma$ on the number of components. Moreover, since our result on the number of projections does not rely on the Gaussianity of the data, we use the central limit theorem (with respect to $p$) to show that our analysis readily extends to mixtures of two arbitrary distributions of finite second moment. %In particular, by using the randomness of the projection vector in the central limit theorem, the projected data converges to the Gaussian distribution. This convergence allows similar mapping from separability in 1-dimension to the clustering error as in the Gaussian case.

 We analyze the sample complexity of our proposed algorithm and its impact on the clustering accuracy (see Appendix \ref{sec:smpl_complex}). Our sample complexity is $O(\frac{1}{\epsilon^2}\log{\frac{1}{\delta}})$, where $\epsilon$ is the error in the GMM's parameter estimation, and $\delta$ is the confidence. This result relies on solving the 1-dimensional parameter learning problem using MoM following on \cite{price}. Thus, we alleviate the sample complexity dependency on the dimension, where in other GMM learning methods (\cite{Arora,VempalaWang,Kenan:05,Sinha1,kalai2010efficiently,Sinha2,Moitra}) the sample complexity depends on $p$.

We provide our main results in Section \ref{sec:sepAfterProj} whereas for space-limit reasons most of the main proofs are in the relevant Appendix \ref{sec:proofs}. In Section \ref{sec:arbitrary_dist} we discuss the extension of our results to mixtures of arbitrary distributions. Our algorithm is presented in Section \ref{sec:alg}, and its sample complexity is analyzed in Appendix \ref{sec:smpl_complex}. Finally, we provide empirical validation to our algorithm's complexity and accuracy bounds in Appendix \ref{sec:experiments}.

%------------%------------%------------%-------------%------------%------------%-------------%
\section{Background}
Consider the problem of clustering  $n$ points $\Xv_1,\ldots,\Xv_n$  in $\mathds{R}^p$ that are  generated according to a mixture of two Gaussian  distributions as $w_1\Nc(\mv_1,\Sigma_1)+w_2\Nc(\mv_2,\Sigma_2)$. A  clustering algorithm $\Cc: \mathds{R}^p\to\{1,2\}$, without having access to the  parameters $(\mv_1,\mv_2,\Sigma_1,\Sigma_2,w_1)$, maps each point in $\mathds{R}^p$ into label $1$ or  label $2$. For $\Xv\in\mathds{R}^{p}$ generated according to the described distribution, let $T(\Xv)$ denote the random variable that represents the underlying correct label.
Let $\e(\alpha,\mv_1,\mv_2,\Sigma_1,\Sigma_2)$ denote the  error of an optimal {\em Bayesian} classifier of the mixture of two Gaussians, which has access to the parameters of the Gaussians and their weights. Then $\e(\alpha,\mv_1,\mv_2,\Sigma_1,\Sigma_2)$  can be bounded via the $Q$-function as follows:
\begin{theorem}\label{thm:c-sep-vs-error}
Consider points  in $\mathds{R}^p$ drawn from a mixture of two Gaussian distributions   $\alpha\Nc(\mv_1,\Sigma_1)+(1-\alpha)\Nc(\mv_2,\Sigma_2)$ . Assume that the two components of the mixture are $c$-separable. Then,
\begin{align}
\label{eq:opt_clust_err}
\e(\alpha,\mv_1,\mv_2,\Sigma_1,\Sigma_2) &\leq Q\left({c\over 2}\sqrt{p}\right),
\end{align}
where
\begin{align}
Q(x)={1\over \sqrt{2\pi}}\int^{\infty}_x {\rm e}^{-{u^2\over 2}}du.\label{eq:def-Q}
\end{align}
\end{theorem}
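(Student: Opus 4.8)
The plan is to upper-bound the Bayes error by the error of a single, explicitly chosen \emph{suboptimal} linear classifier. Since the optimal Bayesian rule minimizes the error probability by definition, any concrete classifier I exhibit yields a valid upper bound on $\e(\alpha,\mv_1,\mv_2,\Sigma_1,\Sigma_2)$. This is the key move: for unequal covariances $\Sigma_1\neq\Sigma_2$ the exact Bayes boundary is quadratic and awkward to analyze directly, so I would sidestep it entirely and analyze only the mean-difference linear rule, which suffices for a $Q$-function bound.

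First I would fix the unit direction $\uv=(\mv_1-\mv_2)/\|\mv_1-\mv_2\|$ and consider the classifier that projects each point $\Xv$ onto $\uv$ and compares $\uv^\top\Xv$ against the midpoint threshold $\tau=\uv^\top(\mv_1+\mv_2)/2$. For a point drawn from component $i$, $\uv^\top\Xv$ is a scalar Gaussian with mean $\uv^\top\mv_i$ and variance $\uv^\top\Sigma_i\uv$. Because $\uv^\top(\mv_1-\mv_2)=\|\mv_1-\mv_2\|$, the signed distance from each projected mean to $\tau$ is exactly $\|\mv_1-\mv_2\|/2$, so a standard Gaussian tail computation gives the conditional error on component $i$ as $Q\!\left(\|\mv_1-\mv_2\|/(2\sqrt{\uv^\top\Sigma_i\uv})\right)$. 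I would then bound the projected variance by the top eigenvalue: since $\uv$ is a unit vector, $\uv^\top\Sigma_i\uv\le\lambda_{\max}(\Sigma_i)$, and because $Q$ is monotonically decreasing, a smaller standard deviation makes its argument larger and hence $Q$ smaller, giving a conditional error at most $Q\!\left(\|\mv_1-\mv_2\|/(2\sqrt{\lambda_{\max}(\Sigma_i)})\right)$.

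The final step invokes the $c$-separability hypothesis, which I read from \eqref{eq:sep_intro} as $\|\mv_1-\mv_2\|\ge c\sqrt{p}\,(\sqrt{\lambda_{\max}(\Sigma_1)}+\sqrt{\lambda_{\max}(\Sigma_2)})$. Dividing through, for each $i$ the argument satisfies
\begin{align}
\frac{\|\mv_1-\mv_2\|}{2\sqrt{\lambda_{\max}(\Sigma_i)}} \ge \frac{c\sqrt{p}}{2}\left(1+\frac{\sqrt{\lambda_{\max}(\Sigma_{3-i})}}{\sqrt{\lambda_{\max}(\Sigma_i)}}\right) \ge \frac{c\sqrt{p}}{2},
\end{align}
so each conditional error is at most $Q(c\sqrt{p}/2)$. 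Taking the $\alpha$-weighted average of the two conditional errors preserves this common upper bound, which yields $\e(\alpha,\mv_1,\mv_2,\Sigma_1,\Sigma_2)\le Q\!\left(\tfrac{c}{2}\sqrt{p}\right)$ exactly as claimed.

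I do not expect a serious technical obstacle here; the proof is short once the right suboptimal rule is chosen. The only points needing care are the monotonicity direction of $Q$ when passing from $\uv^\top\Sigma_i\uv$ to $\lambda_{\max}(\Sigma_i)$, and the precise convention for ``$c$-separable,'' since the bound is tight precisely when the two maximal eigenvalues coincide and $\uv$ aligns with the leading eigenvector of each $\Sigma_i$. The conceptual content is simply that replacing the intractable quadratic Bayes boundary by the linear mean-difference rule loses nothing at the resolution of this $Q$-function estimate.
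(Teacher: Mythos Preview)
Your proposal is correct and follows essentially the same approach as the paper: both upper-bound the Bayes error by the error of the suboptimal linear classifier that thresholds the projection onto $\mv_1-\mv_2$ at the midpoint, bound each projected variance by the maximal eigenvalue via $\uv^\top\Sigma_i\uv\le\lambda_{\max}(\Sigma_i)$, and then invoke $c$-separability to reduce each conditional error to $Q(c\sqrt{p}/2)$. The only cosmetic difference is that the paper works with the unnormalized direction $\mv_1-\mv_2$ and the threshold $(\|\mv_1\|^2-\|\mv_2\|^2)/2$, whereas you normalize to the unit vector $\uv$ up front; the computations are otherwise line-for-line equivalent.
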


%------------%------------%------------%-------------%------------%------------%-------------%
\section{Main Results}
\label{sec:sepAfterProj}
In this section we consider data that is generated according to a mixture of two  Gaussian distributions  $\Nc(\mv_1,\Sigma_1)$ and $\Nc(\mv_2,\Sigma_2)$, which are $c$-separable, i.e.
\begin{align}
\label{eq:sep_sec}
{\|\mv_1-\mv_2\|\over  \sqrt{p} \left(\sqrt{\lambda_{\max}(\Sigma_1)}+\sqrt{\lambda_{\max}(\Sigma_2)}\right)}\geq c.
\end{align}
  We study the probability that a random projection achieves a 1-dimensional separability $\gamma$ or higher, which can be directly related to a prescribed clustering error $e$ as $e\leq Q(\gamma)$ for two Gaussian distributions. Moreover, we provide conditions for the number of 1-dimensional projections required to achieve separability $\gamma$ to be sub-logarithmic in $p$ when $\gamma$ (corresponding to a clustering error in 1-dimension) is similar to $c$. These results allow the construction of very efficient clustering algorithms that run in $o(\ln{p})O(np)$.

  We divide the discussion into two cases. The first case is when the two Gaussians are spherical balls. The second case is when $\Sigma_1$ and $\Sigma_2$ are arbitrary positive semi-definite matrices. We also demonstrate the extension of our theoretical analysis for a mixture of $k$ Gaussians.

\subsection{Mixture of spherical Gaussians}
\label{subsec:spherical_GMM}
Consider the special case where $\Sigma_i=\sigma_i^2I_p$, for $i=1,2$.  We examine the projections of points generated as $w_1\Nc(\mv_1,\Sigma_1)+w_2\Nc(\mv_2,\Sigma_2)$ using a random vector $\Av=(A_1,\ldots,A_p)$, where $A_1,\ldots,A_p$ are i.i.d. as $\Nc(0,1)$. Using this projection, we derive a mixture of two Gaussians in $\mathds{R}$.
%\[
%\Xc_p=\{\langle \Xv_i,\Av\rangle:  i=1,\ldots,|\Xc|\}.
%\]
%Similarly, each point in $\Yc_p=\{\langle \Yv_i,\Av\rangle:  i=1,\ldots,|\Yc|\}$.
Conditioned on $\Av=\av$,  the two Gaussians $\Nc(\mv_1,\Sigma_1)$ and $\Nc(\mv_1,\Sigma_1)$ in $\mathds{R}^p$ are mapped to $\Nc(\langle \mv_1,\av\rangle,\sigma_1^2\|\av\|^2)$, and $\Nc(\langle \mv_2,\av\rangle,\sigma_2^2\|\av\|^2)$, respectively.
Therefore,   the two clusters are  $\gamma$-separable after projection, if $|\langle\mv_1,\av\rangle-\langle\mv_2,\av\rangle|> \gamma (\sigma_1+\sigma_2)\|\av\|,$
or
\begin{align}
|\langle\mv_1-\mv_2,\av\rangle|> \gamma (\sigma_1+\sigma_2)\|\av\|.\label{eq:sep-condition-iid}
\end{align}
%where $\gamma>0$ is a parameter that depends on the desired level of separability.

Since $\Av$ is not a fixed vector, the question is that given the randomness in the generation of the projection vector $\Av$, what is the probability that condition \eqref{eq:sep-condition-iid} holds. In other words, given $\mv_1$, $\mv_2$, $\sigma_1$ and $\sigma_2$, we are interested in
 \[
 \P(|\langle\mv_1-\mv_2,\Av\rangle|> \gamma(\sigma_1+\sigma_2)\|\Av\|),
 \]
 or
  \[
 \P\Big(\Big|\langle{\mv_1-\mv_2\over \|\mv_1-\mv_2\|},{\Av\over \|\Av\|} \rangle\Big|> {\gamma(\sigma_1+\sigma_2)\over \|\mv_1-\mv_2\|}\Big),
 \]
 where $A_1,\ldots,A_p\stackrel{\rm i.i.d.}{\sim}\Nc(0,1)$. The following theorem derives a lower bound on this probability.

\begin{theorem}\label{thm:1}
Consider $\mv_1,\mv_2\in\mathds{R}^p$ and $\sigma_1,\sigma_2\in\mathds{R}^+$. Assume that $\Av=(A_1,\ldots,A_p)$ are generated $i.i.d.$~according to $\Nc(0,1)$. Given $\gamma>0$, let
\[
\alpha \triangleq {\gamma^2(\sigma_1+\sigma_2)^2 p\over \|\mv_1-\mv_2\|^2}.
\]
Then, for any $\tau>0$
\begin{equation}
\label{eq:thm1}
\P\Big(|\langle\mv_1-\mv_2,{\Av\over \|\Av\|}\rangle|\geq  \gamma (\sigma_1+\sigma_2)\Big)\geq  \P\Big({A_1^2} >\alpha {(1-{1 \over p})\over (1-{\alpha\over p})}(1+\tau)\Big)(1- {\rm e}^{-{p-1\over 2}(\tau-\log(1+\tau))}).
\end{equation}
\end{theorem}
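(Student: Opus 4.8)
The plan is to reduce the event to a statement about a single standard Gaussian coordinate relative to the remaining squared norm, and then to decouple the numerator from the denominator using independence together with a concentration bound on a chi-squared variable. First I would exploit the rotational invariance of the i.i.d.\ standard Gaussian vector $\Av$. Writing $\uv=(\mv_1-\mv_2)/\|\mv_1-\mv_2\|$, we have $\langle\mv_1-\mv_2,\Av/\|\Av\|\rangle=\|\mv_1-\mv_2\|\,\langle\uv,\Av\rangle/\|\Av\|$, and since the law of $\Av$ is invariant under orthogonal maps we may take $\uv=e_1$, so that $\langle\uv,\Av\rangle=A_1$. Recalling $\alpha=\gamma^2(\sigma_1+\sigma_2)^2p/\|\mv_1-\mv_2\|^2$, the target event becomes
\[
\frac{A_1^2}{A_1^2+S}\geq\frac{\alpha}{p},\qquad S:=A_2^2+\cdots+A_p^2,
\]
where $S\sim\chi^2_{p-1}$ is independent of $A_1$.

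Next I would clear the denominator. Working in the meaningful regime $\alpha<p$ (note the bound is vacuous or false otherwise, and $c$-separability in any case forces $\alpha\leq(\gamma/c)^2\ll p$), the event rearranges to $A_1^2\geq\frac{\alpha}{p-\alpha}S$. The crux is to decouple $A_1^2$ from $S$ by restricting to the high-probability event $\{S\leq(p-1)(1+\tau)\}$. On this event $\frac{\alpha}{p-\alpha}S\leq\frac{\alpha}{p-\alpha}(p-1)(1+\tau)=\alpha\,\frac{1-1/p}{1-\alpha/p}(1+\tau)$, which gives the inclusion
\[
\Big\{A_1^2\geq\alpha\tfrac{1-1/p}{1-\alpha/p}(1+\tau)\Big\}\cap\{S\leq(p-1)(1+\tau)\}\ \subseteq\ \Big\{A_1^2\geq\tfrac{\alpha}{p-\alpha}S\Big\}.
\]
Intersecting the target event with $\{S\leq(p-1)(1+\tau)\}$, applying this inclusion, and using the independence of $A_1$ and $S$ factorizes the probability as the product $\P\big(A_1^2\geq\alpha\frac{1-1/p}{1-\alpha/p}(1+\tau)\big)\cdot\P(S\leq(p-1)(1+\tau))$, which already exhibits the first factor of the claimed bound.

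Finally, I would control $\P(S\leq(p-1)(1+\tau))=1-\P(S>(p-1)(1+\tau))$ through a Chernoff bound on the $\chi^2_{p-1}$ upper tail: using $\E[e^{\theta S}]=(1-2\theta)^{-(p-1)/2}$ for $\theta<1/2$ and optimizing at $\theta=\tau/(2(1+\tau))$ produces the exponent $-\tfrac{p-1}{2}(\tau-\log(1+\tau))$, hence the second factor $1-e^{-\frac{p-1}{2}(\tau-\log(1+\tau))}$. I expect the main obstacle to be organizational rather than deep: correctly orienting the inequalities so that the sub-event inclusion holds (which is exactly where $\alpha<p$ is needed to keep $\frac{\alpha}{p-\alpha}>0$), and carrying out the Chernoff optimization so that it lands on precisely the stated exponent rather than a looser Gaussian-type concentration bound.
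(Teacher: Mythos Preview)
Your proposal is correct and follows essentially the same approach as the paper: rotational invariance to reduce to $A_1^2/\|\Av\|^2\ge\alpha/p$, the algebraic rearrangement $(1-\alpha/p)A_1^2\ge(\alpha/p)S$ with $S=\sum_{i\ge2}A_i^2$, intersection with the high-probability event $\{S\le(p-1)(1+\tau)\}$, and factorization by independence. The only cosmetic difference is that the paper cites an external lemma for the $\chi^2_{p-1}$ upper-tail bound, whereas you derive it directly via the Chernoff/MGF optimization; your explicit note that $\alpha<p$ is needed for the rearrangement is a point the paper leaves implicit.
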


As shown by the following lemma, for two spherical $c$-separable Gaussian distributions in $\mathds{R}^p$, the expected value of the squared separability of the randomly  projected Gaussian distributions  in 1-dimension  is equal to $c^2$.
  \begin{lemma}\label{lemma:5}
Consider $\mv_1,\mv_2\in\mathds{R}^p$ and $\sigma_1,\sigma_2\in\mathds{R}^+$ and let
\[
c\triangleq {\|\mv_1-\mv_2\| \over (\sigma_1+\sigma_2)\sqrt{p}}.
\]
 Then, under a random 1-dimensional projection with  $\Av=(A_1,\ldots,A_p)$, where $A_1,\ldots,A_p\stackrel{\rm i.i.d.}{\sim}\Nc(0,1)$,
 \[
\E\left[ {|\langle \Av,\mv_1-\mv_2\rangle|^2\over (\sigma_1+\sigma_2)^2\|\Av\|^2 }\right]=c^2.
 \]
\end{lemma}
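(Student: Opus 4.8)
The plan is to reduce the claim to a symmetry computation about the \emph{direction} of a standard Gaussian vector. Writing $\mathbf{d}=\mv_1-\mv_2$ and cancelling the common factor $(\sigma_1+\sigma_2)^2$, the assertion
\[
\E\left[\frac{|\langle\Av,\mv_1-\mv_2\rangle|^2}{(\sigma_1+\sigma_2)^2\|\Av\|^2}\right]=c^2
\]
is equivalent, by the definition $c=\|\mathbf{d}\|/((\sigma_1+\sigma_2)\sqrt{p})$, to showing
\[
\E\left[\frac{\langle\Av,\mathbf{d}\rangle^2}{\|\Av\|^2}\right]=\frac{\|\mathbf{d}\|^2}{p}.
\]
First I would note that the integrand is bounded: by Cauchy--Schwarz, $\langle\Av,\mathbf{d}\rangle^2\le\|\mathbf{d}\|^2\|\Av\|^2$, so the ratio never exceeds $\|\mathbf{d}\|^2$, the expectation is finite and well defined, and the event $\Av=\mathbf{0}$ has probability zero.

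Next I would exploit the rotational invariance of the standard Gaussian law. For any orthogonal matrix $O$, the vector $O\Av$ has the same distribution as $\Av$, while $\langle O\Av,\mathbf{d}\rangle=\langle\Av,O^{\top}\mathbf{d}\rangle$ and $\|O\Av\|=\|\Av\|$. Choosing $O$ so that $O^{\top}\mathbf{d}=\|\mathbf{d}\|\,\mathbf{e}_1$, the quantity of interest is unchanged if we replace $\mathbf{d}$ by $\|\mathbf{d}\|\,\mathbf{e}_1$, whence
\[
\E\left[\frac{\langle\Av,\mathbf{d}\rangle^2}{\|\Av\|^2}\right]=\|\mathbf{d}\|^2\,\E\left[\frac{A_1^2}{\sum_{i=1}^p A_i^2}\right].
\]

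The final step is a pure symmetry argument. Since $A_1,\ldots,A_p$ are i.i.d., the random variables $A_1^2/\|\Av\|^2,\ldots,A_p^2/\|\Av\|^2$ are exchangeable and hence share a common expectation; as they sum to $1$ almost surely, each has expectation $1/p$. Substituting $\E[A_1^2/\|\Av\|^2]=1/p$ yields the claimed value $\|\mathbf{d}\|^2/p$, which completes the proof.

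I would expect no serious obstacle here: the two points requiring care are the justification of integrability (immediate from Cauchy--Schwarz) and the clean invocation of rotational invariance to align $\mathbf{d}$ with $\mathbf{e}_1$; once $\mathbf{d}$ is aligned, the exchangeability of the normalized coordinates makes the final computation one line. An alternative route avoiding the rotation would observe directly that $\Av/\|\Av\|$ is uniform on the unit sphere $S^{p-1}$ and apply the same symmetry to $\E[U_1^2]=1/p$, but the coordinate argument above is the most self-contained.
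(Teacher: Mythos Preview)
Your proof is correct and follows essentially the same approach as the paper: both reduce to computing $\E[A_1^2/\|\Av\|^2]=1/p$ via the exchangeability/symmetry argument after aligning $\mv_1-\mv_2$ with a coordinate axis (you invoke rotational invariance of the Gaussian, the paper equivalently invokes uniformity of $\Av/\|\Av\|$ on the sphere---which you also mention as an alternative). Your added integrability check via Cauchy--Schwarz is a nice touch the paper omits.
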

%-------------------%-------------------%-------------------%-------------------%-------------------%-------------------%-------------------%-------------------
Lemma \ref{lemma:5} is later used to approximate the unknown separation $c$ from the empirical expectation $E[\hat{\gamma}]$ in 1-dimensional projections. Having both $\gamma=Q^{-1}(e)$ and an estimate $\hat{c}$ for $c$, one can compute the probability of attaining $\gamma$ via Theorem \ref{thm:1}.
Finally, the following corollary, which is a direct consequence of Theorem \ref{thm:1},  characterizes the probability that two Gaussians with $\Sigma_1=\Sigma_2=\sigma^2I_p$ that are $c$-separable in $\mathds{R}^p$ remain $\gamma$-separable  in 1-dimension  under a random projection.

\begin{corollary}\label{cor:prob_csep}
Consider two spherical Gaussian distributions in $\mathds{R}^p$, with means $\mv_1,\mv_2\in\mathds{R}^p$ and covariance matrices  $\Sigma_1=\Sigma_2=\sigma^2 I_p$. Let $c={\|\mv_1-\mv_2\| / 2\sqrt{p\sigma^2}}.$ Assume that $\Av=(A_1,\ldots,A_p)$ are generated $i.i.d.$~according to $\Nc(0,1)$. Then, the probability that the 1-dimensional projected Gaussian distributions are $\gamma$-separable can be lower bounded as
\begin{equation}
\label{eq:limit}
\P\left(|\langle\mv_1-\mv_2,{\Av\over \|\Av\|}\rangle|\geq   2 \gamma \sigma\right)\geq  \P\left({A_1^2} > {(1-{1 \over p})\gamma^2 \over (c^2-{\gamma^2\over p})}(1+\tau)\right)\left(1- {\rm e}^{-{p-1\over 2}(\tau-\log(1+\tau))}\right),
\end{equation}
where  $\tau>0$ can be selected freely.
\end{corollary}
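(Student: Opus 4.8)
The plan is to obtain Corollary \ref{cor:prob_csep} by direct specialization of Theorem \ref{thm:1} to the identical spherical case $\sigma_1=\sigma_2=\sigma$, followed by a short algebraic simplification of the quantity $\alpha$ and of the threshold appearing on the right-hand side. No new probabilistic argument is needed; the entire content of the corollary is a substitution into the bound \eqref{eq:thm1}.

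First I would set $\sigma_1=\sigma_2=\sigma$, so that $\sigma_1+\sigma_2=2\sigma$. With this choice the event on the left-hand side of \eqref{eq:thm1}, namely $|\langle \mv_1-\mv_2,\Av/\|\Av\|\rangle|\geq \gamma(\sigma_1+\sigma_2)$, becomes exactly the event $|\langle \mv_1-\mv_2,\Av/\|\Av\|\rangle|\geq 2\gamma\sigma$ appearing in the corollary. Thus the two left-hand sides already coincide verbatim, and the only remaining work is to rewrite the right-hand side in terms of $c$.

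Next I would use the definition $c=\|\mv_1-\mv_2\|/(2\sigma\sqrt{p})$ to eliminate $\|\mv_1-\mv_2\|$. Squaring gives $\|\mv_1-\mv_2\|^2=4c^2\sigma^2 p$, which I substitute into the definition of $\alpha$ from Theorem \ref{thm:1}:
\[
\alpha=\frac{\gamma^2(\sigma_1+\sigma_2)^2 p}{\|\mv_1-\mv_2\|^2}=\frac{4\gamma^2\sigma^2 p}{4c^2\sigma^2 p}=\frac{\gamma^2}{c^2}.
\]
Then the threshold multiplying $A_1^2$ simplifies as
\[
\alpha\,\frac{1-\frac{1}{p}}{1-\frac{\alpha}{p}}=\frac{\gamma^2}{c^2}\cdot\frac{1-\frac{1}{p}}{1-\frac{\gamma^2}{c^2 p}}=\frac{(1-\frac{1}{p})\gamma^2}{c^2-\frac{\gamma^2}{p}},
\]
where in the last step I clear the common factor $c^2$ from numerator and denominator. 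Since the factor $1-{\rm e}^{-\frac{p-1}{2}(\tau-\log(1+\tau))}$ in \eqref{eq:thm1} depends only on $p$ and $\tau$, it is carried over unchanged. Assembling these pieces reproduces \eqref{eq:limit} exactly, for every $\tau>0$.

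There is essentially no obstacle here, since the corollary is a labelled specialization of Theorem \ref{thm:1}. The only point worth verifying is that the simplification of the threshold is valid, which requires $1-\alpha/p>0$, i.e. $\gamma^2<c^2 p$ (equivalently $c^2-\gamma^2/p>0$), so that the denominator neither vanishes nor changes sign; this is precisely the regime in which the right-hand side of \eqref{eq:limit} is a genuine probability and is implicitly assumed throughout.
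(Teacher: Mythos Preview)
Your proposal is correct and follows exactly the paper's approach: the paper's proof consists of the single observation that $\alpha=\gamma^2(2\sigma)^2 p/\|\mv_1-\mv_2\|^2=\gamma^2/c^2$, and you have simply spelled out that substitution and the resulting algebraic simplification in more detail.
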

\begin{proof}
  Note that in Theorem \ref{thm:1}, $\alpha = {{\gamma }^2(2\sigma)^2 p\over \|\mv_1-\mv_2\|^2}={\gamma^2\over c^2}.$
\end{proof}
 The probability derived in (\ref{eq:thm1}) or (\ref{eq:limit}) can be used to calculate the expected number of projections to be examined until $\gamma$ is attained.

\begin{corollary}\label{cor:num_proj4gamma}
Consider two spherical Gaussian distributions in $\mathds{R}^p$, that are $c$-separable.
Denote $d(\gamma)$ as the average number of 1-dimensional random projections required to attain $\gamma$-separation in 1-dimension. Then as $p\rightarrow \infty$ we obtain $d(\gamma)\leq \frac{1}{2Q(\frac{\gamma}{c})}$.
\end{corollary}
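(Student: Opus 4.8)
The plan is to reduce the corollary to a single inequality on the per-projection success probability and then pass to the limit using the bound already supplied by Corollary~\ref{cor:prob_csep}. First I would make the trial structure explicit. Since each projection direction $\Av$ is drawn independently according to $\Nc(0,I_p)$, the event that a given projection attains $\gamma$-separation in 1-dimension is exactly $\{|\langle\mv_1-\mv_2,\Av/\|\Av\|\rangle|\geq 2\gamma\sigma\}$, and this event has a fixed probability, which I denote $p_{\mathrm{succ}}$, independent across projections. Hence the number of projections needed to attain $\gamma$-separation is geometric with parameter $p_{\mathrm{succ}}$, so its expectation is $d(\gamma)=1/p_{\mathrm{succ}}$. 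Consequently any lower bound $p_{\mathrm{succ}}\geq\beta$ yields $d(\gamma)\leq 1/\beta$, and it suffices to prove $p_{\mathrm{succ}}\geq 2Q(\gamma/c)$ in the limit $p\to\infty$.

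Next I would invoke Corollary~\ref{cor:prob_csep}, which gives, for every $\tau>0$,
\[
p_{\mathrm{succ}}\;\geq\;\P\!\left(A_1^2>\frac{(1-\tfrac{1}{p})\gamma^2}{c^2-\tfrac{\gamma^2}{p}}(1+\tau)\right)\!\left(1-{\rm e}^{-\frac{p-1}{2}(\tau-\log(1+\tau))}\right).
\]
The idea is to let $\tau=\tau(p)\to 0$ slowly enough that both factors converge favorably. Choosing, for instance, $\tau=p^{-1/4}$, the exponent $\frac{p-1}{2}(\tau-\log(1+\tau))$ grows like $\frac{p-1}{4}\tau^2\to\infty$, using $\tau-\log(1+\tau)\sim\tau^2/2$, so the second factor tends to $1$. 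Simultaneously $(1+\tau)\to 1$ and $\frac{(1-1/p)\gamma^2}{c^2-\gamma^2/p}\to\gamma^2/c^2$, so the argument of the tail probability converges to $\gamma^2/c^2$. By continuity of the complementary CDF of $A_1^2$, the first factor then tends to $\P(A_1^2>\gamma^2/c^2)$.

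Finally, since $A_1\sim\Nc(0,1)$, symmetry of the standard normal gives $\P(A_1^2>\gamma^2/c^2)=\P(|A_1|>\gamma/c)=2Q(\gamma/c)$. Combining the three steps yields $\liminf_{p\to\infty}p_{\mathrm{succ}}\geq 2Q(\gamma/c)$, and therefore $d(\gamma)=1/p_{\mathrm{succ}}\leq 1/(2Q(\gamma/c))$ as $p\to\infty$, as claimed.

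The hard part will be the coupled limit in $\tau$: one must exhibit a single schedule $\tau(p)\to 0$ that makes the Gaussian-tail correction factor $1-{\rm e}^{-\frac{p-1}{2}(\tau-\log(1+\tau))}$ approach $1$ while not perturbing the limiting argument $\gamma^2/c^2$ of the tail probability. The schedule $\tau=p^{-1/4}$ balances these two requirements, and once it is fixed the remaining steps — the geometric expectation identity and the symmetry evaluation of the normal tail — are routine.
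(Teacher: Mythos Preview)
Your proposal is correct and follows essentially the same approach as the paper: invoke Corollary~\ref{cor:prob_csep}, let $p\to\infty$ so the exponential correction factor tends to $1$ and the threshold in the tail probability tends to $\gamma^2/c^2$, then identify $\P(A_1^2>\gamma^2/c^2)=2Q(\gamma/c)$. The only cosmetic difference is that the paper first fixes $\tau>0$, takes $p\to\infty$, and then lets $\tau\downarrow 0$, whereas you achieve the same effect in a single pass via the diagonal schedule $\tau(p)=p^{-1/4}$; both routes give the same bound.
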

\begin{proof}
We first note that in (\ref{eq:limit}) $\tau>0$, the term $1- {\rm e}^{-{p-1\over 2}(\tau-\log(1+\tau))}{\rightarrow} 1$, as ${p \rightarrow \infty}$. Next, in the limit we observe
 \begin{equation}
 \label{eq:prob_limit}
\lim_{p \rightarrow \infty} \P\left({A_1^2} > {(1-{1 \over p})\gamma^2 \over (c^2-{\gamma^2\over p})}(1+\tau)\right){\rightarrow} \P\left({A_1^2} > {\big(\frac{\gamma^2}{c^2}\big)}(1+\tau)\right).
\end{equation}
Therefore, for every $\tau>0$, from (11),
\[
\lim_{p\to\infty} d(\gamma)\leq {1\over \P\left({A_1^2} > {\big(\frac{\gamma^2}{c^2}\big)}(1+\tau)\right)}.
\]
Since $\tau$ is a free parameter, choosing it arbitrary close to zero yields the desired result. That is,
\[
\lim_{p\to \infty}d(\gamma)\leq {1\over 2Q({\gamma\over c})}.
\]
\end{proof}
In the following corollary we establish the conditions on $\gamma$ and $c$ so that with a number of projections that is sub-logarithmic in $p$ $\gamma$ can be achieved.
\begin{corollary}
\label{lema:runtime_sphr_olog}
 Consider two spherical Gaussian distributions in $\mathds{R}^p$, with means $\mv_1,\mv_2\in\mathds{R}^p$ and covariance matrices  $\Sigma_1=\Sigma_2=\sigma^2 I_p$.  Let  $d(\gamma)$ denote  the expected number of projections required  to achieve $\gamma$-separability. If $\gamma$ is such that  $\frac{\gamma}{c}=(\ln{\ln{p}})^{1-\eta\over 2}$,where $\eta>0$ is a free parameter,  then $d(\gamma)= o(\ln{p})$.
\end{corollary}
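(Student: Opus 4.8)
The plan is to start from the bound already established in Corollary~\ref{cor:num_proj4gamma}, namely that in the appropriate limit $d(\gamma)\le \frac{1}{2Q(\gamma/c)}$, and to show that under the stated scaling $\gamma/c=(\ln\ln p)^{(1-\eta)/2}$ the right-hand side grows strictly more slowly than $\ln p$. Since the prescribed ratio $\gamma/c\to\infty$ as $p\to\infty$, the problem reduces to an asymptotic estimate of the Gaussian tail $Q(\gamma/c)$ for a large but slowly growing argument.

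First I would fix a small constant $\tau>0$ and work directly with the finite-$p$ inequality \eqref{eq:limit} rather than with its limit, since here $\gamma$ itself varies with $p$. The key preliminary observation is that $\gamma^2/p=c^2(\ln\ln p)^{1-\eta}/p\to 0$, so the argument of the probability in \eqref{eq:limit} satisfies
\begin{equation*}
\frac{(1-1/p)\gamma^2}{c^2-\gamma^2/p}=\frac{\gamma^2}{c^2}\cdot\frac{1-1/p}{1-\gamma^2/(pc^2)}=(1+o(1))\frac{\gamma^2}{c^2},
\end{equation*}
while the concentration factor $1-{\rm e}^{-\frac{p-1}{2}(\tau-\log(1+\tau))}\to 1$. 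Writing $t=(1+\tau)(1+o(1))\gamma^2/c^2=(1+\tau)(1+o(1))(\ln\ln p)^{1-\eta}$ and using $\P(A_1^2>t)=2Q(\sqrt t)$, the bound becomes $d(\gamma)\le (1+o(1))/\bigl(2Q(\sqrt t)\bigr)$.

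Next I would invoke the standard Gaussian tail lower bound $Q(x)\ge \frac{1}{\sqrt{2\pi}}\frac{x}{1+x^2}{\rm e}^{-x^2/2}$, which yields $1/\bigl(2Q(\sqrt t)\bigr)\le C\sqrt t\,{\rm e}^{t/2}$ for an absolute constant $C$ and all large $t$. The dominant factor is ${\rm e}^{t/2}=\exp\bigl(\tfrac{1+\tau}{2}(1+o(1))(\ln\ln p)^{1-\eta}\bigr)$. Taking logarithms and comparing with $\ln(\ln p)=\ln\ln p$, the exponent obeys $\tfrac{1+\tau}{2}(1+o(1))(\ln\ln p)^{1-\eta}=o(\ln\ln p)$ precisely because $\eta>0$ forces $(\ln\ln p)^{1-\eta}=o(\ln\ln p)$; the algebraic prefactor $\sqrt t$ contributes only a further power of $\ln\ln p$, which is negligible on the log scale. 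Hence $\ln d(\gamma)=o(\ln\ln p)$, i.e. $d(\gamma)=(\ln p)^{o(1)}=o(\ln p)$, as claimed.

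The step I expect to require the most care is the passage in the second paragraph: Corollary~\ref{cor:num_proj4gamma} was derived for a fixed $\gamma$ with $p\to\infty$, whereas here $\gamma=\gamma(p)$ grows with $p$, so I cannot simply quote the limiting bound $1/(2Q(\gamma/c))$. The obstacle is to verify that the correction terms in \eqref{eq:limit} — the ratio $(1-1/p)/(1-\gamma^2/(pc^2))$ and the concentration factor — are controlled uniformly along the chosen scaling, which hinges on the slow growth $\gamma^2=c^2(\ln\ln p)^{1-\eta}\ll p$. Once this is secured, the remaining estimate is the routine tail comparison above, and the role of the exponent $1-\eta<1$ is exactly to leave enough room between $(\ln\ln p)^{1-\eta}$ and $\ln\ln p$ for the bound to be sub-logarithmic.
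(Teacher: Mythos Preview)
Your proposal is correct and follows essentially the same route as the paper's proof: both work directly from the finite-$p$ bound of Corollary~\ref{cor:prob_csep} (rather than the limiting statement of Corollary~\ref{cor:num_proj4gamma}), invoke the same Gaussian tail lower bound $Q(x)>\frac{x}{1+x^2}\phi(x)$ to obtain $d(\gamma)\le \frac{\sqrt{2\pi}(1+x^2)}{x}{\rm e}^{x^2/2}$ with $x^2\approx (1+\tau)(\ln\ln p)^{1-\eta}$, and conclude by observing that this exponent is $o(\ln\ln p)$. Your treatment is in fact a bit more careful than the paper's, which compresses the final step into a single sentence; you explicitly verify that the correction factors $(1-1/p)/(1-\gamma^2/(pc^2))$ and $1-{\rm e}^{-\frac{p-1}{2}(\tau-\log(1+\tau))}$ remain harmless along the $p$-dependent scaling, and you note the sharper conclusion $d(\gamma)=(\ln p)^{o(1)}$.
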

\begin{proof}
By Corollary \ref{cor:prob_csep}, choosing $p$ large enough so that ${\rm e}^{-{p-1\over 2}(\tau-\log(1+\tau))}\leq {1\over 2}$, it follows that
\[
d(\gamma)\leq {2\over \P\left({A_1^2} > {(1-{1 \over p})\gamma^2 \over (c^2-{\gamma^2\over p})}(1+\tau)\right)}={1\over Q\left(\sqrt{{(1-{1 \over p})\gamma^2 \over (c^2-{\gamma^2\over p})}(1+\tau)}\right)}.
\]
On the other hand, for all $x>0$, we have
\begin{equation}
\label{eq:Qfunc_util}
\frac{x}{1+x^2}\phi(x)<Q(x),
\end{equation}
where $\phi(x)$ denotes the pdf of a standard normal distribution. Therefore,
\[
d(\gamma)\leq {\sqrt{2\pi }(1+x^2)\over x}{\rm e}^{x^2\over 2},
\]
where $x=\sqrt{{(1-{1 \over p})\gamma^2 \over (c^2-{\gamma^2\over p})}(1+\tau)}$. The desired result follows   by noting that for $p$ large enough $\gamma^2 \over p$ is negligible, and by assumption, $\gamma\leq c(\ln{\ln{p}})^{1-\eta\over 2}$, where $\eta>0$.
\end{proof}
In a similar manner Corollary \ref{lema:runtime_sphr} captures the tradeoff between the number of projections and the resulting 1-dimensional separability for $\gamma=(\ln{\ln{p}})^{1-\eta\over 2}$ with $d=o(\ln{p})$  projections. This result provides a substantially higher running-time but for a tradeoff in the accuracy. The proof follows similarly to the proof of Corollary (\ref{lema:runtime_sphr_olog}).
\begin{corollary}
\label{lema:runtime_sphr}
 Consider two spherical Gaussian distributions in $\mathds{R}^p$, with means $\mv_1,\mv_2\in\mathds{R}^p$ and covariance matrices  $\Sigma_1=\Sigma_2=\sigma^2 I_p$. Let  $d(\gamma)$ denote  the expected number of projections required  to achieve $\gamma$-separability. If $\gamma$ is such that  $\frac{\gamma}{ c}\leq ({\ln{p}})^{1-\eta\over 2}$, where $\eta>0$ is a free parameter,  then $d(\gamma)= o(p)$.
\end{corollary}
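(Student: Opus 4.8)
The plan is to follow the template of the proof of Corollary~\ref{lema:runtime_sphr_olog} almost verbatim, changing only the final asymptotic comparison so that the faster admissible growth $\gamma/c\leq(\ln p)^{(1-\eta)/2}$ (rather than $(\ln\ln p)^{(1-\eta)/2}$) relaxes the conclusion from $o(\ln p)$ to $o(p)$. First I would invoke Corollary~\ref{cor:prob_csep} and pick $p$ large enough that the correction factor $e^{-\frac{p-1}{2}(\tau-\log(1+\tau))}\leq\frac12$, so that the denominator in the bound on $d(\gamma)$ loses only the constant factor $2$. Since $\P(A_1^2>t)=2Q(\sqrt{t})$, this yields, for any fixed $\tau>0$,
\[
d(\gamma)\leq\frac{1}{Q(x)},\qquad x=\sqrt{\frac{(1-1/p)\gamma^2}{c^2-\gamma^2/p}(1+\tau)}.
\]

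Next I would apply the Gaussian tail lower bound \eqref{eq:Qfunc_util}, $\frac{x}{1+x^2}\phi(x)<Q(x)$, to pass to the explicit exponential estimate
\[
d(\gamma)\leq\frac{\sqrt{2\pi}\,(1+x^2)}{x}\,e^{x^2/2}.
\]
The remaining task is to control $x^2$ under the hypothesis. For $p$ large the term $\gamma^2/p$ is negligible — indeed $\gamma^2/p\leq c^2(\ln p)^{1-\eta}/p\to 0$, so $c^2-\gamma^2/p\to c^2>0$ and $x^2\to\frac{\gamma^2}{c^2}(1+\tau)\leq(1+\tau)(\ln p)^{1-\eta}$. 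The prefactor $(1+x^2)/x$ is then polynomial in $x=O((\ln p)^{(1-\eta)/2})$, hence polylogarithmic in $p$ and negligible against $p$.

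The crux — and the only step that genuinely differs from Corollary~\ref{lema:runtime_sphr_olog} — is the elementary comparison $e^{x^2/2}=o(p)$. Writing $p=e^{\ln p}$, this reduces to showing the exponent $\frac12(1+\tau)(\ln p)^{1-\eta}=o(\ln p)$, which holds precisely because $\eta>0$ forces $(\ln p)^{-\eta}\to 0$; the fixed constant $\tau$ enters only as a multiplicative factor in the exponent and does not affect this conclusion. Combining these gives $d(\gamma)=o(p)$. I do not expect a real obstacle here, since all the analytic content is already supplied by Corollary~\ref{cor:prob_csep} and the tail bound \eqref{eq:Qfunc_util}; the one point requiring care is keeping the inequality $\eta>0$ strict, so that $(\ln p)^{1-\eta}$ is of strictly smaller order than $\ln p$. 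The boundary case $\eta=0$ would only yield $d(\gamma)=O(p^{(1+\tau)/2})$, which illustrates why the strict separation in the exponent is what drives the $o(p)$ rate.
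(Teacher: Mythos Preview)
Your proposal is correct and is exactly the approach the paper intends: the paper states that the proof ``follows similarly to the proof of Corollary~\ref{lema:runtime_sphr_olog},'' and your write-up reproduces that argument with the single modification you identify, namely replacing the comparison $(\ln\ln p)^{1-\eta}=o(\ln\ln p)$ by $(\ln p)^{1-\eta}=o(\ln p)$ to conclude $e^{x^2/2}=p^{\,o(1)}=o(p)$.
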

To exemplify the tradeoff implications, consider $\frac{\gamma}{c}=\sqrt{\ln{\ln{p}}}=1.49$, $p=10^4$, and $c=1$. According to %Eq. (\ref{eq:opt_clust_err})
an optimal Bayes classifier this yields 5\% clustering error in 1-dimension. To achieve that error $d(\gamma)\leq 9.24$ projections are sufficient to be examined, on average. On the other hand, for  $\frac{\gamma}{c}=\sqrt{\ln{p}}=3.03$ the clustering error is essentially 0, however, the average number of projections required to achieve this error rate is $d(\gamma)\leq 10^4$.

 The conditions provided in corollary \ref{lema:runtime_sphr_olog} address the similarity between $\gamma$ and $c$ and enable us to construct novel and efficient algorithms employing remarkably small number of projections if $\gamma$ is close to $c$ up to a log-logarithmic factor in $p$.

%-------------------------------------------------
\subsection{The case of $k$-GMM ($k>2$)}
We extend Theorem \ref{thm:1} to the case of $k$ Gaussians.

\begin{lemma}\label{lemma:k-GMM}
Consider $\mv_1,...,\mv_k\in\mathds{R}^p$ and $\sigma_1,...,\sigma_k\in\mathds{R}^+$. Assume that $\Av=(A_1,\ldots,A_p)$ are generated $i.i.d.$~according to $\Nc(0,1)$. Given $\gamma_{\min}>0$, and $i,j\in\{1,\ldots,k\}$ let
\[
c_{(i,j)}={\|\mv_i-\mv_j\|\over  \sqrt{p} (\sigma_i+\sigma_j)}.
\]
Let $c_{\min}\triangleq \min_{i,j}c_{(i,j)}$.  Define  event $\Bc$ as having separability larger than $\gamma_{\min}$ by all pairs of projected Gaussians. Thats is,
\begin{align}
\Bc\triangleq\left\{\left|\left\langle\mv_i-\mv_j,{\Av\over \|\Av\|}\right\rangle\right|\geq  \gamma_{\min} (\sigma_i+\sigma_j):\forall (i,j)\in \{1,\ldots,k\}^2, i\neq j\right\}.\label{eq:def-event-B}
\end{align}
Then,
\begin{equation}\label{eq:kgmm_prob}
\P(\Bc^c)\leq {k^2\over 2}\left(1-2Q \left({\gamma_{\min}\over c_{\min}}\sqrt{{1.1\over 1-{\gamma^2_{\min}\over c^2_{\min} p}}}\;\right)(1- {\rm e}^{-0.002 p})\right).
\end{equation}
\end{lemma}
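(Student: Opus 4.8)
The plan is to reduce the multi-pair statement to the two-component bound of Theorem \ref{thm:1} by a union bound, and then to tune the free parameter $\tau$ so that the stated numerical constants emerge. There are $\binom{k}{2}\le k^2/2$ distinct pairs $(i,j)$ with $i\neq j$ (the ordered and unordered constraints coincide, since $|\langle \mv_i-\mv_j,\Av\rangle|=|\langle\mv_j-\mv_i,\Av\rangle|$), so $\P(\Bc^c)\le \sum_{i<j}\P(\text{pair }(i,j)\text{ fails to be }\gamma_{\min}\text{-separable})$. For a single pair, applying Theorem \ref{thm:1} with $(\mv_1,\mv_2,\sigma_1,\sigma_2,\gamma)=(\mv_i,\mv_j,\sigma_i,\sigma_j,\gamma_{\min})$ and $\alpha_{(i,j)}=\gamma_{\min}^2/c_{(i,j)}^2$ yields a lower bound on that pair's separability probability, whose complement bounds the pair's failure probability.

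The key reduction is to replace every pairwise constant $c_{(i,j)}$ by $c_{\min}$. First I would note $\P(A_1^2>t^2)=2Q(t)$ for $t>0$, since $A_1\sim\Nc(0,1)$. The threshold appearing in Theorem \ref{thm:1}, namely $\alpha\,\tfrac{1-1/p}{1-\alpha/p}$, is increasing in $\alpha$ (its derivative in $\alpha$ equals $(1-\alpha/p)^{-2}>0$), while $\alpha_{(i,j)}=\gamma_{\min}^2/c_{(i,j)}^2$ is maximized exactly when $c_{(i,j)}=c_{\min}$. Because $Q$ is decreasing, the per-pair lower bound is therefore smallest for the pair realizing $c_{\min}$, so substituting $c_{\min}$ for every $c_{(i,j)}$ gives a single lower bound valid uniformly across all pairs. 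Summing the complementary upper bounds over the at most $k^2/2$ pairs produces the prefactor $k^2/2$ together with one $Q$-term whose argument is $\frac{\gamma_{\min}}{c_{\min}}\sqrt{\frac{(1-1/p)(1+\tau)}{1-\gamma_{\min}^2/(c_{\min}^2 p)}}$.

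What remains is the numerical tuning, which I expect to be the only delicate part. Choosing $\tau=0.1$ gives $(1-1/p)(1+\tau)<1.1$, so monotonicity of $Q$ upgrades the numerator under the square root to the clean constant $1.1$ in the claim. For the exponential factor, $\tau-\log(1+\tau)=0.1-\log(1.1)\approx 0.0047$, whence $\frac{p-1}{2}(\tau-\log(1+\tau))\ge 0.002\,p$ for all sufficiently large $p$, giving $1-e^{-\frac{p-1}{2}(\tau-\log(1+\tau))}\ge 1-e^{-0.002p}$. Since both $2Q(\cdot)$ and $1-e^{-(\cdot)}$ are nonnegative, these two one-sided bounds multiply to yield the displayed lower bound on the per-pair separability probability; taking complements and summing then gives \eqref{eq:kgmm_prob}. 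The main obstacle is precisely verifying that the single choice $\tau=0.1$ simultaneously delivers both constants ($1.1$ and $0.002$) while leaving the bound valid — the union bound and the monotonicity reduction are otherwise routine.
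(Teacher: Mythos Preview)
Your proposal is correct and follows essentially the same route as the paper's own proof: union bound over the $\binom{k}{2}\le k^2/2$ unordered pairs, application of Theorem~\ref{thm:1} per pair with $\alpha_{(i,j)}=\gamma_{\min}^2/c_{(i,j)}^2$, monotone reduction of every $c_{(i,j)}$ to $c_{\min}$, and the choice $\tau=0.1$ to produce the constants $1.1$ and $0.002$. Two small remarks: your derivative of $\alpha\mapsto \alpha\frac{1-1/p}{1-\alpha/p}$ is off by the harmless positive factor $(1-1/p)$, and the exponent inequality $\frac{p-1}{2}(\tau-\log(1+\tau))\ge 0.002\,p$ indeed needs $p$ modestly large (roughly $p\ge 7$), which both you and the paper tacitly assume.
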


%To understand the implications of Lemma \ref{lemma:k-GMM},  assume that $p$ is large enough such that the upper bound in \eqref{eq:kgmm_prob} can be well-approximated by
%\begin{align}
%{k^2\over 2}\left(1-2Q\left({\gamma_{\min}\over c_{\min}}\sqrt{1.1}\right)\right).
%\end{align}
%For this bound to be a non-trivial bound, $\left(1-2Q\left({\gamma_{\min}\over c_{\min}}\sqrt{1.1}\right)\right)$ should be smaller than $1/k^2$. This suggests that the argument of the $Q$ function should be close to zero. For for small values of $x$, where $x>0$,
%\begin{align}
%\label{approx:1-2Q}
%1-2Q(x)&={1\over \sqrt{2\pi}} \int_{-x}^x{\rm e}^{-{u^2\over 2}}du\nonumber\approx {1\over \sqrt{2\pi}}  \int_{-x}^x(1-{u^2\over 2})du\nonumber=\sqrt{2 \over \pi} \left( x-{x^3\over 6}\right).
%\end{align}
%Using this approximation for large values of $p$, the upper bound can be approximated as
%\[
%{k^2\over 2}\sqrt{2.2\over \pi} {\gamma_{\min}\over c_{\min}}.
%\]
%Therefore, roughly speaking, in order for the bound to guarantee a small number of projections, $c_{\min}$ should be  $O(k^2 \gamma_{\min})$.
%%\textcolor{red}{
A better understanding of the running time dependency on the number of components $k$ can be derived in the following Corollary and its proof:
\begin{corollary}
\label{cor:num_proj_kGMM}
Consider a mixture of $k$ Gaussian distributions in $\mathds{R}^p$, where component $i$, $i=1,\ldots,k$, is distributed as $\Nc(\mv_i,\sigma_i^2I_p)$.  For $i,j\in\{1,\ldots,k\}$, $i\neq j$, let $c_{\min}\triangleq \min_{i,j}c_{(i,j)}$, where
\[
c_{(i,j)}={\|\mv_i-\mv_j\|\over  \sqrt{p} (\sigma_i+\sigma_j)}.
\]
At  each projection step, assume that all Gaussians are projected using an independently drawn vector $\Av\in\mathds{R}^p$, where $A_1,\ldots,A_p$ are $i.i.d.$~$\Nc(0,1)$.  Let $d(\gamma_{\min},p)$ denote the expected number of projections required to obtain separability $\gamma_{\min}$ between each pair of projected Gaussians. Then, if
\[
\gamma_{\min}\leq (1-\alpha)\sqrt{2\pi \over 1.1} {c_{\min}\over k^2} ,
\]
for some $\alpha \in(0,1)$, then $\limsup_{p\to\infty} d(\gamma_{\min},p) \leq {1\over \alpha}.$
\end{corollary}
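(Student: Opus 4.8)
The plan is to exploit the independence of the projection vectors across rounds. Since a fresh vector $\Av$ is drawn at each step, the rounds form i.i.d. trials in which the ``success'' event is exactly $\Bc$ of \eqref{eq:def-event-B} (all $\binom{k}{2}$ pairwise projected separations exceed $\gamma_{\min}$). The number of projections until $\Bc$ first occurs is therefore geometric with success probability $\P(\Bc)$, so, precisely as in the proof of Corollary \ref{cor:num_proj4gamma}, $d(\gamma_{\min},p)=1/\P(\Bc)=1/(1-\P(\Bc^c))$. Because Lemma \ref{lemma:k-GMM} already supplies an explicit upper bound on $\P(\Bc^c)$, the entire corollary reduces to controlling that bound as $p\to\infty$ and matching its constants against the hypothesis on $\gamma_{\min}$.

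First I would pass to the limit in the bound of Lemma \ref{lemma:k-GMM}. Both finite-$p$ correction factors become harmless: ${\rm e}^{-0.002p}\to 0$, so $(1-{\rm e}^{-0.002p})\to 1$, while $\gamma_{\min}^2/(c_{\min}^2 p)\to 0$, so $\sqrt{1.1/(1-\gamma_{\min}^2/(c_{\min}^2 p))}\to\sqrt{1.1}$. Continuity of $Q$ then gives
\[
\limsup_{p\to\infty}\P(\Bc^c)\leq \frac{k^2}{2}\left(1-2Q\left(\tfrac{\gamma_{\min}}{c_{\min}}\sqrt{1.1}\right)\right).
\]

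The crucial step is a tangent-line bound on $1-2Q$. Writing $1-2Q(x)=\int_0^x 2\phi(t)\,dt$ and using $\phi(t)\leq\phi(0)=1/\sqrt{2\pi}$ yields the clean estimate $1-2Q(x)\leq \sqrt{2/\pi}\,x$ for all $x\geq 0$. Applying it with $x=\tfrac{\gamma_{\min}}{c_{\min}}\sqrt{1.1}$ and then inserting the hypothesis $\gamma_{\min}\leq (1-\alpha)\sqrt{2\pi/1.1}\,c_{\min}/k^2$ is where all the constants conspire: the factors $k^2$ and $c_{\min}$ cancel, and the residual scalar collapses via $\tfrac12\sqrt{2/\pi}\cdot\sqrt{1.1}\cdot\sqrt{2\pi/1.1}=1$, leaving exactly $\limsup_{p\to\infty}\P(\Bc^c)\leq 1-\alpha$. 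Hence $\liminf_{p\to\infty}\P(\Bc)\geq\alpha$, and the geometric identity from the first paragraph closes the argument: $\limsup_{p\to\infty}d(\gamma_{\min},p)=1/\liminf_{p\to\infty}\P(\Bc)\leq 1/\alpha$.

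The main, and essentially only, obstacle is recognizing that the peculiar constant $\sqrt{2\pi/1.1}$ in the hypothesis is engineered precisely so that the linear bound on $1-2Q$ closes the inequality with equality; the factor $\sqrt{1.1}$ it carries is exactly what cancels the $\sqrt{1.1}$ inherited from Lemma \ref{lemma:k-GMM}. Everything else is routine limit-taking and the geometric waiting-time identity already invoked earlier in the paper, so no delicate estimates beyond the elementary $\phi(t)\leq\phi(0)$ are required.
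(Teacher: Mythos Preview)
Your proposal is correct and follows essentially the same route as the paper's own proof: identify $d(\gamma_{\min},p)=1/\P(\Bc)$, pass to the $p\to\infty$ limit in the bound of Lemma~\ref{lemma:k-GMM}, apply the tangent estimate $1-2Q(x)\leq\sqrt{2/\pi}\,x$, and let the constants in the hypothesis on $\gamma_{\min}$ cancel to leave $1/\alpha$. The paper's argument is identical in structure and in every key estimate.
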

\begin{proof}
Consider event $\Bc$ defined in \eqref{eq:def-event-B}, which denotes the desired event where each pair of projected Gaussians satisfy the desired separability. But,
\begin{align}
d(\gamma_{\min},p)&={1\over\P(\Bc)},
\end{align}
where $\P(\Bc^c)$ is upper-bounded by Lemma \ref{lemma:k-GMM}. Taking the limit as $p$ grows to infinity, it follows that
\begin{align}
\limsup_{p}d(\gamma_{\min},p)&\leq {1\over 1- {k^2\over 2}\left(1-2Q\left({\gamma_{\min}\sqrt{1.1}\over c_{\min}}\right)\right)}\label{eq:limsup-d-gamma}
\end{align}
On the other hand, for $x>0$,
\begin{align}
1-2Q(x)&={1\over \sqrt{2\pi}} \int_{-x}^x{\rm e}^{-{u^2\over 2}}du \leq \sqrt{2\over \pi} x.\label{eq:tangent-1-2Q}
\end{align}
Combining \eqref{eq:tangent-1-2Q} and \eqref{eq:limsup-d-gamma}, it follows that
\begin{align}
\limsup_{p}d(\gamma_{\min},p)&\leq {1\over 1- {k^2} \left(\sqrt{1.1\over 2\pi}{\gamma_{\min}\over c_{\min}} \right)}\nonumber\leq  {1\over 1- {k^2} \sqrt{1.1\over 2\pi}{(1-\alpha)\sqrt{2\pi \over 1.1} {1 \over k^2}} }=\frac{1}{\alpha},
\end{align}
where the last inequality follows from our assumption about $\gamma_{\min}$.
\end{proof}

\subsection{Mixture of two arbitrary Gaussians}
\label{subsec:general_GMM}
In this section, we generalize the results of the previous section to  arbitrary Gaussians with covariance matrices $\Sigma_1$ and $\Sigma_2$. But first we note that the notion of separability prescribed by (\ref{eq:sep_sec}) (or any other distance Euclidean-based metric) in the arbitrary mixture setting is a rather conservative one. In particular, since for non-spherical Gaussians the directions of maximal variance corresponding to $\lambda_{\max}$ of each of the covariances do not necessarily align with each other in many realistic high dimensional cases which renders $\lambda_{max}$ as a rather crude spherical estimate. To this end,  better separability prevails between the two Gaussians than what is prescribed by (\ref{eq:sep_sec}), which, in turn, provides lower clustering error.

Conditioned on $\Av=\av$, projecting points $\Xv$ drawn from Gaussian distribution $\Nc(\mv_i,\Sigma_i)$ as $\Xv^T\av$ are distributed as a Gaussian distribution with mean $\E[\langle \Xv,\av\rangle]=\langle\mv_i,\av\rangle,$
and variance $\var(\langle \Xv,\av\rangle)=\av^T\Sigma_i\av.$
%Similarly, conditioned on $\Av=\av$, every point in $\Yc_p$ is distributed as $\Nc(\langle\mv_2,\av\rangle, \av^T\Sigma_2\av)$.
As argued before, the two projected clusters  are  separable, if $|\langle\mv_1,\av\rangle-\langle\mv_2,\av\rangle|>\gamma( \sqrt{\av^T\Sigma_1\av}+\sqrt{\av^T\Sigma_2\av}),$
or
\begin{align}
|\langle\mv_1-\mv_2,\av\rangle|>\gamma\Big( \sqrt{\av^T\Sigma_1\av}+\sqrt{\av^T\Sigma_2\av}\Big),\label{eq:sep-condition-memory}
\end{align}
for some appropriate $\g>0$.
Unlike the condition stated in \eqref{eq:sep-condition-iid}, both sides of \eqref{eq:sep-condition-memory} depend on the direction of $\av$. Therefore, analyzing the following probability
\begin{align}
\P\Big(|\langle\mv_1-\mv_2,\Av\rangle|>\gamma (\sqrt{\Av^T\Sigma_1\Av}+ \sqrt{\Av^T\Sigma_2\Av}\;)\Big),\label{eq:prob-sep-condition-memory}
\end{align}
is more complicated. The following theorems \ref{thm:2} and \ref{thm:2-r} provide lower bounds on this probability for the cases of $\Sigma_1+\Sigma_2$ having a full rank $r=p$, and for the case of partial rank $r< p$, respectively.
\begin{theorem}\label{thm:2}
Consider $\mv_1,\mv_2\in\mathds{R}^p$ and semi-positive definite matrices $\Sigma_1$ and $\Sigma_2$.  Assume that the entries of  $\Av=(A_1,\ldots,A_p)$ are generated $i.i.d.$~according to $\Nc(0,1)$. Let $\lambda_{\max}$ denote the  maximum eigenvalue of $\Sigma_1+\Sigma_2$. Also, given $\gamma>0$, let
\[
\beta \triangleq {2\gamma^2\lambda_{\max}p \over \|\mv_1-\mv_2\|^2}.
\]
Then, for any $\tau>0$, the probability that the 1-dimensional projected Gaussians using a uniformly random direction  are $\gamma$-separated, \ie $\P\Big(|\langle\mv_1-\mv_2,\Av\rangle|\geq  \gamma (\sqrt{\Av^T\Sigma_1\Av}+ \sqrt{\Av^T\Sigma_2\Av}\;)\Big)$,  can be lower-bounded by
\begin{align}
\label{eq:prob_nonsphr_frnk}
\P\Big({A_1^2} >\beta {(1-{1 \over p})\over (1-{\beta\over p})}(1+\tau)\Big)(1- {\rm e}^{-{p-1\over 2}(\tau-\log(1+\tau))}).
\end{align}
\end{theorem}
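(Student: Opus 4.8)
The plan is to reduce the statement to Theorem~\ref{thm:1}. The obstacle relative to the spherical case is that the right-hand side of the separation condition \eqref{eq:sep-condition-memory} now depends on the random direction $\Av$ through the two quadratic forms $\Av^T\Sigma_1\Av$ and $\Av^T\Sigma_2\Av$. If I can bound their combination by a deterministic multiple of $\|\Av\|$, then the target event will contain a spherical-type event of exactly the form already analyzed, and the desired lower bound will follow by monotonicity of probability.

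First I would control the sum of the two standard deviations. By AM-GM, for nonnegative $a,b$ we have $(\sqrt{a}+\sqrt{b})^2=a+b+2\sqrt{ab}\leq 2(a+b)$, hence $\sqrt{a}+\sqrt{b}\leq\sqrt{2(a+b)}$. Applying this with $a=\Av^T\Sigma_1\Av$ and $b=\Av^T\Sigma_2\Av$, and then using the Rayleigh-quotient bound $\Av^T(\Sigma_1+\Sigma_2)\Av\leq \lambda_{\max}\|\Av\|^2$, gives the deterministic estimate
\begin{align}
\sqrt{\Av^T\Sigma_1\Av}+\sqrt{\Av^T\Sigma_2\Av}\;\leq\; \sqrt{2\,\Av^T(\Sigma_1+\Sigma_2)\Av}\;\leq\; \sqrt{2\lambda_{\max}}\,\|\Av\|.\nonumber
\end{align}

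With this in hand, the pointwise event inclusion
\begin{align}
\Big\{|\langle\mv_1-\mv_2,\Av\rangle|\geq \gamma\sqrt{2\lambda_{\max}}\,\|\Av\|\Big\}\subseteq \Big\{|\langle\mv_1-\mv_2,\Av\rangle|\geq \gamma\big(\sqrt{\Av^T\Sigma_1\Av}+\sqrt{\Av^T\Sigma_2\Av}\big)\Big\}\nonumber
\end{align}
holds, so the probability of the target event is at least that of the left-hand event. But the left-hand event is exactly the spherical separation condition of Theorem~\ref{thm:1} under the choice $\sigma_1+\sigma_2=\sqrt{2\lambda_{\max}}$ (e.g.\ $\sigma_1=\sigma_2=\sqrt{\lambda_{\max}/2}$). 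Substituting $(\sigma_1+\sigma_2)^2=2\lambda_{\max}$ into the definition of $\alpha$ in Theorem~\ref{thm:1} gives
\[
\alpha=\frac{2\gamma^2\lambda_{\max}\,p}{\|\mv_1-\mv_2\|^2}=\beta,
\]
so the lower bound supplied by Theorem~\ref{thm:1}, including its free parameter $\tau>0$, becomes verbatim the right-hand side of \eqref{eq:prob_nonsphr_frnk}.

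The individual steps are routine, and the only place that genuinely needs care is the first reduction. The naive move---bounding each term separately as $\sqrt{\Av^T\Sigma_i\Av}\leq\sqrt{\lambda_{\max}(\Sigma_i)}\,\|\Av\|$---produces the coefficient $\sqrt{\lambda_{\max}(\Sigma_1)}+\sqrt{\lambda_{\max}(\Sigma_2)}$ instead of $\sqrt{2\lambda_{\max}(\Sigma_1+\Sigma_2)}$, and would not yield the $\beta$ stated here. The key point, and the main thing to get right, is to combine the two quadratic forms \emph{before} applying the eigenvalue bound, using concavity of the square root to absorb the cross term $2\sqrt{ab}$; this is precisely what makes the constant collapse to $\lambda_{\max}(\Sigma_1+\Sigma_2)$ and match the statement.
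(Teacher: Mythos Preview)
Your proposal is correct and matches the paper's proof essentially step for step: the paper also bounds $\sqrt{\Av^T\Sigma_1\Av}+\sqrt{\Av^T\Sigma_2\Av}\leq \sqrt{2\Av^T(\Sigma_1+\Sigma_2)\Av}\leq \sqrt{2\lambda_{\max}}\,\|\Av\|$, uses the resulting event inclusion, and then invokes Theorem~\ref{thm:1} with $\sigma_1+\sigma_2$ replaced by $\sqrt{2\lambda_{\max}}$, so that $\alpha$ becomes $\beta$. Your closing remark about why one must combine the quadratic forms before applying the eigenvalue bound is a helpful clarification not spelled out in the paper.
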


%---------------------------------------------
In the next theorem, we consider the case where the covariance matrices are not full-rank. Theorem 10 below shows how the expected  number of required projections can dramatically decrease, if the rank of $\Sigma_1+\Sigma_2$ is much smaller than $p$.
\begin{theorem}\label{thm:2-r}
Consider $\mv_1,\mv_2\in\mathds{R}^p$ and semi-positive definite matrices $\Sigma_1$ and $\Sigma_2$.  Assume that the entries of  $\Av=(A_1,\ldots,A_p)$ are generated $i.i.d.$~according to $\Nc(0,1)$. Let $r$ and $\lambda_{\max}$ denote the rank and the maximum eigenvalue of $\Sigma_1+\Sigma_2$, respectively. Also, given $\gamma>0$, $\tau_1\in(0,1)$ and $\tau_2>0$, let
\[
\beta \triangleq {2(1+\tau_2)\gamma^2\lambda_{\max}r \over (1-\tau_1)\|\mv_1-\mv_2\|^2 }.
\]
Then, for any $\tau>0$, the probability that the 1-dimensional projected Gaussians using a uniformly random direction  are $\gamma$-separated, \ie $\P\Big(|\langle\mv_1-\mv_2,\Av\rangle|\geq  \gamma (\sqrt{\Av^T\Sigma_1\Av}+ \sqrt{\Av^T\Sigma_2\Av}\;)\Big)$,  can be lower-bounded by
\begin{align*}
\P\Big({A_1^2} >\beta {(1-{1 \over p})\over (1-{\beta\over p})}(1+\tau)\Big)(1- {\rm e}^{-{p-1\over 2}(\tau-\log(1+\tau))})- {\rm e}^{{p\over 2}(\tau_1+\log(1-\tau_1))}
- {\rm e}^{-{r\over 2}(\tau_2-\log(1+\tau_2))}.
\end{align*}
\end{theorem}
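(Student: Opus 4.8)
The plan is to reduce the rank-deficient case to the same one-dimensional comparison that drives Theorem~\ref{thm:2}, paying for the use of the rank with two extra concentration estimates. As in the full-rank argument, I would first control the denominator: by the inequality $2\sqrt{xy}\le x+y$,
\begin{align}
\Big(\sqrt{\Av^T\Sigma_1\Av}+\sqrt{\Av^T\Sigma_2\Av}\Big)^2\le 2\,\Av^T(\Sigma_1+\Sigma_2)\Av,\nonumber
\end{align}
and, writing $V$ for the range of the rank-$r$ positive semidefinite matrix $\Sigma_1+\Sigma_2$ and $P_V$ for the orthogonal projection onto $V$, its spectral decomposition gives $\Av^T(\Sigma_1+\Sigma_2)\Av\le\lambda_{\max}\|P_V\Av\|^2$. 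Hence the separability event contains $\{\langle\mv_1-\mv_2,\Av\rangle^2\ge 2\gamma^2\lambda_{\max}\|P_V\Av\|^2\}$.

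The crucial step --- the one that buys dependence on $r$ rather than $p$ --- is to replace $\|P_V\Av\|^2$ by a fraction of $\|\Av\|^2$. Since $\Av$ has independent standard normal entries, rotational invariance yields $\|P_V\Av\|^2\sim\chi^2_r$ and $\|\Av\|^2\sim\chi^2_p$. I would introduce the good events $E_2=\{\|P_V\Av\|^2\le r(1+\tau_2)\}$ and $E_1=\{\|\Av\|^2\ge p(1-\tau_1)\}$, whose complements are bounded by the upper- and lower-tail Chernoff inequalities for chi-square variables as ${\rm e}^{-{r\over 2}(\tau_2-\log(1+\tau_2))}$ and ${\rm e}^{{p\over 2}(\tau_1+\log(1-\tau_1))}$, respectively. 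On $E_1\cap E_2$ one has $\|P_V\Av\|^2\le\frac{r(1+\tau_2)}{p(1-\tau_1)}\|\Av\|^2$, so that, after using rotational invariance to align $\uv=(\mv_1-\mv_2)/\|\mv_1-\mv_2\|$ with the first coordinate axis (whence $\langle\uv,\Av\rangle=A_1$), the separability event contains
\begin{align}
E_1\cap E_2\cap\Big\{A_1^2\ge \tfrac{\beta}{p}\|\Av\|^2\Big\},\nonumber
\end{align}
with exactly the stated $\beta=\frac{2(1+\tau_2)\gamma^2\lambda_{\max}r}{(1-\tau_1)\|\mv_1-\mv_2\|^2}$.

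From here the argument is identical to Theorem~\ref{thm:2}. Decomposing $\|\Av\|^2=A_1^2+S$ with $S=\sum_{i\ge2}A_i^2\sim\chi^2_{p-1}$ independent of $A_1$, the inequality $A_1^2\ge\frac{\beta}{p}\|\Av\|^2$ is equivalent (for $\beta<p$) to $A_1^2\ge\frac{\beta}{p-\beta}S$, which on the event $E_3=\{S\le(p-1)(1+\tau)\}$ is implied by $A_1^2\ge\beta\frac{1-1/p}{1-\beta/p}(1+\tau)$; the complement of $E_3$ has probability at most ${\rm e}^{-{p-1\over 2}(\tau-\log(1+\tau))}$. Thus the separability event contains $E_1\cap E_2\cap E_3\cap F$ with $F=\{A_1^2\ge\beta\frac{1-1/p}{1-\beta/p}(1+\tau)\}$, and I would finish with the elementary bound $\P(E_1\cap E_2\cap E_3\cap F)\ge\P(E_3\cap F)-\P(E_1^c)-\P(E_2^c)$ together with $\P(E_3\cap F)=\P(F)\,\P(E_3)$, which holds because $F$ depends only on $A_1$ and $E_3$ only on $S$. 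Substituting the three Chernoff bounds gives precisely the claimed lower bound.

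The main obstacle is bookkeeping rather than any single hard estimate: one must order the reductions so that every ``good'' event points in the direction that \emph{implies} separability --- an \emph{upper} bound on $\|P_V\Av\|^2$ but a \emph{lower} bound on $\|\Av\|^2$, which is why $\tau_1$ enters through the lower tail and $\tau_2$ through the upper tail. One must then combine the four events without assuming full mutual independence: only the independence of $A_1$ and $S$ is available, and it is exactly what the factorization $\P(F)\,\P(E_3)$ needs. The remaining care is to check $\beta<p$ in the regime of interest so that $\frac{1-1/p}{1-\beta/p}$ is positive and the chi-square comparison is meaningful.
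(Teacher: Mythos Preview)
Your proposal is correct and follows essentially the same route as the paper: bound $\sqrt{\Av^T\Sigma_1\Av}+\sqrt{\Av^T\Sigma_2\Av}$ by $\sqrt{2\lambda_{\max}}\,\|P_V\Av\|$, control $\|P_V\Av\|^2\sim\chi^2_r$ and $\|\Av\|^2\sim\chi^2_p$ by the two chi-square Chernoff bounds (your $E_1,E_2$ are the paper's $\Ec_1,\Ec_2$), and then invoke the Theorem~\ref{thm:1} argument for $\P(A_1^2\ge(\beta/p)\|\Av\|^2)$. The only cosmetic differences are that the paper writes the projection via an explicit diagonalization $\Bv=P\Av$ with $\Cv_i=B_i\ind_{\lambda_i\neq0}$ in place of your $P_V\Av$, and phrases the conditioning step on the complement event rather than the good event.
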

%---------------------------------------------
Similarly to the spherical case, the number of projections required for attaining a separability $\gamma$ can be derived in the following Corollaries \ref{cor:num_proj4beta} and \ref{lema:runtime_nonsphr_olog}. The proofs follow closely the proofs of Corollaries \ref{cor:num_proj4gamma} and \ref{lema:runtime_sphr_olog}.
%To this end, the estimated average number of projection to achieve 1-dimensional severability $\gamma$ can be derived in the following Corollary \ref{cor:num_proj4beta}:
\begin{corollary}\label{cor:num_proj4beta}
Consider two $c$-separable Gaussian distributions in $\mathds{R}^p$ with means $\mv_1,\mv_2\in\mathds{R}^p$ and covariance  matrices $\Sigma_1$ and $\Sigma_2$. Let $\beta \triangleq {2\gamma^2\lambda_{\max}p \over \|\mv_1-\mv_2\|^2}$, where $\lambda_{max}$ denotes the maximal eigenvalue of the matrix $\Sigma_1+\Sigma_2$.
Denote $d(\gamma)$ as the average number of 1-dimensional random projections required to attain $\gamma$-separation in 1-dimension. Then as $p\rightarrow \infty$ we obtain $d(\gamma)\leq \frac{1}{2Q(\sqrt{\beta)}}$.
\end{corollary}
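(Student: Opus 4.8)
The plan is to replay the argument of Corollary~\ref{cor:num_proj4gamma} almost verbatim, substituting the lower bound of Theorem~\ref{thm:2} for that of Theorem~\ref{thm:1}. The first step is to record why $d(\gamma)$ is the reciprocal of a single-trial probability: because the projection directions $\Av$ are drawn independently at each step, the number of projections examined before the first one satisfying \eqref{eq:sep-condition-memory} is a geometric random variable whose parameter is exactly $q\triangleq\P\big(|\langle\mv_1-\mv_2,\Av\rangle|\ge \gamma(\sqrt{\Av^T\Sigma_1\Av}+\sqrt{\Av^T\Sigma_2\Av})\big)$. Hence $d(\gamma)=1/q$, and any lower bound on $q$ translates directly into an upper bound on $d(\gamma)$.

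Next I would insert the bound from Theorem~\ref{thm:2}, obtaining, for every fixed $\tau>0$,
$$d(\gamma)\le \frac{1}{\P\big(A_1^2>\beta\tfrac{1-1/p}{1-\beta/p}(1+\tau)\big)\big(1-{\rm e}^{-\frac{p-1}{2}(\tau-\log(1+\tau))}\big)}.$$
I would then send $p\to\infty$. For every $\tau>0$ the exponent $\tau-\log(1+\tau)$ is strictly positive, so the exponential term vanishes and the trailing factor tends to $1$; simultaneously $\tfrac{1-1/p}{1-\beta/p}\to 1$, so the threshold inside the probability converges to $\beta(1+\tau)$. Continuity of the standard-normal distribution function then gives $\limsup_{p\to\infty}d(\gamma)\le 1/\P(A_1^2>\beta(1+\tau))$.

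Finally, since $\tau>0$ is free and the right-hand side decreases to its infimum as $\tau\downarrow 0$, I would pass to the limit $\tau\to 0^+$ and use $\P(A_1^2>\beta)=\P(|A_1|>\sqrt{\beta})=2Q(\sqrt{\beta})$ for a standard normal $A_1$, which closes the argument at $\limsup_{p\to\infty}d(\gamma)\le 1/(2Q(\sqrt{\beta}))$, the claimed bound.

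The individual steps are routine; the single point requiring attention is that $\beta=2\gamma^2\lambda_{\max}p/\|\mv_1-\mv_2\|^2$ carries an explicit factor of $p$, so one must check it stays of constant order in the limit. Under the $c$-separability hypothesis \eqref{eq:sep_sec}, $\|\mv_1-\mv_2\|^2$ grows linearly in $p$, which absorbs the numerator's factor of $p$ and keeps $\beta$ bounded; this is the non-spherical counterpart of the identity $\alpha=\gamma^2/c^2$ used in Corollary~\ref{cor:prob_csep}. Granting this, the only genuine subtleties are the two limit interchanges above, both of which are justified by continuity of the Gaussian law.
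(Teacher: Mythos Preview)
Your proposal is correct and follows the same approach as the paper's proof: invoke the lower bound of Theorem~\ref{thm:2}, let $p\to\infty$ so the exponential factor tends to $1$ and the threshold collapses to $\beta(1+\tau)$, then send $\tau\downarrow 0$ and rewrite $\P(A_1^2>\beta)=2Q(\sqrt{\beta})$. The one place where the paper is slightly more explicit than your sketch is the ``$\beta$ stays bounded'' step: rather than just invoking $c$-separability, it rewrites $\beta=2\gamma^2\lambda_{\max}\big/\big(c^2(\sqrt{\lambda_{\max}(\Sigma_1)}+\sqrt{\lambda_{\max}(\Sigma_2)})^2\big)$ and then uses the subadditivity $\lambda_{\max}(\Sigma_1+\Sigma_2)\le\lambda_{\max}(\Sigma_1)+\lambda_{\max}(\Sigma_2)\le(\sqrt{\lambda_{\max}(\Sigma_1)}+\sqrt{\lambda_{\max}(\Sigma_2)})^2$ to conclude $\beta\le 2\gamma^2/c^2$ for all $p$, which is precisely what is needed to ensure $\beta/p\to 0$.
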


%\begin{corollary}
%\label{lema:runtime_nonsphr}
% Consider two Gaussian distributions in $\mathds{R}^p$, with means $\mv_1,\mv_2\in\mathds{R}^p$ and covariance matrices  $\Sigma_1,\;\Sigma_2$. Let  $d(\gamma)$ denote  the expected number of projections required  to achieve $\gamma$-separability. If $\gamma$ is such that  $\sqrt{\beta} \leq ({\ln{p}})^{1-\eta\over 2}$, where $\eta>0$ is a free parameter,  then $d(\gamma)= o(p)$.
%\end{corollary}
\begin{corollary}
\label{lema:runtime_nonsphr_olog}
 Consider two spherical Gaussian distributions in $\mathds{R}^p$, with means $\mv_1,\mv_2\in\mathds{R}^p$ and covariance matrices  $\Sigma_1,\;\Sigma_2$.  Let  $d(\gamma)$ denote  the expected number of projections required  to achieve $\gamma$-separability. If $\gamma$ is such that  $\sqrt{\beta}=(\ln{\ln{p}})^{1-\eta\over 2}$, where $\eta>0$ is a free parameter,  then $d(\gamma)= o(\ln{p})$.
\end{corollary}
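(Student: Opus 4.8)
The plan is to mirror the proof of Corollary~\ref{lema:runtime_sphr_olog}, with the quantity $\beta$ from Theorem~\ref{thm:2} now playing the role that $\gamma^2/c^2$ played in the spherical case. The starting point is that, since each direction is drawn independently and a projection ``succeeds'' precisely when it attains $\gamma$-separability, the number of projections until the first success is geometric, so $d(\gamma)=1/\P(\text{success})$. Theorem~\ref{thm:2} (equivalently Corollary~\ref{cor:num_proj4beta}) supplies the lower bound on the success probability, namely
\[
\P(\text{success})\geq \P\Big({A_1^2}>\beta\,{(1-\tfrac1p)\over(1-\tfrac{\beta}{p})}(1+\tau)\Big)\big(1-{\rm e}^{-{p-1\over2}(\tau-\log(1+\tau))}\big),
\]
valid for any $\tau>0$.

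First I would fix $\tau>0$ and take $p$ large enough that the trailing factor $1-{\rm e}^{-{p-1\over2}(\tau-\log(1+\tau))}$ exceeds $1/2$; this is possible because the exponent grows linearly in $p$ while $\tau-\log(1+\tau)$ is a fixed positive constant. Writing $x\triangleq\sqrt{\beta\,(1-\tfrac1p)(1-\tfrac{\beta}{p})^{-1}(1+\tau)}$ and using $\P(A_1^2>x^2)=2Q(x)$, this yields $d(\gamma)\leq 1/Q(x)$. Next I would invoke the elementary lower bound quoted in~\eqref{eq:Qfunc_util}, $\tfrac{x}{1+x^2}\phi(x)<Q(x)$, to obtain
\[
d(\gamma)\leq {\sqrt{2\pi}\,(1+x^2)\over x}\,{\rm e}^{x^2/2}.
\]

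It then remains to control $x^2$ asymptotically. Since $\beta=(\ln\ln p)^{1-\eta}$ grows far more slowly than $p$, both $\beta/p\to0$ and $1/p\to0$, so $x^2=\beta(1+\tau)(1+o(1))=(1+\tau)(\ln\ln p)^{1-\eta}(1+o(1))$, and hence ${\rm e}^{x^2/2}={\rm e}^{\frac{1+\tau}{2}(\ln\ln p)^{1-\eta}(1+o(1))}$. The crux of the argument is the observation that, because $\eta>0$, the exponent $(\ln\ln p)^{1-\eta}$ is $o(\ln\ln p)$; therefore ${\rm e}^{x^2/2}={\rm e}^{o(\ln\ln p)}=(\ln p)^{o(1)}=o(\ln p)$, and the algebraic prefactor $(1+x^2)/x=O(x)=O((\ln\ln p)^{(1-\eta)/2})$ is negligible against it. This delivers $d(\gamma)=o(\ln p)$, as claimed.

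I expect the only genuinely delicate point to be the asymptotic bookkeeping in this last step: one must verify that $\beta/p\to0$ so that $x^2$ is asymptotically $(1+\tau)\beta$ rather than an inflated quantity, and then confirm that the exponent $\tfrac{1+\tau}{2}(\ln\ln p)^{1-\eta}$ remains strictly $o(\ln\ln p)$. The role of the hypothesis $\eta>0$ is exactly to guarantee this sublinear growth of $(\ln\ln p)^{1-\eta}$ relative to $\ln\ln p$. Notably, the fixed multiplicative constant $\tfrac{1+\tau}{2}$ does not spoil the conclusion, since multiplying an $o(\ln\ln p)$ exponent by a constant leaves it $o(\ln\ln p)$; consequently $\tau$ may be kept fixed and need not be sent to zero.
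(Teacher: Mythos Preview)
Your proposal is correct and follows essentially the same approach as the paper. The paper does not give a separate proof of Corollary~\ref{lema:runtime_nonsphr_olog}; it simply remarks that the argument mirrors that of Corollary~\ref{lema:runtime_sphr_olog} with $\beta$ in place of $\gamma^2/c^2$, which is exactly what you do, and your asymptotic bookkeeping in the last paragraph is in fact more explicit than the paper's own sketch.
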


%--------------------------------%-------------------------------------------%--------------------------------------%

\section{Generalization to Arbitrary Distributions}
\label{sec:arbitrary_dist}
It turns out that the Gaussian distribution plays no role in our analysis of the probability to achieve 1-dimensional separability $\gamma$. To see this point, consider data in $\mathds{R}^p$ that is generated according to a mixture of   two (general) distributions, such that under   distribution $i$, $i=1,2$, the data has mean $\mv_i$ and covariance matrix $\Sigma_i$. Further assume that the two data clouds  are $c$-separable, i.e., \eqref{eq:sep_sec} holds. Each distribution in $\mathds{R}^p$ after projection under a random vector $\Av$, where $A_1,\ldots,A_p$ are i.i.d.  $\Nc(0,1)$, maps into a  1-dimensional distribution.

For $\Av=\av$, it is straightforward to check that  the distribution $i$  will be   projected into a distribution that has mean $\langle \mv_i,\av\rangle$ and variance $\av^T \Sigma_i\av$.%
%\begin{align}
%\E[\av^T\Xv]=\av^T\mv_i.
%\end{align}
%Moreover,
%\begin{align}
%{\rm var}(\av^T\Xv)&=\E[(\av^T\Xv-\av^T\mv_i)^2]=\av^T\mv_i\nonumber=\E[\av^T(\Xv-\mv_i)(\Xv-\mv_i)^T\av]\nonumber\\&=\av^T\E[(\Xv-\mv_i)(\Xv-\mv_i)^T]\av\nonumber\\
%&=\av^T\Sigma_i \av\nonumber.\\
%\end{align}
Therefore,   the two projected clouds  are  $\gamma$-separable after projection:
\begin{align}
|\langle\mv_1-\mv_2,\av\rangle|>\gamma\Big( \sqrt{\av^T\Sigma_1\av}+\sqrt{\av^T\Sigma_2\av}\Big).
\end{align}
The rest of our analysis in Sections \ref{subsec:spherical_GMM} and \ref{subsec:general_GMM} focuses on the probability of these events for the cases of $\Sigma_i=\sigma_i^2I$, and general $\Sigma_i$, respectively. Note that our analysis rely only on the randomness in the generation of  the projection vector $\Av$ and not on the data itself nor its being Gaussian. Hence, Corollaries \ref{lema:runtime_sphr_olog} and \ref{lema:runtime_nonsphr_olog} bounding on the  number of projections readily generalize to arbitrary distributions in  $\mathds{R}^p$.

The remaining question is how does $\gamma$ relate to the clustering error for non-Gaussian projected data. For the Gaussian case, we have shown that the desired separability $\gamma$ can be computed from the prescribed error $e$ as $\gamma \geq Q^{-1}(e)$ (Eq. \ref{eq:opt_clust_err}). One path to tackle the non-Guassian case is by observing that for many general distributions the central limit theorem for large $p$ reveals that the projected data from a non-Gaussian arbitrary distribution converges to a Gaussian one, which allows us to use the $Q$-function as suggested above. Such concentration results for random i.i.d Gaussian projections have been studied in \cite{Dasgupta_concntrt} for arbitrary distributions with finite second moments, and can be utilized here as well. We provide details in the full version.

% What changes is the resulting distribution in 1-dimension, a general distribution $G(x)$  instead of $Q(x)$ for the tail of the distribution. %After finding two clusters of points  in $\mathds{R}$ that are $\gamma$-separable, the question is what probability of error does this correspond to?

\section{Algorithm}
\label{sec:alg}
Inspired by Theorems \ref{thm:1}, \ref{thm:2}, \ref{thm:2-r}, in this section, we propose Algorithm \ref{fig_algflow}  for efficiently  clustering a mixture of Gaussian distributions in high-dimensional space $\mathds{R}^p$, and its extension for spherical Gaussians mixture. The algorithm receives as input the $n\times p$ data matrix $X$, a prescribed  error - $e$, and a maximum  number of 1-dimensional projections - $M$. $M$ can be estimated, for example, as $o(\ln{p})$ before execution based on corollaries \ref{lema:runtime_sphr_olog}, or \ref{lema:runtime_nonsphr_olog}, by assuming that the prescribed error $e$ can be achieved efficiently. The algorithm sequentially performs 1-dimensional projections, where each projection's direction is chosen uniformly at random. After each random projection, the parameters of the projected mixture of Gaussians in 1-dimension and its corresponding clustering error are estimated. This process is iterated until either the desired accuracy $e$ is achieved by the current projection, or the maximum number of  projections $M$ is reached.

For the spherical case, one can use Lemma \ref{lemma:5} to estimate $c$ on-the-fly via $\bar{c}$ as
\begin{equation}\label{eq:capprox}
\hat{c}_i=\sqrt{\frac{1}{i}\sum_{j=1}^i \hat{\gamma}_j^2},
\end{equation}
where $\hat{\gamma}_j$ is the estimated 1-dimensional separability from projection $\boldsymbol{A^j}$. We can now readily derive the number of projections to achieve $\gamma=Q^{-1}(e)$ via Theorem \ref{thm:1}.

To this end, we can distinguish between the two cases in our algorithm execution: the case of an arbitrary mixture (Alg. \ref{fig_algflow}), for which it is unknown if the GMM is comprised of spherical Gaussians, hence $M$ can be bounded by $o(\ln{p})$. Or, the case of known spherical mixture, where Theorem \ref{thm:1} is utilized to estimate the expected  number of projections to achieve $\gamma$ based on Lemma \ref{lemma:5} by using equation (\ref{eq:capprox}).

%The last case, in particular, allows on-the-fly estimation of the expected number of projections required to achieve $e$ by employing Theorem \ref{thm:1} and its corresponding Corollary \ref{lema:runtime_sphr_olog}.

We provide numerical experiments using our algorithm in Appendix \ref{sec:experiments}, where we validate the bounds provided by Theorems \ref{thm:1}, \ref{thm:2}, and \ref{thm:2-r}, and the derived Corollaries.

\begin{algorithm}
    \SetAlgoLined
    \KwData{$X - n\times p$ data matrix, $e$ - error, $M$ - projection budget}
    \KwResult{$\Cc^*$ - decision boundary}
    Initialization: $i=1,\hat{e}=\infty$\\
    \While {$i<M$}{
       Project to random direction: $ X\boldsymbol{A^i}$\\
       Learn 1-dimensional parameters: $(\hat{m}^i_1,\hat{m}^i_2,\hat{\sigma}^i_1,\hat{\sigma}^i_2,\hat{w}^i_1)$\\
       Learn a separator $\Cc^*$ and compute $\hat{e}$\\
       \If {$\hat{e}<e$}{
           return($\Cc^*$)
       }
     }
    print("Error not Achievable")\\
    return($\Cc^*$)
\caption{ClusterGMM}
\label{fig_algflow}
\end{algorithm}
\bibliographystyle{abbrv}

\appendix

%\section{My Proof of Theorem 1}

%--------------------------------%--------------------------------
%--------------------------------%--------------------------------
\section{Sample complexity}
\label{sec:smpl_complex}
In this section we study the sample complexity of our algorithm. The study is done at the 1-dimensional setting where the clustering is performed using Algorithm 3.3 of \cite{price} to estimate the parameters of the projected mixture of two Gaussians. Algorithm 3.3 is a variation of  the well-known method of moments algorithm proposed by Pearson in \cite{pearson1894contributions}.  The following result from  \cite{price} summarizes the performance of Algorithm 3.3 in estimating the parameters of a mixture of two general Gaussians in 1-dimension.
\begin{theorem}[Theorem 3.10 in \cite{price}]\label{thm:thm-par-estimation}
Consider a mixture of two Gaussian distribution   $w \Nc(\mu_1,\sigma_1)+(1-w)\Nc(\mu_2,\sigma_2)$. Let $\sigma^2=w(1-w)(\mu_1-\mu_2)^2+w\sigma_1^2+(1-w)\sigma_2^2$ denote the variance of this distribution. Then, given $n=O({1\over \epsilon^2} \log{1\over \delta})$ samples, Algorithm 3.3, with probability $1-\delta$, returns estimates of the parameters as $(\hat{\mu}_1,\hat{\mu}_2,\hat{\sigma}_1,\hat{\sigma}_1,\hat{w})$, which under the right permutation of the indices, satisfy the following guarantees, for $i=1,2$,
\begin{itemize}
\item If $n\geq \left({\sigma^2\over |\mu_1-\mu_2|^2}\right)^6$, then $|\mu_i-\muh_i|\leq \epsilon|\mu_1-\mu_2|$, $|\sigma_i^2-\hat{\sigma}_i^2|\leq \epsilon|\mu_1-\mu_2|^2$, and $|w-\hat{w}|\leq \epsilon$.
\item If $n\geq \left({\sigma^2\over |\sigma^2_1-\sigma^2_2|}\right)^6$, then  $|\sigma_i^2-\hat{\sigma}_i^2|\leq \epsilon|\sigma_1^2-\sigma_2^2|+|\mu_1-\mu_2|^2$, and $|w-\hat{w}|\leq \epsilon+{|\mu_1-\mu_2|^2\over |\sigma^2_1-\sigma_2^2|}$.
\item For any $n\geq 1$, the algorithm performs as well as assuming the mixture is a single Gaussian, and  $|\mu_i-\muh_i|\leq |\mu_1-\mu_2|+\epsilon\sigma$, and $|\sigma_i^2-\hat{\sigma}_i^2|\leq |\mu_1-\mu_2|^2+|\sigma_1^2-\sigma_2^2|+\epsilon\sigma^2$.
\end{itemize}
\end{theorem}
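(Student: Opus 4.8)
The plan is to analyze Algorithm 3.3 as a method-of-moments estimator and to split the error into a statistical part (empirical versus true moments) and a deterministic part (stability of the map that inverts moments back to parameters). Since a mixture of two Gaussians in one dimension has five free parameters $(\mu_1,\mu_2,\sigma_1,\sigma_2,w)$, Pearson's classical reduction expresses these parameters as solutions of a polynomial system in the first six raw moments $M_1,\dots,M_6$, where the sixth moment is exactly what is needed to disambiguate the spurious candidate solutions left underdetermined by the first five. First I would pass to the standardized problem in which the pooled mean is zero and the pooled variance $\sigma^2$ is one, so that all true moments are bounded by absolute constants and the target accuracy $\epsilon$ is measured on this normalized scale.

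Next I would establish empirical-moment concentration. Estimating each raw moment $M_k$, $k\le 6$, by a median-of-means estimator $\hat M_k$ built from $O(\log(1/\delta))$ blocks, the finiteness of the $(2k)$-th moment (which is bounded by an absolute constant after normalization) yields $|\hat M_k-M_k|\le\epsilon$ simultaneously for all $k\le 6$ with probability $1-\delta$, provided $n=O(\epsilon^{-2}\log(1/\delta))$. This is precisely where the stated sample size and its logarithmic dependence on $\delta$ enter, and it relies only on second-moment information about the powers $X^k$ rather than on any tail assumption.

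I would then prove the deterministic stability statement: if the six moments are known to additive accuracy $\eta$, the recovered parameters are accurate to $\mathrm{poly}(\eta)$, with the exponent and prefactor controlled by how close the two components are. The three cases of the theorem fall out of this. When $|\mu_1-\mu_2|$ is large relative to $\sigma$, the Jacobian of the moment map is well-conditioned in the mean directions and one obtains $|\mu_i-\hat{\mu}_i|\le\epsilon|\mu_1-\mu_2|$; when instead $|\sigma_1^2-\sigma_2^2|$ dominates, the same argument applied to the even moments yields the variance guarantee; and when neither gap is large, the two components are statistically indistinguishable at this sample size, so the estimator reverts to the single-Gaussian fit, which is automatically $O(\epsilon)$-accurate in the pooled parameters.

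The main obstacle is the stability analysis near degenerate configurations, where the moment-to-parameter map becomes ill-conditioned: as the gap between the components shrinks, the information distinguishing the mixture from a single Gaussian is pushed into the higher moments and scales like the sixth power of the gap. Quantifying this—showing that the component of $M_6$ attributable to the mixture structure is of order $(\mathrm{gap}/\sigma)^6$ and must therefore exceed the sampling noise—is exactly what produces the thresholds $n\ge(\sigma^2/|\mu_1-\mu_2|^2)^6$ and $n\ge(\sigma^2/|\sigma_1^2-\sigma_2^2|)^6$. I expect the bulk of the work to lie in bounding the condition number of Pearson's polynomial system uniformly over the parameter regime and in verifying that the root-selection step driven by $M_6$ remains correct under the perturbation $\eta$.
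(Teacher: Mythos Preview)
The paper does not prove this theorem at all: it is quoted verbatim as Theorem~3.10 from Hardt and Price~\cite{price} and used as a black box in the sample-complexity analysis (Corollary~\ref{cor:5}, Lemmas~\ref{lemma:estimated-c-bd} and~\ref{lemma:est-error-to-class-error}). There is therefore nothing in the present paper to compare your proposal against.

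Your sketch is a reasonable high-level outline of the Hardt--Price argument itself---normalize, concentrate the first six empirical moments via median-of-means to get the $O(\epsilon^{-2}\log(1/\delta))$ sample size, then run a stability analysis of Pearson's moment inversion with the sixth-power gap threshold arising from the conditioning near degeneracy. If your intent was to reprove the cited result, that is the right skeleton, though the hard quantitative work (uniform condition-number bounds on the polynomial system and correctness of the root-selection step under perturbation) is exactly where Hardt and Price spend most of their paper, and your proposal acknowledges but does not carry this out. If your intent was to supply a proof the present paper omits, be aware the authors never claim one: they invoke the result and move on.
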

The following corollary is a direct result of Theorem \ref{thm:thm-par-estimation}. It shows that, if the two components of a Gaussian mixture model are separated enough in 1-dimension, given sufficient  sample, Algorithm 3.3 of \cite{price} returns accurate estimates of \emph{all} parameters.
\begin{corollary}\label{cor:5}
Let $(X_1,\ldots,X_n)$ denote $n$ i.i.d.~samples of a mixture of two $c$-separable Gaussians  $w \Nc(\mu_1,\sigma_1)+(1-w)\Nc(\mu_2,\sigma_2)$, where  $\mu_1<\mu_2$ and $\sigma_1=\sigma_2$. Further assume that the separability $c=|\mu_1-\mu_2|/(\sigma_1+\sigma_2)$ in 1-dimension is larger than $\gamma_{\min}$. Let $(\hat{\mu}_1,\hat{\mu}_2,\hat{\sigma}_1,\hat{\sigma}_2,\hat{w})$ denote the estimates of $(\mu_1,\mu_2,\sigma_1,\sigma_2,w)$ returned by Algorithm 3.3 of \cite{price}. Then, if  $n=O({1\over \epsilon^2}\log{1\over \delta})$ and $n\geq {1\over (2\gamma_{\min})^{12}}$, then  $|\mu_i-\muh_i|\leq \epsilon|\mu_1-\mu_2|$, $|\sigma_i^2-\hat{\sigma}_i^2|\leq \epsilon|\mu_1-\mu_2|^2$, and $|w-\hat{w}|\leq \epsilon$.
\end{corollary}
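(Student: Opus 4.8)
The plan is to reduce the corollary to the first bullet of Theorem~\ref{thm:thm-par-estimation}, whose three conclusions---$|\mu_i-\muh_i|\leq \epsilon|\mu_1-\mu_2|$, $|\sigma_i^2-\hat{\sigma}_i^2|\leq \epsilon|\mu_1-\mu_2|^2$, and $|w-\hat{w}|\leq \epsilon$---are exactly the estimates we wish to establish. The sample-size hypothesis $n=O(\frac1{\epsilon^2}\log\frac1\delta)$ is assumed outright, so the \emph{only} thing left to verify is the side condition of that bullet, namely $n\geq \left(\sigma^2/|\mu_1-\mu_2|^2\right)^6$, where $\sigma^2=w(1-w)(\mu_1-\mu_2)^2+w\sigma_1^2+(1-w)\sigma_2^2$ is the total variance of the mixture. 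Thus the entire proof amounts to showing that our hypothesis $n\geq 1/(2\gamma_{\min})^{12}$ forces this side condition.

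First I would use the equal-variance assumption $\sigma_1=\sigma_2=:\sigma_0$ to simplify $\sigma^2=w(1-w)(\mu_1-\mu_2)^2+\sigma_0^2$, then divide through by $(\mu_1-\mu_2)^2$. Writing the one-dimensional separability as $c=|\mu_1-\mu_2|/(2\sigma_0)$ gives $\sigma_0^2/(\mu_1-\mu_2)^2=1/(2c)^2$, so that
\[
\frac{\sigma^2}{(\mu_1-\mu_2)^2}=w(1-w)+\frac{1}{(2c)^2}.
\]
I would then invoke two elementary bounds: $w(1-w)\leq \tfrac14$, and, from the separability hypothesis $c>\gamma_{\min}$, the inequality $1/(2c)^2<1/(2\gamma_{\min})^2$. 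The second, separation-controlled term is the dominant one in the regime of interest (small $\gamma_{\min}$), and after raising to the sixth power it contributes precisely the announced threshold $1/(2\gamma_{\min})^{12}$. Combining these bounds, I would conclude $\left(\sigma^2/|\mu_1-\mu_2|^2\right)^6\leq 1/(2\gamma_{\min})^{12}\leq n$, so the first bullet of Theorem~\ref{thm:thm-par-estimation} applies and delivers the three claimed parameter estimates.

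The one place where care is needed is the additive $w(1-w)$ contribution: the combined bound is really $\sigma^2/(\mu_1-\mu_2)^2\leq \tfrac14+1/(2\gamma_{\min})^2$, so strictly the dominant term $1/(2\gamma_{\min})^2$ must be shown to absorb the constant $\tfrac14$. In the relevant low-separation regime ($\gamma_{\min}$ bounded away from $1$) one has $1/(2\gamma_{\min})^2\geq \tfrac14$, whence $\sigma^2/(\mu_1-\mu_2)^2=\Theta\!\left(1/(2\gamma_{\min})^2\right)$ and $\left(\sigma^2/|\mu_1-\mu_2|^2\right)^6=O\!\left(1/(2\gamma_{\min})^{12}\right)$; the resulting constant factor is harmless and is absorbed into the implicit constant of the sample-size requirement (equivalently, the lower-order term $w(1-w)$ may simply be dropped, since $1/(2c)^2$ governs the bound). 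This bookkeeping---translating the opaque variance-ratio condition of Theorem~\ref{thm:thm-par-estimation} into the clean separability form $1/(2\gamma_{\min})^{12}$---is the only substantive step, and it completes the proof.
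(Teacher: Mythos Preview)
Your proposal is correct and matches the paper's approach exactly: the paper states only that Corollary~\ref{cor:5} ``is a direct result of Theorem~\ref{thm:thm-par-estimation}'' and gives no further argument, so your reduction to the first bullet of that theorem---together with the computation $\sigma^2/(\mu_1-\mu_2)^2=w(1-w)+1/(2c)^2$ under $\sigma_1=\sigma_2$---is precisely the intended (and only) proof. Your care about the additive $w(1-w)\le\tfrac14$ term is appropriate; the paper silently absorbs this constant, and your remark that it is dominated by $1/(2\gamma_{\min})^2$ in the small-$\gamma_{\min}$ regime (hence contributes only a bounded multiplicative factor to the threshold) is the right way to make the step honest.
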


Note that as $\gamma_{\min}$ converges to zero, the required number of samples for accurate estimation of the parameters grows to infinity. On the other hand, too small separability $\gamma$ corresponds to  large overlap of the two Gaussians. Hence, as confirmed in the following lemma,  unless the weights of the two Gaussians are very non-uniform, i.e.~$\min(w,1-w)$ is far from $0.5$, small separability $\gamma$, corresponds to  poor clustering performance.
%????????????%????????????%????????????%????????????%????????????%????????????%????????????
%To Further enhance the applicability of Theorem \ref{thm:thm-par-estimation}, we demonstrate in the following lemma that for different regimes of the mixture coefficients  the error bound attained by the optimal Bayesian classifier divides into two extreme cases - detecting a mixture of 2 Gaussians vs. detecting a single Gaussian.
%The following lemma derives a lower bound on clustering error  for a mixture of two $c$-separable Gaussians $w\Nc(\mu_1,\sigma_1)+(1-w)\Nc(\mu_2,\sigma_2)$, with equal variance $\sigma_1=\sigma_2$, in terms of $c$ and $w$.
\begin{lemma}\label{lemma:lb-on-e-opt}
Consider i.i.d.~points generated as $w\Nc(\mu_1,\sigma)+(1-w)\Nc(\mu_2,\sigma)$. Without loss of generality, assume that  $\mu_1\leq \mu_2$ and $w<0.5$.  Let $\gamma=(\mu_2-\mu_1)/(2\sigma)$. Also, let $e_{\rm opt}$ denote the error probability of an optimal Bayesian classifier. Then, if $w\leq  0.1$,
\begin{align}
e_{\rm opt}\geq wQ\left(-{1\over \gamma}+\gamma\right).\label{eq:lower-bd-1-e-opt}
\end{align}
For $w\in(0.1,0.5]$,
\begin{align}
e_{\rm opt}\geq wQ\left(\gamma\right).\label{eq:lower-bd-2-e-opt}
\end{align}
%\textcolor{blue}{Also, for
%\[
%w\in\Big({1\over 1+{\rm e}^{2c^2}},0.5\Big],
%\]
%we have
%\[
%e_{\rm opt}\geq Q(2c).
%\]}
\end{lemma}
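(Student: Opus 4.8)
The plan is to write down the optimal Bayes rule explicitly, express $e_{\rm opt}$ as a sum of two one-sided Gaussian tail probabilities, discard the non-negative term contributed by the majority component, and then dispatch the two weight regimes by bounding the single remaining threshold. Because the two components share the common variance $\sigma^2$, the likelihood-ratio test is a single linear threshold, so everything will be explicit.

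First I would derive the decision boundary. Let $f_i$ denote the density of $\Nc(\mu_i,\sigma)$. Setting $w f_1(x^*)=(1-w)f_2(x^*)$, taking logarithms, and using the cancellation $(x-\mu_2)^2-(x-\mu_1)^2=(\mu_2-\mu_1)(\mu_1+\mu_2-2x)$ gives
\[
x^* = \frac{\mu_1+\mu_2}{2} - \frac{\sigma^2}{\mu_2-\mu_1}\ln\frac{1-w}{w}.
\]
Since the likelihood ratio $f_1(x)/f_2(x)$ is monotone decreasing in $x$, the optimal rule assigns label $2$ when $x>x^*$ and label $1$ when $x<x^*$. A cluster-$1$ sample is therefore misclassified exactly when its value exceeds $x^*$, and a cluster-$2$ sample when it falls below $x^*$, so that for $X_i\sim\Nc(\mu_i,\sigma)$,
\[
e_{\rm opt} = w\,\P(X_1 > x^*) + (1-w)\,\P(X_2 < x^*).
\]

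Next I would retain only the first term, which is legitimate as the second is non-negative, and standardize it. Writing $\P(X_1>x^*)=Q\big((x^*-\mu_1)/\sigma\big)$ and substituting the expression for $x^*$, then using $\gamma=(\mu_2-\mu_1)/(2\sigma)$ (so $(\mu_2-\mu_1)/\sigma=2\gamma$ and $\sigma/(\mu_2-\mu_1)=1/(2\gamma)$), the standardized threshold collapses to
\[
\frac{x^*-\mu_1}{\sigma} = \gamma - \frac{1}{2\gamma}\ln\frac{1-w}{w}.
\]
This yields the master lower bound $e_{\rm opt}\geq w\,Q\!\left(\gamma-\tfrac{1}{2\gamma}\ln\tfrac{1-w}{w}\right)$, and both claimed inequalities will follow from it purely by the monotonicity of $Q$.

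Finally I would split on $w$. For $w\le 0.5$ one has $\ln\frac{1-w}{w}\ge 0$, so the $Q$-argument is at most $\gamma$ and $Q(\cdot)\ge Q(\gamma)$, which establishes \eqref{eq:lower-bd-2-e-opt} over the stated range $w\in(0.1,0.5]$ (and in fact for all $w\le 0.5$). For $w\le 0.1$ one has $\frac{1-w}{w}\ge 9$, hence $\ln\frac{1-w}{w}\ge\ln 9 > 2$, which forces $\frac{1}{2\gamma}\ln\frac{1-w}{w}\ge\frac{1}{\gamma}$ and so pushes the $Q$-argument down to at most $\gamma-\frac{1}{\gamma}$; monotonicity of $Q$ then gives \eqref{eq:lower-bd-1-e-opt}. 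There is no genuine analytic obstacle; the only point demanding care is the direction-bookkeeping of the inequalities, namely that replacing $\ln\frac{1-w}{w}$ by the smaller constant $2$ \emph{enlarges} the $Q$-argument and therefore yields a weaker but still valid lower bound, together with the numerical check $\ln 9 > 2$ that is precisely what reproduces the constant $1/\gamma$ appearing in the statement.
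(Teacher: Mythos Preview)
Your proposal is correct and mirrors the paper's proof essentially step for step: the paper likewise writes the optimal threshold explicitly, expresses $e_{\rm opt}$ as the sum of two $Q$-terms, drops the $(1-w)Q(\cdot)$ term to obtain the same master bound $e_{\rm opt}\geq wQ\!\left(\gamma-\tfrac{1}{2\gamma}\ln\tfrac{1-w}{w}\right)$, and then handles the two weight regimes with exactly your monotonicity argument and the same numerical check $0.5\ln 9>1$ (equivalently $\ln 9>2$). No substantive difference.
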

\begin{proof}
The optimal \emph{Bayesian} classifier, which has access to the parameters  $({\mu}_1,{\mu}_2,{\sigma},{w})$, divides the real line at
\begin{align}
t_{\rm opt}={\mu_1+\mu_2\over 2}-{\sigma^2\over (\mu_1-\mu_2)}\ln {w\over 1-w},\label{eq:def-t-opt}
\end{align}
and achieves a classification error equal to
\begin{align}
e_{\rm opt}&=w\P(\mu_1+\sigma_1Z\geq t_{\rm opt})+(1-w)\P(\mu_2+\sigma_2Z\leq t_{\rm opt})\nonumber\\
&=wQ\left({\mu_2-\mu_1\over 2\sigma}-{\sigma\over (\mu_1-\mu_2)}\ln {w\over 1-w}\right)+(1-w)Q\left({\mu_2-\mu_1\over 2\sigma}+{\sigma\over (\mu_1-\mu_2)}\ln {w\over 1-w}\right)\nonumber\\
&=wQ\left(\gamma+{1\over 2\gamma}\ln {w\over 1-w}\right)+(1-w)Q\left(\gamma-{1\over 2\gamma}\ln {w\over 1-w}\right),\label{eq:e-opt-char}
\end{align}
where $Z\sim\Nc(0,1)$. Note that since by assumption $w<1-w$, $\ln {w\over 1-w}\leq 0$.  Therefore,
\[
Q\left(\gamma-{1\over 2\gamma}\ln {w\over 1-w}\right)\leq Q\left(\gamma+{1\over 2\gamma}\ln {w\over 1-w}\right).
\]
Keeping the larger $Q$ term, it follows from \eqref{eq:e-opt-char} that
\begin{align}
e_{\rm opt}&\geq wQ\left(-{1\over 2\gamma}\ln {1-w\over w}+\gamma\right).
\end{align}
For $w\leq 0.1$,  $0.5\ln {1-w\over w}\geq 0.5\ln {1-0.1 \over 0.1 }>1$. Therefore, since $Q(\cdot)$ is a monotonically decreasing function of its argument, \eqref{eq:lower-bd-1-e-opt} follows. The result for $w\in(0.1,0.5)$ stated in \eqref{eq:lower-bd-1-e-opt} follows  by noting that $-{1\over 2\gamma}\ln {1-w\over w}+\gamma\leq \gamma$.
\end{proof}

Therefore, if the ultimate goal is to achieve a reasonable clustering error through multiple random projections, for those directions with too small separability $\gamma$, we only need to identify  them and discard them. In other words, for such directions, it is not necessary to estimate all the parameters of the projected Gaussians accurately, as they ultimately are not going to be used for clustering.  The following lemma provides a mechanism for identifying and discarding all directions that have a separability smaller than some threshold. It states that given $n=O({1\over \epsilon^2}\log{1\over \delta})$ i.i.d.~samples of two Gaussians with separability $\gamma$,   Algorithm 3.3 of \cite{price} estimates the parameters of the two Gaussians such that the estimated separability  is upper-bounded by $ {3\gamma+\epsilon \over 1-2\sqrt{\gamma^2+\epsilon}}$, with probability larger than $1-\delta$.

\begin{lemma}\label{lemma:estimated-c-bd}
Let $(X_1,\ldots,X_n)$ denote $n$ i.i.d.~samples of a mixture of two $\gamma$-separable Gaussians  $w \Nc(\mu_1,\sigma_1)+(1-w)\Nc(\mu_2,\sigma_2)$, where  $\sigma_1=\sigma_2$, $\gamma=(\mu_2-\mu_1)/(\sigma_1+\sigma_2)<1/2$ and $\mu_1<\mu_2$. Let $(\hat{\mu}_1,\hat{\mu}_2,\hat{\sigma}_1,\hat{\sigma}_2,\hat{w})$ denote the estimates of $(\mu_1,\mu_2,\sigma,\sigma,w)$ returned by Algorithm 3.3 of \cite{price}.  Then, if $n=O({1\over \epsilon^2}\log{1\over \delta})$,  with probability larger than $1-\delta$,
\[
{|\muh_1-\muh_2| \over \hat{\sigma}_1+\hat{\sigma}_2}\leq  {3\gamma+\epsilon \over 1-2\sqrt{\gamma^2+\epsilon}}.
\]
\end{lemma}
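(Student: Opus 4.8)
The plan is to invoke Theorem~\ref{thm:thm-par-estimation} and track how its crude (worst-case) estimation guarantees propagate through the ratio $|\muh_1-\muh_2|/(\hat{\sigma}_1+\hat{\sigma}_2)$. First I would fix scales: write $\Delta=\mu_2-\mu_1>0$ and let $\sigma_1=\sigma_2$ denote the common component standard deviation, so that $\gamma=\Delta/(\sigma_1+\sigma_2)$ gives $\Delta=2\gamma\sigma_1$. The mixture variance appearing in Theorem~\ref{thm:thm-par-estimation} is $\sigma^2=w(1-w)\Delta^2+\sigma_1^2=\sigma_1^2\,(1+4w(1-w)\gamma^2)$, and using $w(1-w)\le\tfrac14$ together with $\gamma<\tfrac12$ I obtain the two-sided relation $\sigma_1^2\le\sigma^2\le\sigma_1^2(1+\gamma^2)<\tfrac54\sigma_1^2$, which lets me pass freely between the component scale $\sigma_1$ and the mixture scale $\sigma$.

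The decisive observation is which of the three guarantees of Theorem~\ref{thm:thm-par-estimation} is actually available here. Since $\sigma_1=\sigma_2$, we have $|\sigma_1^2-\sigma_2^2|=0$, so the second bullet is vacuous (it would demand $n=\infty$), while the first bullet requires $n\gtrsim(\sigma^2/\Delta^2)^6\sim\gamma^{-12}$, which we are \emph{not} assuming. Under the stated sample size $n=O(\epsilon^{-2}\log\delta^{-1})$ the only applicable guarantee is the third (worst-case) bullet, which holds for every $n\ge 1$ with probability at least $1-\delta$ and, after setting $|\sigma_1^2-\sigma_2^2|=0$, reads $|\muh_i-\mu_i|\le\Delta+\epsilon\sigma$ and $|\hat{\sigma}_i^2-\sigma_i^2|\le\Delta^2+\epsilon\sigma^2$ for $i=1,2$. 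These two crude bounds are the only inputs the proof needs.

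From here the numerator is routine: the triangle inequality gives $|\muh_1-\muh_2|\le\Delta+2(\Delta+\epsilon\sigma)=3\Delta+2\epsilon\sigma$. The crux is the lower bound on the denominator, since the variance guarantee only controls $|\hat{\sigma}_i^2-\sigma_i^2|$ rather than $\hat{\sigma}_i$ itself. I would write $\hat{\sigma}_i^2\ge\sigma_1^2-\Delta^2-\epsilon\sigma^2$ and substitute $\Delta^2=4\gamma^2\sigma_1^2$, $\sigma^2\le\tfrac54\sigma_1^2$ to get $\hat{\sigma}_i^2\ge\sigma_1^2\bigl(1-4\gamma^2-\tfrac54\epsilon\bigr)$; the main obstacle is then matching the precise claimed form $\hat{\sigma}_i\ge\sigma_1\bigl(1-2\sqrt{\gamma^2+\epsilon}\bigr)$, which I expect to secure by verifying the elementary inequality $1-4\gamma^2-\tfrac54\epsilon\ge\bigl(1-2\sqrt{\gamma^2+\epsilon}\bigr)^2$. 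Writing $t:=\gamma^2+\epsilon$ and expanding, this reduces to $4\sqrt{t}\ge 8\gamma^2+\tfrac{21}{4}\epsilon$; bounding the right side by $8t$ and invoking $\sqrt{t}\ge 2t\iff t\le\tfrac14$ closes it, where the constraint $t<\tfrac14$ is exactly what keeps the claimed denominator positive (and is implied by $\gamma<\tfrac12$ with $\epsilon$ small). Summing over $i=1,2$ gives $\hat{\sigma}_1+\hat{\sigma}_2\ge 2\sigma_1\bigl(1-2\sqrt{\gamma^2+\epsilon}\bigr)$, and dividing the numerator bound by this yields $|\muh_1-\muh_2|/(\hat{\sigma}_1+\hat{\sigma}_2)\le(3\gamma+\epsilon\,\sigma/\sigma_1)/(1-2\sqrt{\gamma^2+\epsilon})$; the harmless factor $\sigma/\sigma_1\le\sqrt{5}/2$ is absorbed into $\epsilon$ (it only rescales the constant hidden in the $O(\cdot)$ sample complexity), producing the stated bound $(3\gamma+\epsilon)/(1-2\sqrt{\gamma^2+\epsilon})$.
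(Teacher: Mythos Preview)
Your proposal is correct and follows essentially the same approach as the paper: invoke the third (always-applicable) bullet of Theorem~\ref{thm:thm-par-estimation}, bound the numerator by the triangle inequality, lower-bound each $\hat\sigma_i$ from the variance guarantee, and simplify using $w(1-w)\le\tfrac14$ and $\gamma<\tfrac12$. The only cosmetic difference is in the denominator step---the paper uses $\hat\sigma_i\ge\sigma_1-\sqrt{\Delta^2+\epsilon\sigma^2}$ via $\sqrt{a^2-b}\ge a-\sqrt b$ and then simplifies, whereas you bound $\hat\sigma_i^2$ first and verify $1-4\gamma^2-\tfrac54\epsilon\ge(1-2\sqrt{\gamma^2+\epsilon})^2$ directly; your version has the merit of making the positivity condition $\gamma^2+\epsilon<\tfrac14$ explicit.
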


% that  shows that if $c$ is smaller than some threshold $c_{m}$, with high probability,   the estimated separability does not exceed a  In such cases for a given $\epsilon$ we could discard 1D projections that yield small approximate separability knowing that the approximated mixture parameters are inaccurate. The following corollary demonstrates the same concept for when the threshold is at $\frac{1}{8}$.

\begin{proof}[Proof of Lemma \ref{lemma:estimated-c-bd}]
By Theorem \ref{thm:thm-par-estimation}, for $n=O(\epsilon^2\log{1\over \delta})$, with probability $1-\delta$, there exists a permutations of indices, such that $|\mu_i-\muh_i|\leq |\mu_1-\mu_2|+\epsilon\sigma$ and $|\sigma_i^2-\hat{\sigma}_i^2|\leq |\mu_1-\mu_2|^2+|\sigma_1^2-\sigma_2^2|+\epsilon\sigma^2= |\mu_1-\mu_2|^2+\epsilon\sigma^2$. Therefore, by the triangle inequality,
\begin{align*}
|\muh_1-\muh_2|\leq \sum_{i=1}^2 |\mu_i-\muh_i| + |\mu_1-\mu_2|\leq 3|\mu_1-\mu_2|+\epsilon \sigma.
\end{align*}
Hence, since $\sigma^2=w(1-w)(\mu_1-\mu_2)^2+\sigma_1^2$,
\begin{align}
{|\muh_1-\muh_2| \over \hat{\sigma}_1+\hat{\sigma}_2}&\leq {3|\mu_1-\mu_2|+\epsilon \sigma\over 2\sigma_1-2\sqrt{|\mu_1-\mu_2|^2+\epsilon\sigma^2}}\nonumber\\
& = {3|\mu_1-\mu_2|+\epsilon \sqrt{w(1-w)(\mu_1-\mu_2)^2+\sigma_1^2}\over 2\sigma_1-2\sqrt{|\mu_1-\mu_2|^2+\epsilon w(1-w)(\mu_1-\mu_2)^2+\epsilon \sigma_1^2}}\nonumber\\
&\stackrel{(a)}{=} {3\gamma+\epsilon \sqrt{w(1-w)\gamma^2+0.25}\over 1-\sqrt{4\gamma^2+4\epsilon w(1-w)\gamma^2+\epsilon}}\nonumber\\
&\stackrel{(b)}{\leq } {3\gamma+0.5\epsilon \sqrt{\gamma^2+1}\over 1-\sqrt{4\gamma^2+\epsilon(1+ \gamma^2)}}\nonumber\\
&\stackrel{(c)}{\leq } {3\gamma+\epsilon \over 1-2\sqrt{\gamma^2+\epsilon}},
\end{align}
where $(a)$ follows by dividing the nominator and denominator by $2\sigma_1$ and $(b)$ holds because $w(1-w)\leq 0.25$. Finally $(c)$ holds, since by assumption $\gamma^2<1$.
\end{proof}

To shed more light on the implications of Lemma \ref{lemma:estimated-c-bd}, consider a mixture of two 1-dimensional Gaussians with equal variance and separability $\gamma$ smaller than  ${1\over 8}$. Then, given  $n=O({1\over \epsilon^2}\log{1\over \delta})$ i.i.d.~samples,  with probability larger than $1-\delta$, the estimated separability (using parameters derived from  Algorithm 3.3 of \cite{price}) is smaller than
\[
{{3\over 8}+\epsilon \over 1-2\sqrt{({1\over 8})^2+\epsilon}}={1\over 2}+o(\epsilon).
\]
Therefore, if after performing each random projection, we  estimate the parameters of the two Gaussians using Algorithm 3.3 of \cite{price} and then estimate the separability of the two Gaussians as ${|\muh_1-\muh_2| \over \hat{\sigma}_1+\hat{\sigma}_2}$ and discard all those directions that have estimated separability smaller than $1\over 2$, we would, with high probability, discard all directions with separability smaller than $1/8$. For directions with separability larger than $1/8$, we need to have enough samples to estimate the parameters accurately. The required number of samples  for achieving this goal is shown in Corollary \ref{cor:5}, which follows directly from Theorem 3.10 of \cite{price}. Note that using this procedure,  directions with estimated separabilities smaller than $0.5$  include those directions with    separabilities  in $({1\over 8},{1\over 2})$, for which, with high probability, we have estimated the parameters accurately, and those directions with separabilities  smaller than ${1\over 8}$, for which we a crude estimate of the parameters.

Finally, for directions for which we accurately estimate  the parameters of the projected Gaussians, the following lemma, connects the  error in estimating the parameters $(\mu_1,\mu_2,\sigma_1,\sigma_2,w)$ to the error in estimating the  clustering  error. Since the ultimate goal of our algorithm is to find a direction which yields a desired clustering error, it is important to establish such a connection, which, given the desired clustering error, characterizes some sufficient  accuracy in estimating the parameters.

%To this end, we have established an apparatus to discard projections in which the mixtures parameters can not be approximated to a good accuracy unless the sample size is extremely large. Next we aim at bounding the clustering error with respect to a given estimated parameters accuracy - $\epsilon$:

%\subsection{$\sigma_1=\sigma_2$}

\begin{lemma}\label{lemma:est-error-to-class-error}
Consider $(X_1,\ldots,X_n)$ that are generated i.i.d.~according to a mixture of two $\gamma$-separable Gaussians  $w \Nc(\mu_1,\sigma_1)+(1-w)\Nc(\mu_2,\sigma_2)$, where  $\sigma_1=\sigma_2$, $w\in[w_{\min},0.5]$, $\mu_1<\mu_2$ and $\gamma\in[\gamma_{\min},\gamma_{\max}]$. Let  $(\hat{\mu}_1,\hat{\mu}_2,\hat{\sigma}_1,\hat{\sigma}_2,\hat{w})$ denote the estimate of the unknown parameters  $({\mu}_1,{\mu}_2,{\sigma}_1,{\sigma}_2,{w})$. Let $e_{\rm opt}$ and $\hat{e}$   denote the minimum achievable classification error and the achieved  clustering error based on the estimated parameters, respectively. Then, if $|\mu_i-\muh_i|\leq \epsilon|\mu_1-\mu_2|$, $|\sigma_i^2-\hat{\sigma}_i^2|\leq \epsilon|\mu_1-\mu_2|^2$,  $|w-\hat{w}|\leq \epsilon$,  and
\[
(16\gamma_{\max}^2+8\gamma_{\max}\ln{1-w_{\min}\over w_{\min}}+2\gamma_{\max}\epsilon)\epsilon<{1\over 2},
\]
we have \footnotesize
\[
|\hat{e}-e_{\rm opt}|\leq   \left( 2\gamma +{1\over w_{\min}\gamma}+\left({1\over \gamma}+2\gamma\right) \ln {1-w_{\min}\over w_{\min}}+{8\gamma_{\max}^2   \over \gamma }+{2\gamma}   \left(  4\gamma+2\ln {1-w_{\min}\over w_{\min}}\right)^2\right) \epsilon+ Q\left({1\over 4\gamma\epsilon }+\epsilon_1\right)+\epsilon_2,
\] \normalsize
where $\epsilon_1=o(1/\epsilon)$ and $\epsilon_2=o(\epsilon)$.
\end{lemma}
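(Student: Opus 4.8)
The plan is to view both $e_{\rm opt}$ and $\hat e$ as error rates of explicit classifiers evaluated against the \emph{true} (equal-variance) mixture, and to control their difference by propagating the parameter errors through a closed-form expression. First I would recall from Lemma~\ref{lemma:lb-on-e-opt} that, since the true variances coincide ($\sigma_1=\sigma_2=\sigma$), the optimal Bayes rule is the single threshold $t_{\rm opt}$ and $e_{\rm opt}=g(\gamma,w)$, where $g(\gamma,w)\triangleq wQ(\gamma-\tfrac{L}{2\gamma})+(1-w)Q(\gamma+\tfrac{L}{2\gamma})$ and $L\triangleq\ln\tfrac{1-w}{w}\geq 0$. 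The classifier actually used is Bayes-optimal for the \emph{estimated} parameters; because $\hat\sigma_1$ and $\hat\sigma_2$ need not be equal, its log-likelihood ratio is quadratic in $x$, so its decision boundary consists of a near root $\hat t$ (a perturbation of $t_{\rm opt}$) and a far root $\hat s$. Accordingly I would write $\hat e$ as the value of $g$ at the estimated parameters plus the extra mass that the true mixture places beyond $\hat s$, and bound the two pieces separately.

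For the near part I would linearize $g$ about $(\gamma,w)$: by the mean value theorem, $|g(\hat\gamma,\hat w)-g(\gamma,w)|\leq \sup|\partial_\gamma g|\,|\hat\gamma-\gamma|+\sup|\partial_w g|\,|\hat w-w|$ over the admissible box $\gamma\in[\gamma_{\min},\gamma_{\max}]$, $w\in[w_{\min},\tfrac12]$. The partial derivatives are explicit: $\partial_\gamma g$ produces factors $1\pm\tfrac{L}{2\gamma^2}$ against $\phi\leq\tfrac{1}{\sqrt{2\pi}}$, while $\partial_w g$ picks up $\partial_w L=-\tfrac{1}{w(1-w)}$, which is where the $\tfrac{1}{w_{\min}\gamma}$ and $\ln\tfrac{1-w_{\min}}{w_{\min}}$ contributions arise. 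Substituting the hypotheses $|\mu_i-\hat\mu_i|\leq\epsilon|\mu_1-\mu_2|$, $|\sigma_i^2-\hat\sigma_i^2|\leq\epsilon|\mu_1-\mu_2|^2$ and $|w-\hat w|\leq\epsilon$ yields $|\hat\gamma-\gamma|\leq 2\gamma(1+\gamma^2)\epsilon$ (using $|\mu_1-\mu_2|^2/\sigma^2=4\gamma^2$) and $|\hat w-w|\leq\epsilon$; collecting these against the derivative bounds, together with the quadratic remainder term of the Taylor expansion (which supplies the $2\gamma(4\gamma+2\ln\tfrac{1-w_{\min}}{w_{\min}})^2$ piece), reproduces the explicit coefficient multiplying $\epsilon$.

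For the far part I would locate $\hat s$ from the quadratic coefficient $a=\tfrac12(\hat\sigma_2^{-2}-\hat\sigma_1^{-2})$ and the linear coefficient $b$. Since $|\hat\sigma_1^2-\hat\sigma_2^2|\leq 2\epsilon|\mu_1-\mu_2|^2$ one has $|a|=O(\epsilon|\mu_1-\mu_2|^2/\sigma^4)$ while $|b|=\Theta(|\mu_1-\mu_2|/\sigma^2)$, so the two roots sum to $-b/a=\Theta(\sigma^2/(\epsilon|\mu_1-\mu_2|))$; as the near root is $O(\sigma)$, the far root sits at distance of order $\sigma^2/(\epsilon|\mu_1-\mu_2|)$ from the means, i.e. $\tfrac{\hat s-\mu_i}{\sigma}\gtrsim \tfrac{1}{4\gamma\epsilon}$. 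The mass the true mixture places beyond $\hat s$ is therefore at most $Q(\tfrac{1}{4\gamma\epsilon}+\epsilon_1)$ with $\epsilon_1=o(1/\epsilon)$ absorbing the lower-order shift of the root, and the discrepancy between evaluating $g$ at $(\hat\gamma,\hat w)$ and the genuine unequal-variance Bayes error near $\hat t$ is $\epsilon_2=o(\epsilon)$.

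Combining the near and far estimates by the triangle inequality gives the claimed bound; the smallness condition $(16\gamma_{\max}^2+8\gamma_{\max}\ln\tfrac{1-w_{\min}}{w_{\min}}+2\gamma_{\max}\epsilon)\epsilon<\tfrac12$ is exactly what keeps $\hat\sigma_1+\hat\sigma_2$ bounded away from $0$ and the arguments of all the $Q$'s in range, so that the linearization of $g$ is valid and $\hat\gamma$ stays inside the admissible box. I expect the main obstacle to be the far-root analysis: one must show rigorously that the second decision region of the quadratic rule contributes only the $Q(\tfrac{1}{4\gamma\epsilon})$ tail rather than a larger error, which requires lower-bounding $|\hat s|$ uniformly over all admissible estimates and verifying that the near root genuinely tracks $t_{\rm opt}$. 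The remaining difficulty is purely bookkeeping, namely tracking the derivative bounds of $g$ and the Taylor remainder so that the constants coincide exactly with the stated coefficient.
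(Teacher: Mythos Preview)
Your near-root/far-root decomposition and your analysis of the far root are both on target and match the paper's argument. The gap is in the near-root part. You propose to capture the leading $O(\epsilon)$ contribution by linearizing $g(\gamma,w)=wQ(\gamma-\tfrac{L}{2\gamma})+(1-w)Q(\gamma+\tfrac{L}{2\gamma})$ in the two variables $(\gamma,w)$, and to attribute the term $2\gamma(4\gamma+2\ln\tfrac{1-w_{\min}}{w_{\min}})^2$ to the \emph{quadratic} remainder of that Taylor expansion. But the second-order remainder of $g$ in $(\gamma,w)$ is $O(|\hat\gamma-\gamma|^2+|\hat w-w|^2)=O(\epsilon^2)=o(\epsilon)$, so it cannot supply an $O(\epsilon)$ term. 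In fact two of the five summands in the stated coefficient, namely $8\gamma_{\max}^2/\gamma$ and $2\gamma(4\gamma+2\ln\tfrac{1-w_{\min}}{w_{\min}})^2$, come not from perturbing $(\gamma,w)$ but from the asymmetry $\hat\sigma_1\neq\hat\sigma_2$, which shifts the \emph{near} root of the quadratic away from the equal-variance threshold by an $O(\epsilon)$ amount. Concretely, with $\hat s_i=\hat\sigma_i^{-2}$ the near root (after shifting by $\hat\mu_1$) is
\[
\tilde t_1=\frac{\hat\delta_\mu}{2}+\frac{1}{\hat\delta_\mu\hat s_2}\Bigl(\ln\tfrac{\hat w}{1-\hat w}-\ln\tfrac{\hat s_2}{\hat s_1}\Bigr)-\frac{f''(r)}{2}\,\frac{\hat\delta_\mu\hat s_2}{\hat s_2-\hat s_1}\,\varepsilon^2,
\]
where $f(x)=\sqrt{1+x}$ and $\varepsilon$ is the natural dimensionless discriminant. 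The term $\ln(\hat s_2/\hat s_1)$ is $O(\gamma_{\max}^2\epsilon)$ and produces $8\gamma_{\max}^2/\gamma$ after dividing by $\sigma$. The last term looks second order but is not: since $\hat s_2-\hat s_1=O(\epsilon)$ one has $\varepsilon=O(\epsilon)$ while the prefactor $\hat\delta_\mu\hat s_2/(\hat s_2-\hat s_1)=O(1/\epsilon)$, so the product is $O(\epsilon)$ and yields precisely $2\gamma(4\gamma+2\ln\tfrac{1-w_{\min}}{w_{\min}})^2\epsilon$. Your assertion that ``the discrepancy between evaluating $g$ at $(\hat\gamma,\hat w)$ and the genuine unequal-variance Bayes error near $\hat t$ is $\epsilon_2=o(\epsilon)$'' is therefore incorrect: that discrepancy is $O(\epsilon)$ and accounts for those two summands.

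The paper avoids this issue by not linearizing $g$ at all. It works directly with the threshold: it solves the quadratic, Taylor-expands $\sqrt{1+\varepsilon}$ to isolate $\tilde t_1$, bounds $|\tilde t_1-(t_{\rm opt}-\mu_1)|/\sigma_1$ term by term, and then converts threshold error to classification error via the Lipschitz bound $|Q(x)-Q(x')|\le|x-x'|$. If you want to keep your $g$-based framing you would need a third perturbation variable (say $\hat s_1-\hat s_2$) and track its first-order effect on the near root; otherwise, follow the paper and control $\tilde t_1$ directly.
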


\section{Experiments}
\label{sec:experiments}
Our experimental validation focuses on the performance of the algorithm in light of the theoretical bounds on the clustering error and the number of projections required to attain a given error. These empirical studies suggest that the algorithm requires a very small number of projections in order to achieve competitive clustering accuracy for a range of separability values underlying the difficulty of the high dimensional problem. We present our empirical results for the two cases of spherical and non-spherical Gaussians.

\subsection{Experiments with spherical Gaussians}

In the spherical Gaussians case, we examine 3 trade-offs:
\begin{enumerate}
\item accuracy vs.~separability, where we  validate our algorithms error rate by comparing it against the bounds provided by A Bayes classifier that uses the true parameters,
\item number of projections vs.~separability, where we show that the bound provided by Theorem \ref{thm:1} is a tight upper bound on  the expected value of our algorithms' required number of projections, and
\item accuracy vs. number of projections, where we show the fast convergence of our algorithm to the optimal error that can be achieved for a given  separability.
\end{enumerate}

\noindent \textbf{Accuracy vs. separability:} In this experiment we fix the number of projections to 50, and report the  true minimum clustering error achieved after performing these many random projections. We compare this error with the theoretical bound of the Bayes classifier error in the high dimension $Q(\sqrt{p}c)$, and for the 1-dimensional Bayes classifier $Q(c)$. The experiment is repeated for a range of separability values, and  the result is reported in Fig. \ref{fig:acc_vs_sep} for mixture realizations of size 4K data points in $\mathds{R}^3$, $\mathds{R}^{100}$ and $\mathds{R}^{1000}$ with equal probabilities, \ie $w_1=w_2=0.5$. It is observed that the error rates achieved by our algorithm are always better than the error bound in 1-dimension, however, as expected, it does not outperform the optimal Bayes classifier error in the high dimension  which has access to the underlying  parameters. Also, as expected,  the gap between the algorithm's performance and that of an optimal Bayes classifier grows as the dimension $p$ increases.

\begin{figure}[!htpb]
\begin{center}
    \psfig{figure=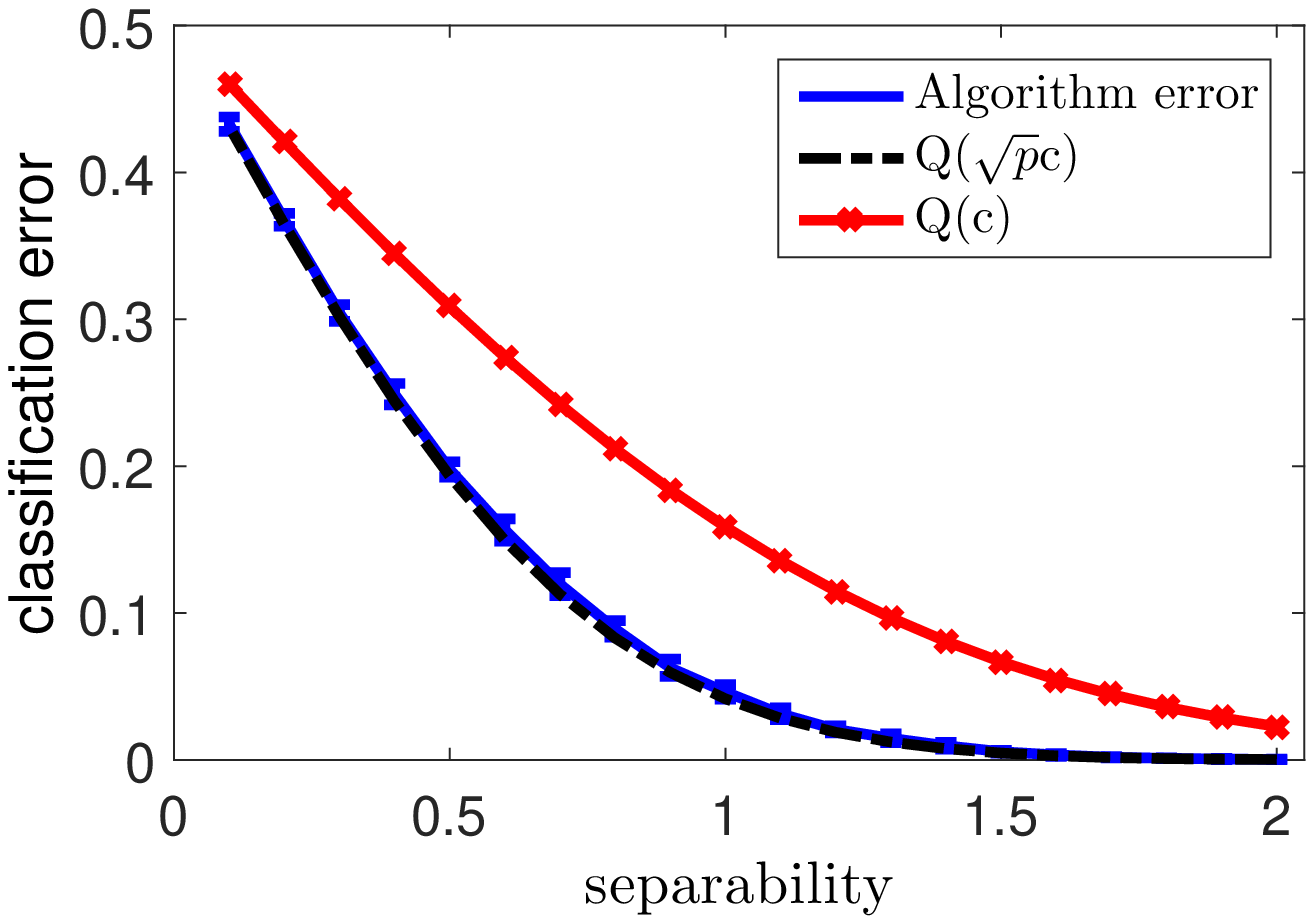,height=50mm,width=50mm}
    \psfig{figure=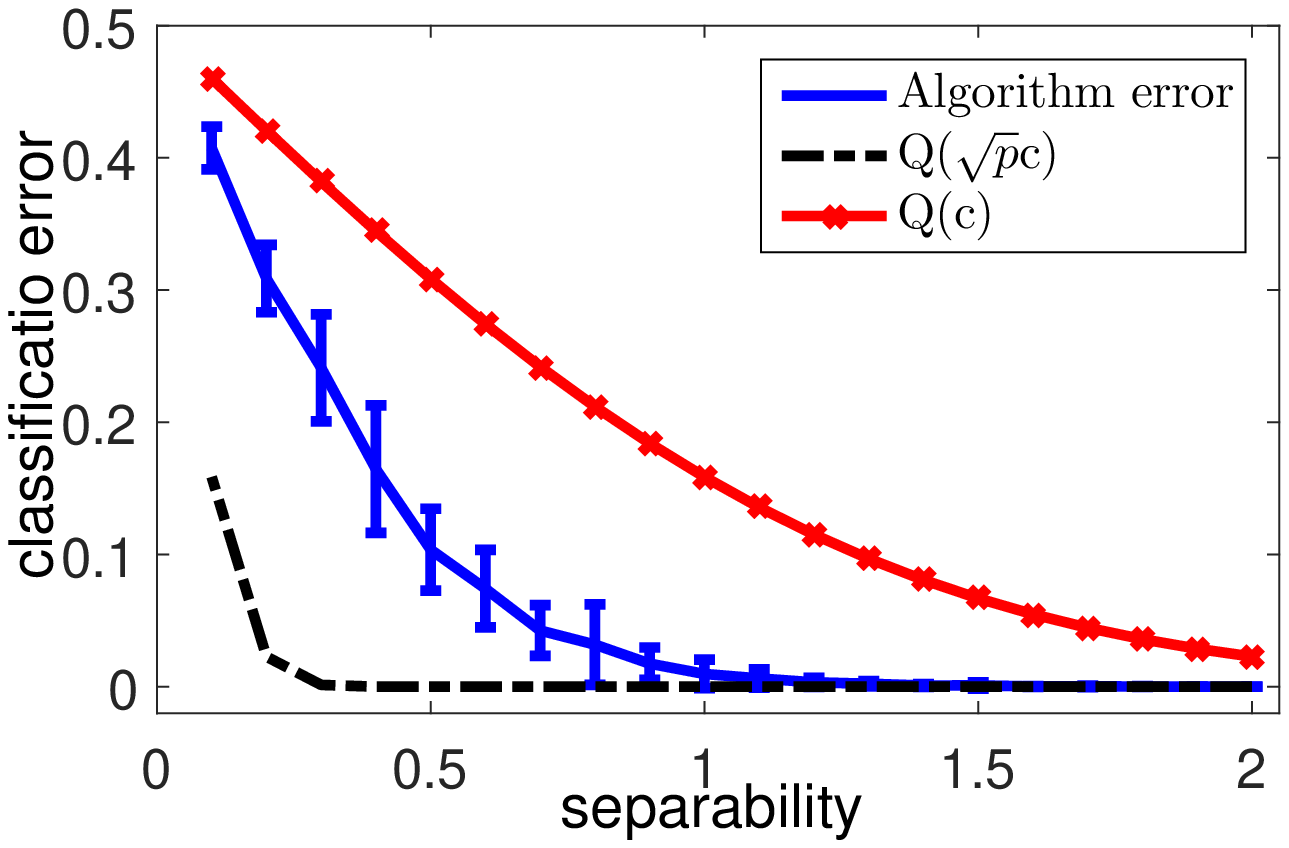,height=50mm,width=50mm}
        \psfig{figure=./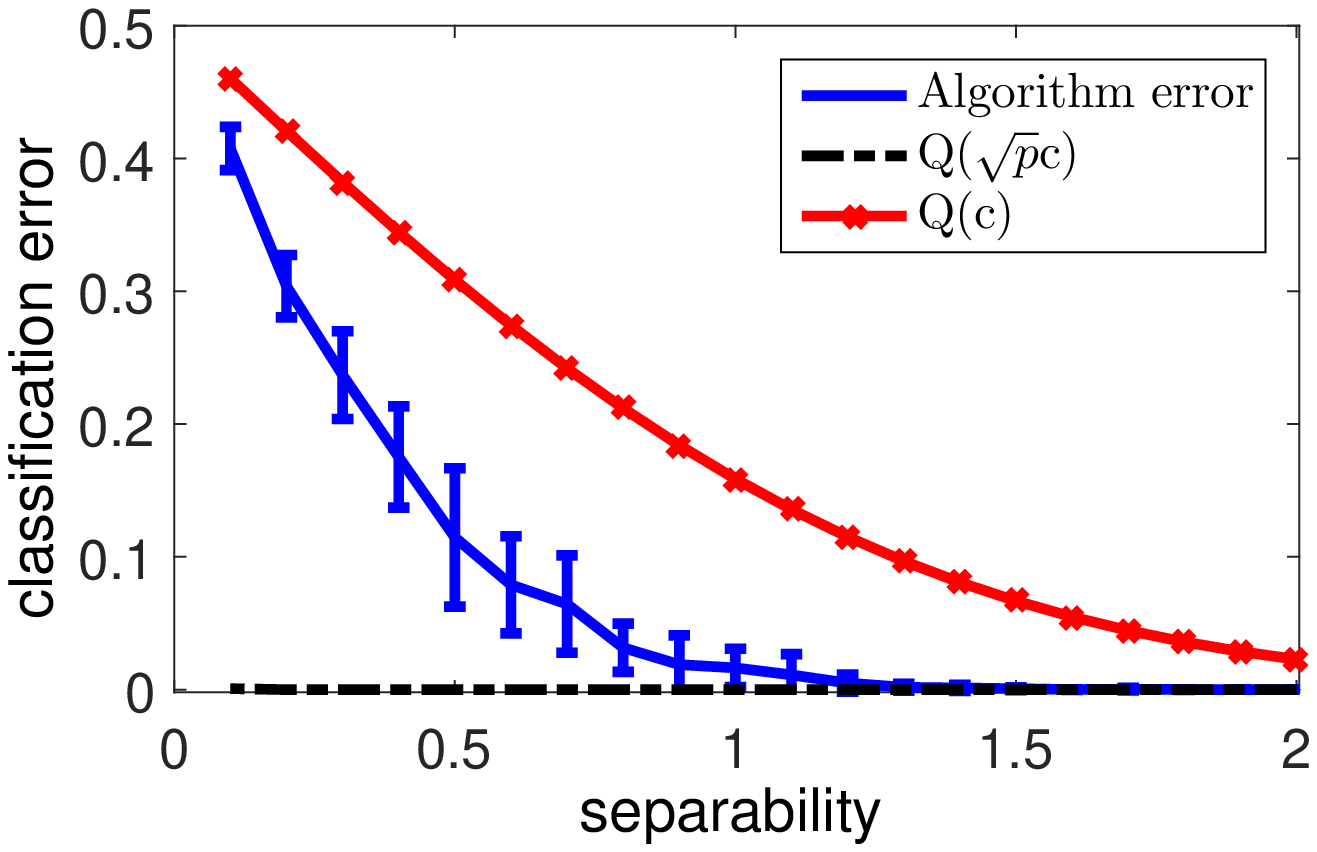,height=50mm,width=50mm}

\caption{Error vs. separability shown for realizations of 10K data points generated by an equal probability mixture of two Gaussians in \textbf{Left}: $\mathds{R}^3$, \textbf{Middle}: $\mathds{R}^{100}$, \textbf{Right}: $\mathds{R}^{1000}$  (note that $Q(\sqrt{p}c)\approx 10^{-7}$)}
 \label{fig:acc_vs_sep}
 \end{center}
 \end{figure}

\noindent \textbf{Number of projections vs. separability:} In this experiment, 10K points  are generated from a mixture of two spherical Gaussian distributions in $\mathds{R}^{100}$, with $w_1=w_2$ and $\sigma_1=\sigma_2$.  The user's desired error is fixed  at $e=20\%$. For different high-dimensional separabilities  $c$, we measure the number of projections used until  both the true error ${e_i}$ and the estimated error  ${\hat{e}_i}$  of one of the random 1-dimensional projections are less than $20\%$. Fig. \ref{fig:proj_vs_acc}-Left plots the number of projections scanned until the prescribed accuracy is attained for various $c$ values. We also plot the lower bound provided by Corollary \ref{cor:prob_csep}, as the inverse of the probability bound defined there  for the given dimension $p$, separability $c$, and $\gamma = Q^{-1}(e)$. It can be observed that the  mean number of projections to achieve the prescribed error is   tightly bounded by the upper bound provided by Corollary \ref{cor:prob_csep}.

\noindent \textbf{Error vs. number of projections:} In this experiment the separability $c$ is fixed at values $0.1,\;0.5,\;1$ and $2$. For each values we increase the number of projections scanned and generate a classification based on the best predicted error for each projection and select the minimal error classification. Fig. \ref{fig:proj_vs_acc}-Right reports the accuracy values for increasing number of projections and for various separability values. A saturation point is observed for each $c$ value at a different location agreeing with the error values reported in Fig. \ref{fig:acc_vs_sep}-middle for the respective separability values therein. The experiment points to the accurate the number of projections that achieves the minimal possible error for our algorithm. As seen the minimal error is approached closely after just a few projections, suggesting the speed and efficiency in which the algorithm can cluster the data to a prescribed error that reasonably corresponds to the high dimensional separability.

\begin{figure}[!htpb]
\begin{center}
    \psfig{figure=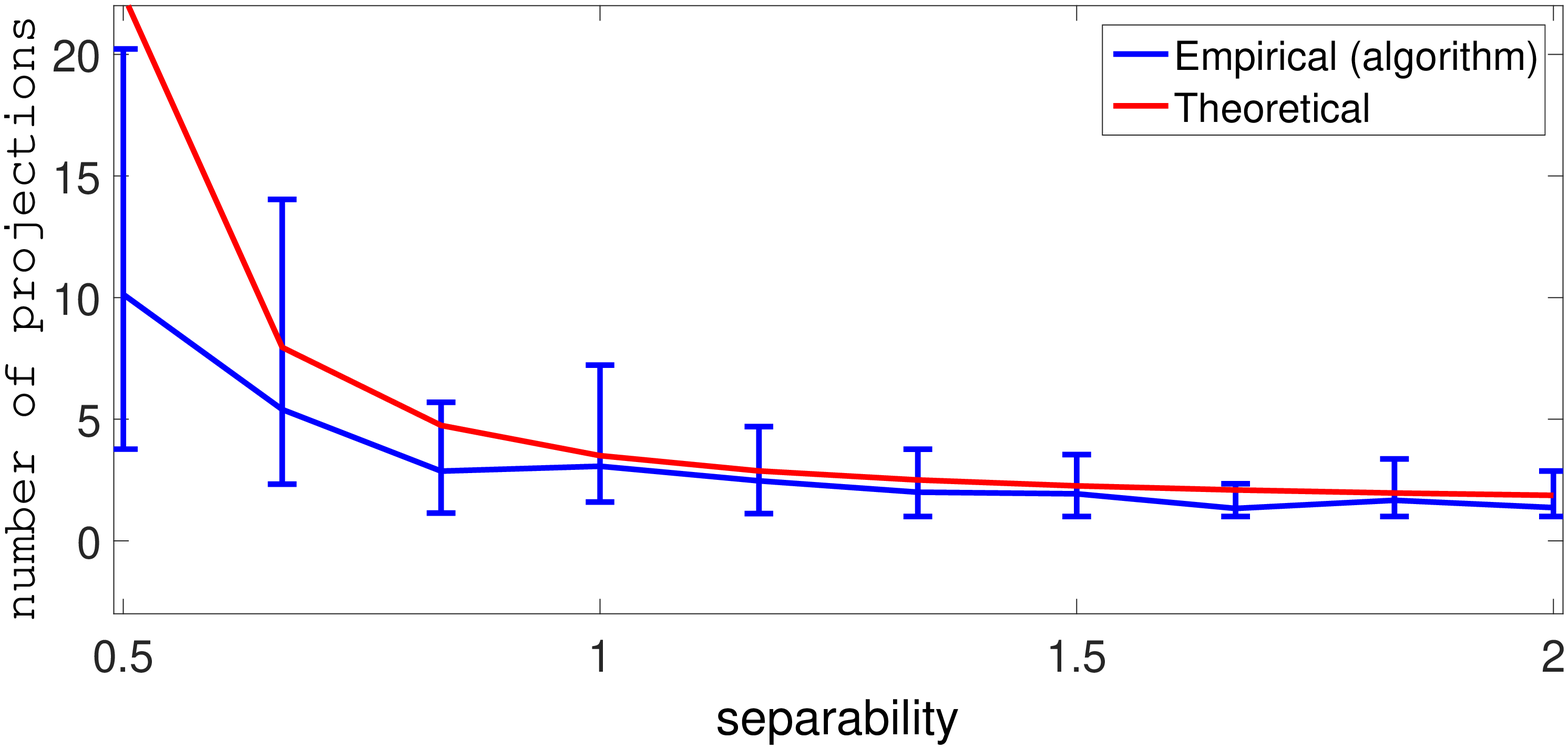,height=62mm,width=67mm}
    \psfig{figure=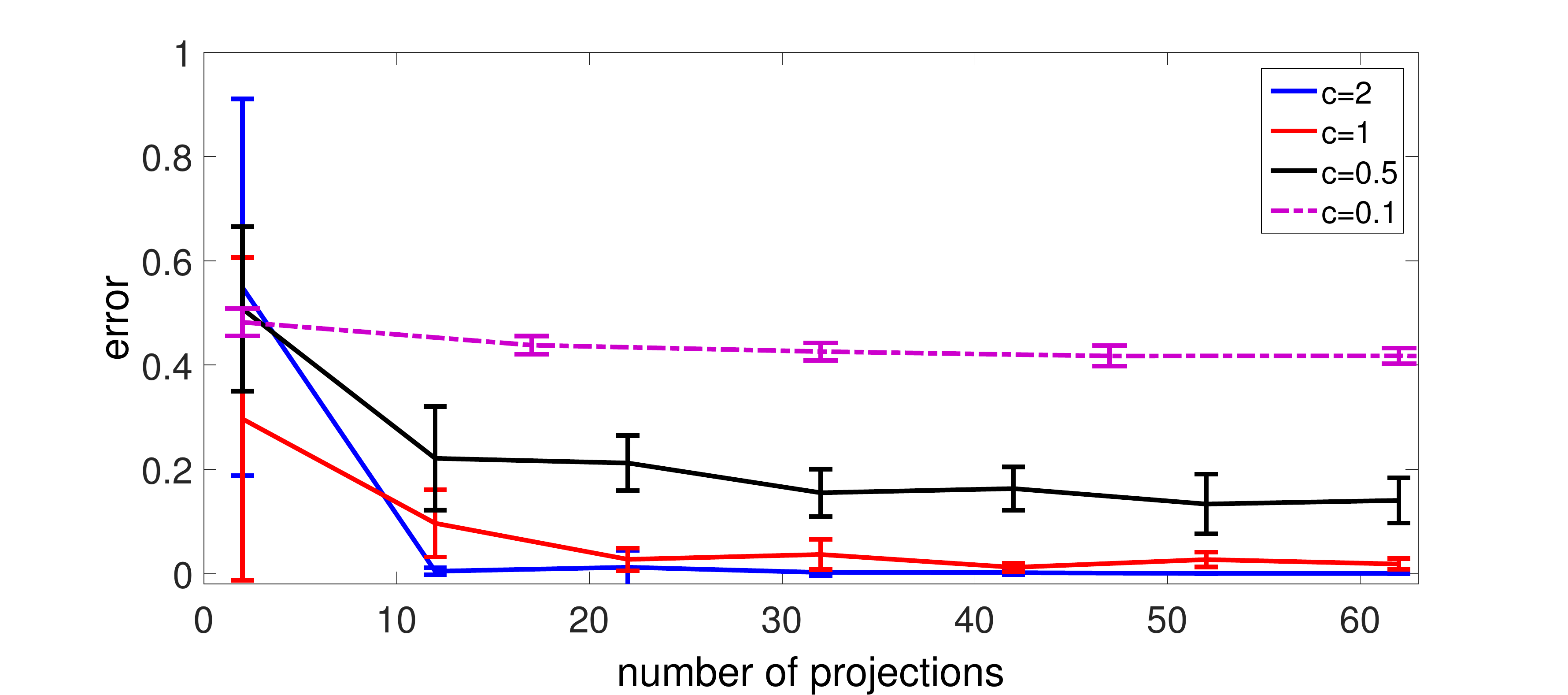,height=62mm,width=67mm}
\caption{Left: Projection order vs. separability to reach 20\% error. Algorithm performance compared with the theoretical upper-bound from Theorem \ref{thm:1}. Right: number of projections vs. accuracy for separability values 0.1, 0.5, 1, and 2. Data contains 10K points realization for a mixture in $\mathds{R}^{100}$.}
 \label{fig:proj_vs_acc}
 \end{center}
 \end{figure}

\subsection{Non-spherical Gaussians}
\noindent \textbf{Accuracy vs. rank:}
In the non-spherical case, we consider a mixture of Gaussians in  $R^{1000}$ and generate 10K points. The respective covariance matrices are generated by setting a fraction $\zeta$ of the dimension to be populated by points of both Gaussians. Then the same fraction $\zeta$  is used to uniformly sample another subspace for each Gaussian to be populated as well. In this experiment, we fix the high-dimensional separability as $c=0.5$, but change $\zeta$ so that the summation of the covariance matrices rank $r=\rank(\Sigma_1+\Sigma_2)$ is changed accordingly.
For each value of $r$,  we generate the same number of random projections and record the best error attained in 1-dimension by exploring 50 projections at most.

%Our empirical validation involves the respective Theorem \ref{thm:2} and the derived corollary \ref{cor:10}.
First, we examine the error as function of the rank of the covariance summation matrix ($\Sigma_1+\Sigma_2$). Fig. \ref{fig:nonSpher}-Left demonstrates this error. We note that as the matrix approaches the full rank (with equal variance in the populated dimensions) the error of our algorithm approaches the error 0.1 attained at the case of spherical Gaussians for $c=0.5$.
%The full regime of error reported here is within the range between the error of the ideal separator in the high dimension - $Q(\sqrt(n)0.5\approx 10^{-7}$) and the 1D separator error $Q(0.5)\approx 0.3$.

\noindent \textbf{Number of projections vs. rank:} Next we examine the number of projections needed to achieve a prescribed error as a function of the rank of the summation matrix for mixtures with the same parameters. We validate the bound provided by Theorem \ref{thm:2-r}. In Fig. \ref{fig:nonSpher}-Right we report results for a 4\% error prescribed.
\begin{figure}[!htpb]
\begin{center}
    \psfig{figure=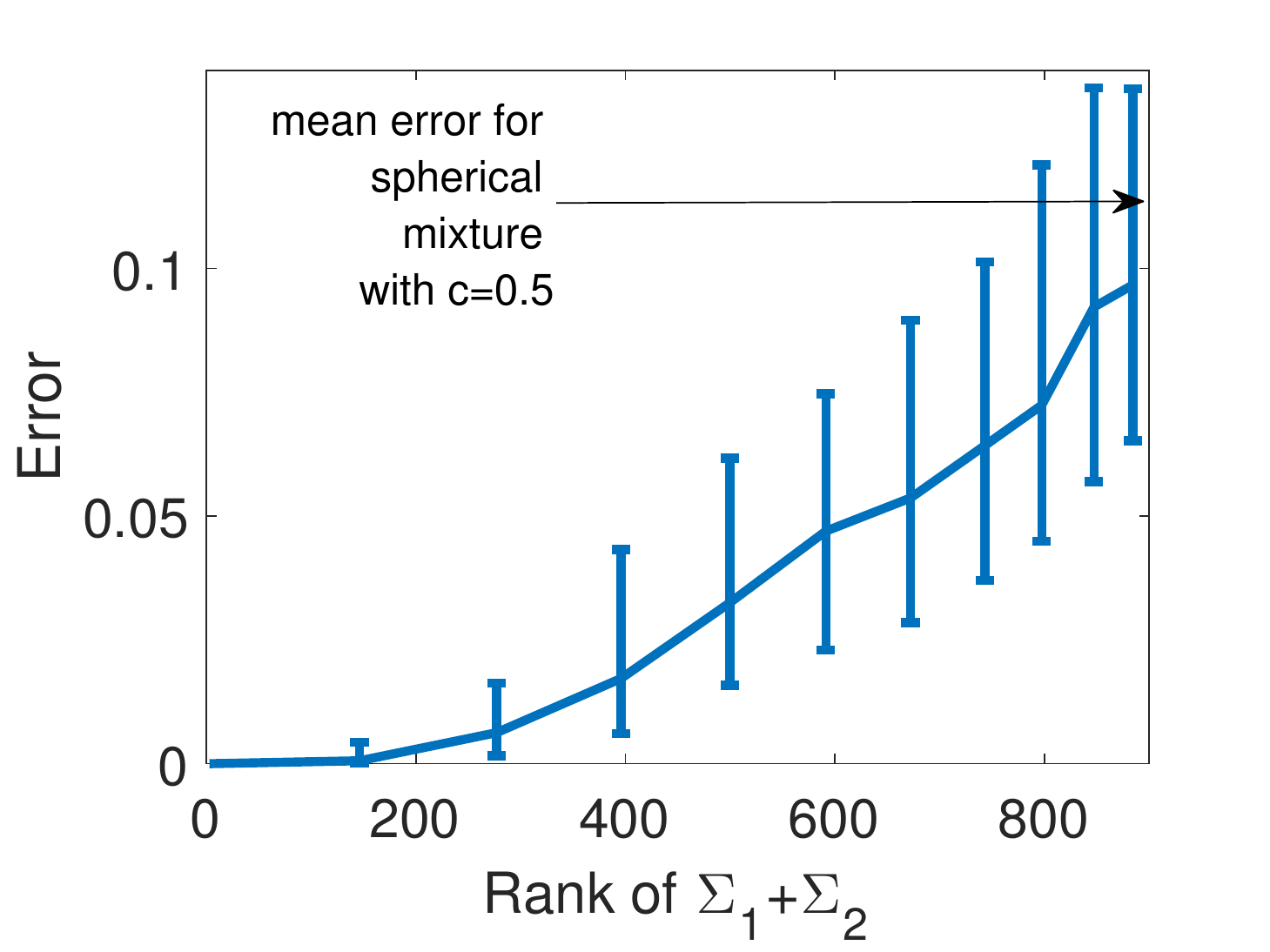,height=65mm,width=61mm}
    \psfig{figure=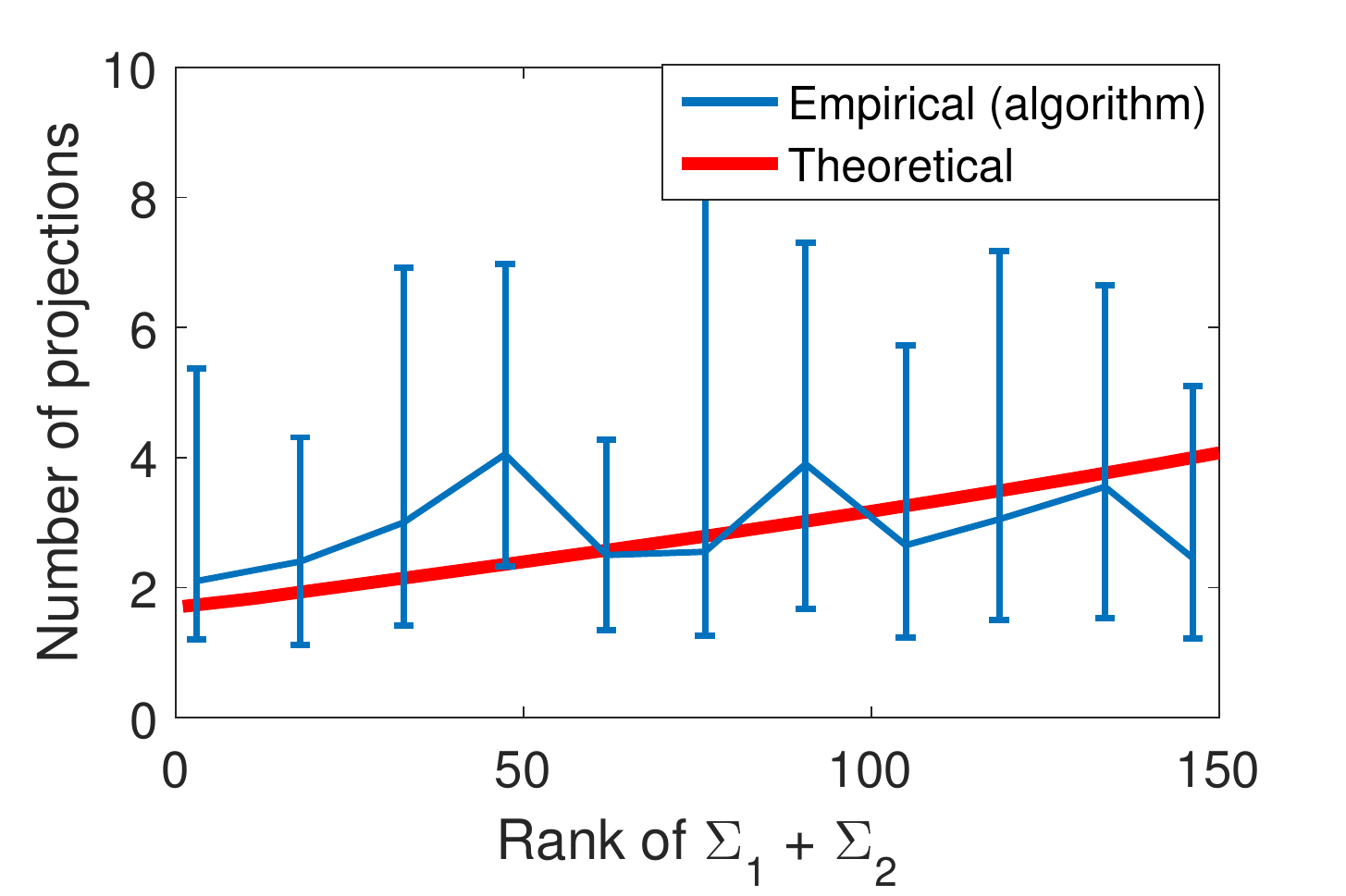,height=65mm,width=61mm}
\caption{Analysis of non-spherical mixtures with unequal covariances as a function of the rank of $(\Sigma_1+\Sigma_2)$. The data contains realization of mixtures of 10K points in $\mathds{R}^{1000}$, for $c=0.5$. \textbf{Left}: Algorithm's error vs. rank. \textbf{Right}: number of projections vs. rank compared with the theoretical bound provided by Theorem \ref{thm:2-r}.}
 \label{fig:nonSpher}
 \end{center}
 \end{figure}
%---------------------%----------------------------%-------------------------------%-------------------------------%
\section{Proofs}
\label{sec:proofs}
\subsection{Proof of Theorem \ref{thm:c-sep-vs-error}}
\begin{proof}
Since $\e(\alpha,\mv_1,\mv_2,\Sigma_1,\Sigma_2)$ corresponds to the error probability of an optimal classifier, the classification error of any (sub-optimal) classifier serves as an upper bound on $\e(\alpha,\mv_1,\mv_2,\Sigma_1,\Sigma_2)$. In particular, consider the hyperplane classifier that is orthogonal to $\mv_1-\mv_2$ and passes through ${\mv_1+\mv_2\over 2}.$ Characterizing the error probability of this specific classifier will yield our desired result. This hyperplane, because it is orthogonal to $\mv_1-\mv_2$  can be described as $(\mv_1-\mv_2)^T\xv=\beta$. Since it passes through $(\mv_1+\mv_2)/2$, we have
\[
\beta=(\mv_1-\mv_2)^T{\mv_1+\mv_2\over 2},
\]
or
\[
\beta={\|\mv_1\|^2-\|\mv_2\|^2\over 2}.
\]
This hyperplane labels every point $\xv\in\mathds{R}^p$ as follows:
\begin{enumerate}
\item If
\[
(\mv_1-\mv_2)^T\xv\geq {\|\mv_1\|^2-\|\mv_2\|^2\over 2},
\]
label $\xv$ as $1$.
\item If
\[
(\mv_1-\mv_2)^T\xv< {\|\mv_1\|^2-\|\mv_2\|^2\over 2},
\]
 label $\xv$ as $2$.
\end{enumerate}

 Let $\e_{h}(\alpha,\mv_1,\mv_2,\Sigma_1,\Sigma_2)$ denote the classification error of the described classifier. Clearly,
 \begin{align}
 \e(\alpha,\mv_1,\mv_2,\Sigma_1,\Sigma_2)\leq \e_h(\alpha,\mv_1,\mv_2,\Sigma_1,\Sigma_2).\label{eq:error-h-vs-opt}
 \end{align}
 To upper bound $\e_h(\alpha,\mv_1,\mv_2,\Sigma_1,\Sigma_2)$, note that under $(\mv_1-\mv_2)^T\Xv-\beta$, the first component of the original mixture of Gaussians in $\mathds{R}^p$ is  mapped into a Gaussian distribution in $\mathds{R}$ with mean
 \[
 (\mv_1-\mv_2)^T\mv_1-{\|\mv_1\|^2-\|\mv_2\|^2\over 2}={\|\mv_1-\mv_2\|^2\over 2},
 \]
 and variance
 \[
(\mv_1-\mv_2)^T\Sigma_1(\mv_1-\mv_2).
 \]
 Similarly, the second Gaussian is mapped to
 \[
 \Nc\Big(-{\|\mv_1-\mv_2\|^2\over 2},(\mv_1-\mv_2)^T\Sigma_2(\mv_1-\mv_2)\Big).
 \]
Clearly the weights  $(\alpha,1-\alpha)$ of the two Gaussian are preserved under this mapping.
Therefore, in summary, the classification error $\e_h(\alpha,\mv_1,\mv_2,\Sigma_1,\Sigma_2)$ can be written as
\begin{align}
\alpha Q({\|\mv_1-\mv_2\|^2\over 2\sqrt{(\mv_1-\mv_2)^T\Sigma_1(\mv_1-\mv_2)}} )+(1-\alpha)  Q({\|\mv_1-\mv_2\|^2\over 2\sqrt{(\mv_1-\mv_2)^T\Sigma_2(\mv_1-\mv_2)}} ).\label{eq:error-Q}
\end{align}
Note that, for $i=1,2$,
\[
(\mv_1-\mv_2)^T\Sigma_i(\mv_1-\mv_2)\leq \lambda_{\max}(\Sigma_i)\|\mv_1-\mv_2\|^2,
\]
or
\begin{align}
{\|\mv_1-\mv_2\|^2\over 2\sqrt{(\mv_1-\mv_2)^T\Sigma_i(\mv_1-\mv_2)}} \geq {\|\mv_1-\mv_2\|\over 2\sqrt{\lambda_{\max}(\Sigma_i)}}\geq  {\|\mv_1-\mv_2\|\over 2\sqrt{\lambda_{\max}(\Sigma_1)}+2\sqrt{\lambda_{\max}(\Sigma_2)}}.\label{eq:distance-in-1D}
\end{align}
On the other hand, since by assumption the two components  were $c$-separable in $\mathds{R}^p$,
\begin{align}
{\|\mv_1-\mv_2\|\over \sqrt{\lambda_{\max}(\Sigma_1)}+\sqrt{\lambda_{\max}(\Sigma_2)}}\geq c\sqrt{p}.\label{eq:connection-c-sep}
\end{align}
Therefore, since $Q(x)$ is a decreasing function of $x$,  combining \eqref{eq:error-Q}, \eqref{eq:distance-in-1D} and \eqref{eq:connection-c-sep} shows that
\[
\e_h(\alpha,\mv_1,\mv_2,\Sigma_1,\Sigma_2)\leq \alpha Q\left({c \over 2}\sqrt{p}\right)+(1-\alpha)Q\left({c \over 2}\sqrt{p}\right)= Q\left({c\over 2}\sqrt{p}\right).
\]
Finally, combining this upper bound with \eqref{eq:error-h-vs-opt} yields the desired result.

\end{proof}

%------------%------------%------------%------------%------------%------------
\subsection{Proof of Theorem \ref{thm:1}}
\begin{proof}
 Since the unitary vector $\Av/\|\Av\|$ is uniformly distributed over the unit sphere, we have
   \begin{align}\label{eq:1st-ineq-thm1}
    \P\Big(\Big|\langle{\mv_1-\mv_2\over \|\mv_1-\mv_2\|},{\Av\over \|\Av\|} \rangle\Big|> {\gamma(\sigma_1+\sigma_2)\over \|\mv_1-\mv_2\|}\Big)
&=  \P\Big(\Big|\langle (1,0,\ldots,0)^T,{\Av\over \|\Av\|} \rangle\Big|> {\gamma(\sigma_1+\sigma_2)\over \|\mv_1-\mv_2\|}\Big)\nonumber\\
&=  \P\Big({|A_1|\over \|\Av\|} > {\gamma(\sigma_1+\sigma_2)\over \|\mv_1-\mv_2\|}\Big).
 \end{align}
Therefore,  we are interested in deriving a lower bound on
 \begin{align}
 \P\Big({|A_1|\over \|\Av\|} > {\gamma(\sigma_1+\sigma_2)\over \|\mv_1-\mv_2\|}\Big).\label{eq:prob-simplified}
\end{align}
Note that, due to symmetry,  we have
\begin{align}
\E[{A_1^2\over \|\Av\|^2}]=\E[{A_2^2\over \|\Av\|^2}]=\ldots=\E[{A_p^2\over \|\Av\|^2}],\label{eq:exp-A1-An}
\end{align}
Moreover,
\begin{align}
\sum_{i=1}^p\E[{A_i^2\over \|\Av\|^2}]=\E\Big[{\sum_{i=1}^p  A_i^2\over \|\Av\|^2}\Big]=1.\label{eq:exp-sum-A1-An}
\end{align}
Therefore, combining \eqref{eq:exp-A1-An} and \eqref{eq:exp-sum-A1-An}, we have
\[
\E[{A_1^2\over \|\Av\|^2}]={1\over p}.
\]
So, intuitively, this  suggests that, if
\[
{\gamma(\sigma_1+\sigma_2)\over \|\mv_1-\mv_2\|}
\]
is much smaller that ${1\over p}$, the probability mentioned in \eqref{eq:prob-simplified}, which as we discussed earlier is an indicator of  separability, is not small.

On the other hand, replacing  $ {\gamma^2(\sigma_1+\sigma_2)^2 / \|\mv_1-\mv_2\|^2}$ by ${\alpha / n}$  in \eqref{eq:prob-simplified}, we have
\begin{align}
 \P\Big({A_1^2\over \|\Av\|^2} > {\gamma^2(\sigma_1+\sigma_2)^2\over \|\mv_1-\mv_2\|^2}\Big)&= \P\Big({A_1^2} >{\alpha \over p} \sum_{i=1}^pA_i^2\Big)\nonumber\\
 &= \P\Big((1-{\alpha\over p}){A_1^2} >{\alpha \over p} \sum_{i=2}^pA_i^2\Big).
\end{align}
But since $A_1,\ldots,A_p\stackrel{\rm i.i.d.}{\sim}\Nc(0,1)$, $\sum_{i=2}^p A_i^2$ has a chi-square distribution of order $n$. Then, for any  $\tau>0$, by Lemma 2 in \cite{JalaliM:11},
\begin{align}
\P\Big({1\over p-1}\sum_{i=2}^p A_i^2  \geq  1+\tau \Big)&\leq  {\rm e}^{-{p-1\over 2}(\tau-\log(1+\tau))}.
\end{align}
%and
%\begin{align}
%\P\Big({1\over p-1}\sum_{i=2}^p A_i^2 \leq  1-\tau \Big)&\leq  {\rm e}^{{p-1\over 2}(\tau+\log(1-\tau)}.
%\end{align}
Given $\tau>0$, define event $\Ec$ as
\[
\Ec\triangleq \{ {1\over p-1}\sum_{i=2}^p A_i^2 \leq  1+\tau \}.
\]
By the law of total probability,
\begin{align*}
\P\Big((1-{\alpha\over p}){A_1^2} >{\alpha \over p} \sum_{i=2}^pA_i^2\Big)&=\P\Big((1-{\alpha\over p}){A_1^2} >{\alpha \over p} \sum_{i=2}^pA_i^2,\Ec\Big)+\P\Big((1-{\alpha\over p}){A_1^2} >{\alpha \over p} \sum_{i=2}^pA_i^2,\Ec^c \Big)\nonumber\\
&\geq \P\Big((1-{\alpha\over p}){A_1^2} >{\alpha \over p} \sum_{i=2}^pA_i^2,\Ec\Big)\nonumber\\
&\geq  \P\Big((1-{\alpha\over p}){A_1^2} >{\alpha(n-1) \over p}(1+\tau) >{\alpha \over p} \sum_{i=2}^pA_i^2\Big)\nonumber\\
&\geq  \P\Big((1-{\alpha\over p}){A_1^2} >{\alpha(n-1) \over p}(1+\tau), 1+\tau >{1 \over p-1} \sum_{i=2}^pA_i^2\Big)\nonumber\\
&=  \P\Big((1-{\alpha\over p}){A_1^2} >\alpha(1-{1 \over p})(1+\tau)\Big)\P\Big({1 \over p-1} \sum_{i=2}^pA_i^2<1+\tau \Big),
\end{align*}
where the last line follows from the independence of $A_1$ and $(A_2,\ldots,A_p)$.
\end{proof}

%------------%------------%------------%------------%------------%------------
\subsection{Proof of Lemma \ref{lemma:5}}

\begin{proof}
Let $\Av=(A_1,\ldots,A_p)$ be generated $i.i.d.$~according to $\Nc(0,1)$. Then the separability of the two projected Gaussians under $\Av$ is equal to
\[
\gamma={|\langle \Av,\mv_1\rangle-\langle \Av,\mv_2\rangle|\over (\sigma_1+\sigma_2)\|\Av\| }.
\]
Therefore,
\begin{align*}
\gamma^2&={|\langle \Av,\mv_1-\mv_2\rangle|^2\over (\sigma_1+\sigma_2)^2 \|\Av\|^2 }\\
&={\|\mv_1-\mv_2\|^2\over (\sigma_1+\sigma_2)^2  }\left|\left\langle {\Av\over \|\Av\|},{\mv_1-\mv_2\over \|\mv_1-\mv_2\| }\right\rangle\right|^2.
\end{align*}
Since ${\Av\over \|\Av\|}$ is uniformly distributed under the unit sphere in $\mathds{R}^p$, in evaluating $\E{\gamma^2}$, without loss of generality we can assume that ${\mv_1-\mv_2\over \|\mv_1-\mv_2\| }=(1,0,\ldots,0)^T$. Therefore,
\begin{align*}
\E[\gamma^2]&={\|\mv_1-\mv_2\|^2\over (\sigma_1+\sigma_2)^2  }\E\left[\left|\left\langle {\Av\over \|\Av\|},{\mv_1-\mv_2\over \|\mv_1-\mv_2\| }\right\rangle\right|^2\right]\\
&={\|\mv_1-\mv_2\|^2\over (\sigma_1+\sigma_2)^2  }\E\left[{A_1^2\over  \|\Av\|^2}\right].
\end{align*}
But, as we showed in the proof of Theorem \ref{thm:1},
\[
\E\left[{A_1^2\over  \|\Av\|^2}\right]={1\over p}.
\]
Therefore, in summary,
\begin{align*}
\E[\gamma^2]&={\|\mv_1-\mv_2\|^2\over (\sigma_1+\sigma_2)^2 p  }=c^2.
\end{align*}
\end{proof}

%------------%------------%------------%------------%------------%------------
\subsection{Proof of Theorem \ref{thm:2}}
\begin{proof}
 Note that since $\Av^T\Sigma_1\Av\geq 0$ and $\Av^T\Sigma_2\Av\geq  0$, we always have
\[
\sqrt{\Av^T\Sigma_1\Av}+ \sqrt{\Av^T\Sigma_2\Av}\leq \sqrt{2\Av^T(\Sigma_1+\Sigma_2)\Av} .
\]
Therefore,
\begin{align}
&\P\Big(|\langle\mv_1-\mv_2,\Av\rangle|>\gamma (\sqrt{\Av^T\Sigma_1\Av}+ \sqrt{\Av^T\Sigma_2\Av}\;\;)\Big)\nonumber\\
&\geq \P\Big(|\langle\mv_1-\mv_2,\Av\rangle|>\gamma \sqrt{2\Av^T(\Sigma_1+\Sigma_2)\Av}\Big)\nonumber\\
&\geq \P\Big(|\langle\mv_1-\mv_2,\Av\rangle|>\gamma \sqrt{2\lambda_{\max}}\|\Av\|\Big),\label{eq:main-thm2-sum-sqrt}
\end{align}
where the last line follows because, for every $\Av$, $\Av^T(\Sigma_1+\Sigma_2)\Av\leq \lambda_{\max}\|\Av\|^2$.  Therefore, comparing \eqref{eq:main-thm2-sum-sqrt} with \eqref{eq:1st-ineq-thm1} reveals that the desired result follows similar to Theorem \ref{thm:1}, by replacing $\sigma_1+\sigma_2$ with $\sqrt{2\lambda_{\max}}$ .
\end{proof}

%------------%------------%------------%------------%------------%------------
\subsection{Proof of Lemma \ref{lemma:k-GMM}}
\begin{proof}
Define
\[
\phi_{(i,j)} \triangleq 2Q \left({\gamma_{\min}\over c_{(i,j)}}\sqrt{{1-{1 \over p}\over 1-{\gamma_{\min}^2\over  c^2_{(i,j)}p}}(1+\tau)}\;\right)(1- {\rm e}^{-{p-1\over 2}(\tau-\log(1+\tau))}).
\]
By the union bound,
  \begin{align}\label{union}
\P(\Bc^c) &\leq \sum_{i=1}^k\sum_{j=i+1}^k\P\left(\left\{\left|\left\langle\mv_i-\mv_j,{\Av\over \|\Av\|}\right\rangle\right|\leq  \gamma_{\min} (\sigma_i+\sigma_j)\right\} \right)\nonumber\\
&\stackrel{(a)}{\leq} \sum_{i=1}^k\sum_{j=i+1}^k (1-\phi_{(i,j)})\nonumber\\
&\leq {k^2\over 2}\left(1-\max_{(i,j)}\{\phi_{(i,j)}\}\right),
\end{align}
where $(a)$ follows from   Theorem \ref{thm:1} and the fact that for $i,j\in\{1,\ldots,k\}$
\[
{\gamma_{\min}^2(\sigma_i+\sigma_j)^2 p\over \|\mv_i-\mv_j\|^2}= {\gamma_{\min}^2\over c_{(i,j)}^2}.
\]
For $\tau=0.1$, $(\tau-\log(1+\tau))/2\geq 0.002$. Therefore, setting $\tau=0.1$ in \eqref{lemma:k-GMM} and noting that $Q$ function is a monotonically decreasing function of its argument, it follows that
\begin{align}
\phi_{(i,j)} &\geq  2Q \left({\gamma_{\min}\over c_{(i,j)}}\sqrt{{1.1(1-{1 \over p})\over (1-{\gamma^2_{\min}\over c^2_{(i,j)} p})}}\;\right)(1- {\rm e}^{-0.002 p})\nonumber\\
 &\geq 2Q \left({\gamma_{\min}\over c_{\min}}\sqrt{{1.1\over 1-{\gamma^2_{\min}\over c^2_{\min} p}}}\;\right)(1- {\rm e}^{-0.002 p}),\label{eq:bound-phi-ij}
 \end{align}
 where the last inequality holds  because
 \[
 {{(1-{1 \over p})\over (1-{\gamma^2_{\min}\over c^2_{(i,j)} p})}}\leq {1\over  1-{\gamma^2_{\min}\over c^2_{\min} p}}.
 \]
 Therefore, taking the maximum of the both sides of \eqref{eq:bound-phi-ij}, it follows that
 \[
 \max_{i,j}\phi_{(i,j)}\geq 2Q \left({\gamma_{\min}\over c_{\min}}\sqrt{{1.1\over 1-{\gamma^2_{\min}\over c^2_{\min} p}}}\;\right)(1- {\rm e}^{-0.002 p}).
 \]
\end{proof}
%------------%------------%------------%------------%------------%------------
\subsection{Proof of Theorem \ref{thm:2-r}}

\begin{proof}
 Note that since $\Av^T\Sigma_1\Av\geq 0$ and $\Av^T\Sigma_2\Av\geq  0$, we always have
\[
\sqrt{\Av^T\Sigma_1\Av}+ \sqrt{\Av^T\Sigma_2\Av}\leq \sqrt{2\Av^T(\Sigma_1+\Sigma_2)\Av} .
\]
Therefore,
\begin{align}
&\P\Big(|\langle\mv_1-\mv_2,\Av\rangle|>\gamma (\sqrt{\Av^T\Sigma_1\Av}+ \sqrt{\Av^T\Sigma_2\Av}\;\;)\Big)\nonumber\\
&\geq \P\Big(|\langle\mv_1-\mv_2,\Av\rangle|>\gamma \sqrt{2\Av^T(\Sigma_1+\Sigma_2)\Av}\Big)
\end{align}
Since $\Sigma_1+\Sigma_2$ is  always a semi-positive definite  matrix, it can be decomposed as
\[
\Sigma_1+\Sigma_2=P^TDP,
\]
where $P\in\mathds{R}^{p\times p}$ is an orthogonal matrix ($P^TP=I_p$), and $D\in\mathds{R}^{p\times p}$ is a diagonal matrix whose diagonal entries are non-negative. Let
\[
D=\diag(\l_1,\ldots, \l_p),
\]
where $\l_i\geq 0$, for all $i$. Using this decomposition, $\Av^T(\Sigma_1+\Sigma_2)\Av$ can be written as
\[
\Av^T(\Sigma_1+\Sigma_2)\Av=(P\Av)^TDP\Av.
\]
Let $\Bv\triangleq P\Av$. Since $P$ is an orthogonal matrix, $\Bv$ is still  distributed as $\Av$, \ie $B_1,\ldots,B_p$ are i.i.d.~$\Nc(0,1)$. By this change of variable, the probability mentioned in  \eqref{eq:prob-sep-condition-memory} can be written as
\begin{align}
\P&\Big(|\langle\mv_1-\mv_2,{\Av}\rangle|\geq  \gamma (\sqrt{\Av^T\Sigma_1\Av}+ \sqrt{\Av^T\Sigma_2\Av}\;)\Big)\nonumber\\
&\geq
\P\Big(|\langle\mv_1-\mv_2,P^{-1}\Bv\rangle|>\gamma \sqrt{2\Bv^TD\Bv}\Big)\nonumber\\
&=\P\Big(|\langle P(\mv_1-\mv_2), \Bv\rangle|>\sqrt{2\gamma^2} \| D^{1\over 2}\Bv\|\Big).\label{eq:m1-m2-A-to-B}
\end{align}
Note that sine by assumption $\rank(\Sigma_1+\Sigma_2)=r$,  $\Sigma_1+\Sigma_2$ has only $r$ non-zero eigenvalues. Define $\Cv\in\mathds{R}^p$ such that, for $i=1,\ldots,p$,
\[
C_i=B_i\ind_{\lambda_i\neq 0}.
\]
That is, for every $\lambda_i\neq 0$, $C_i$ is equal to $B_i$. For every $\lambda_i=0$, $C_i=0$.
Using this definition, $D^{1\over 2}\Bv=D^{1\over 2}\Cv$. Note that
\begin{align}
\|D^{1\over 2}\Cv\|\leq \sqrt{\lambda_{\max}}\|\Cv\|.\label{eq:B-to-C-norm}
\end{align}
Combining \eqref{eq:m1-m2-A-to-B} and \eqref{eq:B-to-C-norm}, it follows that
\begin{align}
\P\Big(|\langle\mv_1-\mv_2,{\Av}\rangle|\geq  \gamma (\sqrt{\Av^T\Sigma_1\Av}+ \sqrt{\Av^T\Sigma_2\Av}\;)\Big)&\geq \P\Big(|\langle P(\mv_1-\mv_2), \Bv\rangle|>\sqrt{2\gamma^2\lambda_{\max}} \|\Cv\|\Big)\nonumber\\
&= \P\Big(|\langle P(\mv_1-\mv_2), {\Bv\over \|\Bv\|}\rangle|>\sqrt{2\gamma^2\lambda_{\max}} {\|\Cv\|\over \|\Bv\|}\Big).
\end{align}
Given $\tau_1>0$ and $\tau_2>0$, define events $\Ec_1$ and $\Ec_2$ as
\[
\Ec_1\triangleq \{\|\Bv\|^2\geq p(1-\tau_1)\},
\]
and
\[
\Ec_2\triangleq \{\|\Cv\|^2\leq r(1+\tau_2)\},
\]
respectively. Note that, conditioned on $\Ec_1\cap\Ec_2$,
\[
{\|\Cv\|\over \|\Bv\|}\leq \sqrt{r(1+\tau_2)\over p(1-\tau_1)}.
\] Therefore,
\begin{align}
\P\Big(|\langle P(\mv_1-\mv_2), {\Bv\over \|\Bv\|}\rangle|\leq \sqrt{2\gamma^2\lambda_{\max}} {\|\Cv\|\over \|\Bv\|}\Big) =& \P\Big(|\langle P(\mv_1-\mv_2), {\Bv\over \|\Bv\|}\rangle|\leq \sqrt{2\gamma^2\lambda_{\max}} {\|\Cv\|\over \|\Bv\|},\Ec_1\cap\Ec_2\Big)\nonumber\\
&+\P\Big(|\langle P(\mv_1-\mv_2), {\Bv\over \|\Bv\|}\rangle|\leq \sqrt{2\gamma^2\lambda_{\max}} {\|\Cv\|\over \|\Bv\|},(\Ec_1\cap\Ec_2)^c\Big)\nonumber\\
\leq & \P\left(|\langle P(\mv_1-\mv_2), {\Bv\over \|\Bv\|}\rangle|\leq \sqrt{2\gamma^2\lambda_{\max}(1+\tau_2)r\over (1-\tau_1)p} \;\right)\nonumber\\
&+\P(\Ec_1^c)+\P(\Ec_2^c).
\end{align}
But, from Lemma 2 in \cite{JalaliM:11},
\begin{align}
\P(\Ec_1^c)\leq   {\rm e}^{{p\over 2}(\tau_1+\log(1-\tau_1))},
\end{align}
and
\[
\P(\Ec_2^c)\leq {\rm e}^{-{r\over 2}(\tau_2-\log(1+\tau_2))}.
\]
Also, note that since $P$ is an orthogonal matrix, $\|P(\mv_1-\mv_2)\|=\|\mv_1-\mv_2\|$. Therefore, the desired result follows by comparing $\P(|\langle P(\mv_1-\mv_2), {\Bv\over \|\Bv\|}\rangle|\leq \sqrt{2\gamma^2\lambda_{\max}(1+\tau_2)r\over (1-\tau_1)p} )$ with \eqref{eq:1st-ineq-thm1} and using the result of Theorem \ref{thm:1}.
 \end{proof}

\subsection{Proof of Corollary \ref{cor:num_proj4beta}}
\begin{proof}
We note that in (\ref{eq:prob_nonsphr_frnk}) for any $\tau>0$,  $\lim_{p \rightarrow \infty} (1- {\rm e}^{-{p-1\over 2}(\tau-\log(1+\tau))})= 1$. Since
\[c \triangleq{\|\mv_1-\mv_2\|\over  \sqrt{p} \left(\sqrt{\lambda_{\max}(\Sigma_1)}+\sqrt{\lambda_{\max}(\Sigma_2)}\right)},
\]
$\beta$ can be expressed as
\[\beta=\frac{2\gamma^2\lambda_{max}}{c^2(\sqrt{\lambda_{\max}(\Sigma_1)}+\sqrt{\lambda_{\max}(\Sigma_2)})}.
 \]
 Next, since $\lambda_{\max}(\Sigma_1+\Sigma_2)\leq \lambda_{\max}(\Sigma_1)+\lambda_{\max}(\Sigma_2)$ we observe that $\beta \leq \frac{2\gamma^2}{c^2}$,
for all p. Therefore
\[
\lim_{p \rightarrow \infty}\frac{1-\frac{1}{p}}{1-\frac{\beta}{p}}=1
\]
We obtain that for any $\tau>0$
\begin{align}
\lim_{p \rightarrow \infty}\P\Big({A_1^2} >\beta {(1-{1 \over p})\over (1-{\beta\over p})}(1+\tau)\Big)(1- {\rm e}^{-{p-1\over 2}(\tau-\log(1+\tau))}) =  \P\Big({A_1^2} >\beta (1+\tau)\Big).
\end{align}
Since $\tau$ is a free parameter, we have
\begin{align}
\lim_{p \rightarrow \infty}d(\gamma) \leq \frac{1}{\P\Big({A_1^2} >\beta \Big)} = \frac{1}{2Q(\sqrt{\beta})}
\end{align}
\end{proof}

\subsection{Proof of Lemma \ref{lemma:est-error-to-class-error}}
\begin{proof}
As shown in the proof of Lemma \ref{lemma:lb-on-e-opt}, the optimal Bayesian classifier breaks the real line at $t_{\rm opt}={\mu_1+\mu_2\over 2}-{\sigma^2\over (\mu_1-\mu_2)}\ln {w\over 1-w},$ and achieves a classification error equal to
\begin{align*}
e_{\rm opt}&=wQ\left({t_{\rm opt}-\mu_1\over \sigma_1}\right)+(1-w)Q\left({\mu_2-t_{\rm opt}\over \sigma_2}\right)\nonumber\\
&=wQ\left(\gamma+{1\over 2\gamma}\ln {w\over 1-w}\right)+(1-w)Q\left(\gamma-{1\over 2\gamma}\ln {w\over 1-w}\right).
\end{align*}
On the other hand, without having access to the exact parameters, a clustering algorithm that operates based on   the estimated values $(\hat{\mu}_1,\hat{\mu}_2,\hat{\sigma}_1,\hat{\sigma}_1,\hat{w})$  finds $\hat{t}_1$ and $\hat{t}_2$, which are the solutions of  ${\hat{w}\over \sqrt{2\pi \hat{\sigma}_1^2}}{\rm e}^{-(t-\muh_1)^2 \over 2\hat{\sigma}_1^2}={1-\hat{w}\over \sqrt{2\pi \hat{\sigma}_2^2}}{\rm e}^{-(t-\muh_2)^2 \over 2\hat{\sigma}_2^2}$, and puts the decision boundary points at these two points.  For $i=1,2$, let
\[
\tilde{t}_i\triangleq \hat{t}_i-\muh_1.
\]
and
\[
(s_i,\sh_i)\triangleq \left({1\over \sigma_i^2},{1\over \hat{\sigma}_i^2}\right).
\]
Note that $\sigma_1=\sigma_2$ by assumptions. Therefore $s_1=s_2$. Let
\[
s\triangleq s_1=s_2,
\]
and
\[
\Dhmu\triangleq \muh_2-\muh_1.
\]
Using the mentioned change of variable, $(\tilde{t}_1,\tilde{t}_2)$ are the solutions of the following second order equation
\begin{align}
(\sh_1-\sh_2)x^2+2\Dhmu\sh_2x-\Dhmu^2\hat{s}_2+2\ln {\hat{s}_2\over \hat{s}_1}-2\ln{\hat{w}\over 1-\hat{w}}=0,\label{eq:2nd-order-s1-equal-s2}
\end{align}
Assume that $\tilde{t}_1$ denotes the point that approximates  $t_{\rm opt}-\mu_1$. A clustering algorithm that decides  based on these estimated boundary points estimates its achieved error as $\hat{e}_{\rm opt}$, where, if $\hat{\sigma}_1\leq \hat{\sigma}_2$,
\[
\hat{e}_{\rm opt}=\hat{w}\left(Q\left({\tilde{t}_1\over \hat{\sigma}_1}\right)+Q\left(-{\tilde{t}_2\over \hat{\sigma}_1}\right)\right)+(1-\hat{w})\left(Q\left({\Dhmu-\tilde{t}_1\over \hat{\sigma}_2}\right)-Q\left({\Dhmu-\tilde{t}_2\over \hat{\sigma}_2}\right)\right),
\]
and if $\hat{\sigma}_1> \hat{\sigma}_2$,
\[
\hat{e}_{\rm opt}=\hat{w}\left(Q\left({\tilde{t}_1\over \hat{\sigma}_1}\right)-Q\left({\tilde{t}_2\over \hat{\sigma}_1}\right)\right)+(1-\hat{w})\left(Q\left({\Dhmu-\tilde{t}_1\over \hat{\sigma}_2}\right)+Q\left({\tilde{t}_2-\Dhmu\over \hat{\sigma}_2}\right)\right).
\]
Since for all $x$ and $x'$, $|Q(x)-Q(x')|\leq |x-x'|$,  if  $\hat{\sigma}_1\leq \hat{\sigma}_2$,
\begin{align*}
|e_{\rm opt}-\hat{e}_{\rm opt}|&\leq  |w-\hat{w}|+\left|{\tilde{t}_1\over \hat{\sigma}_1}-{t_{\rm opt}-\mu_1\over \sigma_1}\right|+\left|{\Dhmu-\tilde{t}_1\over \hat{\sigma}_2}-{\mu_2-t_{\rm opt}\over \sigma_2}\right|+Q\left(-{\tilde{t}_2\over \hat{\sigma}_1}\right),
\end{align*}
and if  $\hat{\sigma}_1> \hat{\sigma}_2$,
\begin{align*}
|e_{\rm opt}-\hat{e}_{\rm opt}|&\leq  |w-\hat{w}|+\left|{\tilde{t}_1\over \hat{\sigma}_1}-{t_{\rm opt}-\mu_1\over \sigma_1}\right|+\left|{\Dhmu-\tilde{t}_1\over \hat{\sigma}_2}-{\mu_2-t_{\rm opt}\over \sigma_2}\right|+Q\left({\tilde{t}_2-\Dhmu\over \hat{\sigma}_2}\right).
\end{align*}
Note that, by the triangle inequality,
\begin{align}
\left|{\tilde{t}_1\over \hat{\sigma}_1}-{t_{\rm opt}-\mu_1\over \sigma_1}\right|&\leq {|\tilde{t}_1-{t_{\rm opt}+\mu_1}|\over \hat{\sigma}_1}+|t_{\rm opt}-\mu_1|\left|{1\over \hat{\sigma}_1}-{1\over \sigma_1}\right|.
\end{align}
Similarly,
\begin{align}
\left|{\Dhmu-\tilde{t}_1\over \hat{\sigma}_2}-{\mu_2-t_{\rm opt}\over \sigma_2}\right| \leq  {|\Dhmu-\tilde{t}_1-\mu_2+t_{\rm opt}|\over \hat{\sigma}_2}+|t_{\rm opt}-\mu_2|\left|{1\over \hat{\sigma}_2}-{1\over \sigma_2}\right|.
\end{align}
But, by assumption, $|\sigma_i^2-\hat{\sigma}_i^2|\leq \epsilon\dmu^2$. Therefore, $\hat{\sigma}_i\leq \sigma_i\sqrt{1+\epsilon\dmu^2/\sigma_i^2}=\sigma_i\sqrt{1+4\gamma^2\epsilon}\leq \sigma_i(1+2\gamma^2\epsilon)$. Similarly, $\hat{\sigma}_i\geq  \sigma_i\sqrt{1-4\gamma^2 \epsilon}\geq \sigma_i(1-4\gamma^2 \epsilon)$. Hence, $|\sigma_i-\hat{\sigma}_i|\leq 4c^2\epsilon$ and
\[
\left|{1\over \hat{\sigma}_i}-{1\over \sigma_i}\right|\leq {4\gamma^2\epsilon \over (1-4\gamma^2\epsilon) \sigma_i}.
\]
Also, note that since $t_{\rm opt}={\mu_1+\mu_2\over 2}-{\sigma_1^2\over (\mu_1-\mu_2)}\ln {w\over 1-w},$ for $i=1,2$,
\begin{align*}
{|t_{\rm opt}-\mu_i|\over \sigma_1}\leq \gamma+{1\over 2\gamma}\ln {1-w_{\min}\over w_{\min}}.
\end{align*}
In summary,  if  $\hat{\sigma}_1\leq \hat{\sigma}_2$,
\begin{align}
|e_{\rm opt}-\hat{e}_{\rm opt}|&\leq  \left( 1+ 4\gamma^3+2\gamma\ln {1-w_{\min}\over w_{\min} }\right)\epsilon+ {1\over {\sigma}_1}|\tilde{t}_1-{t_{\rm opt}+\mu_1}|+Q\left(-{\tilde{t}_2\over \hat{\sigma}_1}\right)+o(\epsilon),\label{eq:e-min-e-opt-1}
\end{align}
and if  $\hat{\sigma}_1> \hat{\sigma}_2$,
\begin{align}
|e_{\rm opt}-\hat{e}_{\rm opt}|&\leq  \left( 1+ 4\gamma^3+2\gamma\ln {1-w_{\min}\over w_{\min} }\right)\epsilon+ {1\over {\sigma}_1}|\tilde{t}_1-{t_{\rm opt}+\mu_1}|+ Q\left({\tilde{t}_2-\Dhmu\over \hat{\sigma}_2}\right)+o(\epsilon).\label{eq:e-min-e-opt-2}
\end{align}

In the rest of the proof, we mainly focus on bounding $|\tilde{t}_1-{t_{\rm opt}+\mu_1}|$. Since $\tilde{t}_1$ and $\tilde{t}_2$ are the solutions of \eqref{eq:2nd-order-s1-equal-s2}, they can be computed as
\[
\tilde{t}_1,\tilde{t}_2={-\Dhmu\sh_2 \pm\sqrt{\Delta}\over (\sh_1-\sh_2)},
\]
where
\begin{align*}
\Delta&=(\Dhmu\sh_2)^2-(\sh_1-\sh_2)\left(-\Dhmu^2\hat{s}_2+2\ln {\hat{s}_2\over \hat{s}_1}-2\ln{\hat{w}\over 1-\hat{w}}\right).
\end{align*}
Define $\upsilon$  as
%\begin{align}
%\rho_i\triangleq \hat{\sigma}_i-\sigma_i,\label{eq:def-sigma-hat}
%\end{align}
%and
\begin{align}
\upsilon\triangleq \muh_2-\muh_1-(\mu_2-\mu_1).\label{eq:def-mu-hat}
\end{align}
Note that since by assumption $|\muh_i-\mu_i|\leq \epsilon \dmu$, where
\[
\dmu \triangleq |\mu_2-\mu_1|,
\]
we have
\begin{align*}
|\upsilon|\leq 2\dmu\epsilon,
\end{align*}
Define $\tau_1$ and $\tau_2$ as
\[
\tau_i\triangleq \sh_i-s.
\]
 Note that
\begin{align}
|\tau_i|\leq {\dmu^2\epsilon\over \hat{\sigma}_i^2\sigma_i^2}\leq {\dmu^2\epsilon\over \sigma_i^2(\sigma_i^2-\epsilon\dmu^2)}=  {4 \gamma^2 s \epsilon\over 1-4 \epsilon \gamma^2}\leq  {4 \gamma^2 s\epsilon \over 1-4 \epsilon \gamma_{\max}^2}\leq 8 \gamma^2 s\epsilon,\label{eq:bd-tau-i}
\end{align}
where the last inequality holds as long as $4  \gamma_{\max}^2\epsilon\leq {1\over 2}$.

Define $\varepsilon$ as
\begin{align}
\varepsilon\triangleq {\sh_2-\sh_1\over (\Dhmu \sh_2)^2}\left(-\Dhmu^2\hat{s}_2+2\ln {\hat{s}_2\over \hat{s}_1}-2\ln{\hat{w}\over 1-\hat{w}}\right).
\label{eq:def-varepsilon}
\end{align}
%Further, assume that the error in estimating the parameters is small enough, such that
%\begin{align}
%\muh_1\sh_1<\muh_2\sh_2.\label{eq:cond-error-mu-times-s}
%\end{align}
Then, using this definition,  it follows from \eqref{eq:2nd-order-s1-equal-s2} that
\begin{align}
\tilde{t}_1={\Dhmu\sh_2 \over \sh_2-\sh_1}\left(1
-\sqrt{1+\varepsilon}\;\right),\label{eq:that-1}
\end{align}
and
\begin{align}
\tilde{t}_2={\Dhmu\sh_2 \over \sh_2-\sh_1}\left(1
+\sqrt{1+\varepsilon}\;\right).\label{eq:that-2}
\end{align}
Define function  $f$ as $f(x)=\sqrt{1+x}$.
Then, using the Taylor expansion of function $f$ around zero,
\begin{align}
f(\varepsilon)=1+{1\over 2}\varepsilon+{f''(r)\over 2}\varepsilon^2,\label{eq:taylor-expansion}
\end{align}
where $|r|\leq |\varepsilon|$. Note that
%Combining \eqref{eq:taylor-expansion} with \eqref{eq:that-1} and \eqref{eq:that-2}, and noting that
\begin{align}
{\Dhmu\sh_2 \over \sh_2-\sh_1} \varepsilon&={1\over \Dhmu \sh_2}\left(-\Dhmu^2\hat{s}_2+2\ln {\hat{s}_2\over \hat{s}_1}-2\ln{\hat{w}\over 1-\hat{w}}\right).\nonumber\\
&=-\Dhmu +{2 \over \Dhmu \sh_2}\left(\ln {\hat{s}_2\over \hat{s}_1}-\ln{\hat{w}\over 1-\hat{w}}\right).
%{s(\hat{\mu}_1^2-\hat{\mu}_2^2)+\eta-2\ln{\hat{w}\over 1-\hat{w}} \over \muh_2\sh_2-\muh_1\sh_1},
\end{align}
%and
%\begin{align*}
%{(\muh_1\sh_1-\muh_2\sh_2) \over \tau_1-\tau_2}\varepsilon^2&={(s(\hat{\mu}_1^2-\hat{\mu}_2^2)+\eta-2\ln{\hat{w}\over 1-\hat{w}})^2 \over (\muh_1\sh_1-\muh_2\sh_2)^3}(\tau_1-\tau_2),
%\end{align*}
Therefore, we have
\begin{align}
\tilde{t}_1&={\Dhmu\over 2} +{1 \over \Dhmu \sh_2}\left(\ln{\hat{w}\over 1-\hat{w}}-\ln {\hat{s}_2\over \hat{s}_1}\right)-{f''(r)\over 2}\left({\Dhmu\sh_2 \over \sh_2-\sh_1}\right)\varepsilon^2.\label{eq:derived-t-tilde-1}
%\\&{s(\hat{\mu}_1^2-\hat{\mu}_2^2)+\eta-2\ln{\hat{w}\over 1-\hat{w}} \over 2(\muh_1\sh_1-\muh_2\sh_2)}-f''(r){(s(\hat{\mu}_1^2-\hat{\mu}_2^2)+\eta-2\ln{\hat{w}\over 1-\hat{w}})^2 \over 2(\muh_1\sh_1-\muh_2\sh_2)^3}(\tau_1-\tau_2),\label{eq:that-1-simplified}
\end{align}
and
\begin{align}
\tilde{t}_2={2\Dhmu\sh_2 \over \sh_2-\sh_1}-\tilde{t}_1.\label{eq:t-tilde2}
\end{align}
As a reminder  $t_{\rm opt}={\mu_1+\mu_2\over 2}-{1\over (\mu_1-\mu_2)s}\ln {w\over 1-w}$. Therefore, from \eqref{eq:derived-t-tilde-1}, we have
\begin{align}
\left|\tilde{t}_1-t_{\rm opt}+\mu_1\right| &= \left|{\Dhmu\over 2} +{1 \over \Dhmu \sh_2}\left(\ln{\hat{w}\over 1-\hat{w}}-\ln {\hat{s}_2\over \hat{s}_1}\right)-{f''(r)\over 2}\left({\Dhmu\sh_2 \over \sh_2-\sh_1}\right)\varepsilon^2-t_{\rm opt}+\mu_1\right|\nonumber\\
&\leq   \dmu \epsilon+ \left|{1 \over \Dhmu \sh_2}\ln{\hat{w}\over 1-\hat{w}}-{1\over s (\mu_2-\mu_1)}\ln {w\over 1-w}\right|+\left|{1 \over \Dhmu \sh_2}\ln {\hat{s}_2\over \hat{s}_1}\right|+\left|{f''(r)\over 2}\left({\Dhmu\sh_2 \over \sh_2-\sh_1}\right)\varepsilon^2\right|.\label{eq:diff-t-opt-t-tilde}
\end{align}
We next bound the error terms in \eqref{eq:diff-t-opt-t-tilde}. Note that, by the triangle inequality,
\begin{align}
\left|{1 \over \Dhmu \sh_2}\ln{\hat{w}\over 1-\hat{w}}-{1\over s (\mu_2-\mu_1)}\ln {w\over 1-w}\right|&=\left|{1 \over \Dhmu \sh_2}\left(\ln{\hat{w}\over 1-\hat{w}}-\ln{w\over 1-w}+\ln{w\over 1-w}\right)-{1\over s (\mu_2-\mu_1)}\ln {w\over 1-w}\right|\nonumber\\
&\leq {1 \over |\Dhmu| \sh_2}\left| \ln{\hat{w}\over 1-\hat{w}}-\ln{w\over 1-w} \right|+\left|\ln {w\over 1-w}\right|\left| {1 \over \Dhmu \sh_2}- {1\over s (\mu_2-\mu_1)}\right|\label{eq:term000}
\end{align}

Since $|\mu_i-\muh_i|\leq \epsilon\dmu$, $|\Dhmu|=|\muh_1-\muh_2|\geq \dmu(1-2\epsilon)$. Therefore, we have
\begin{align}
{1\over |\Dhmu|}&\leq {1\over \dmu(1-2\epsilon)}.\label{eq:term1-bd}
\end{align}
Let $g(w)=\ln {w\over 1-w}$. Then, $g'(w)={1\over w}+{1\over 1-w}$. Therefore, since by assumption, $|w-\hat{w}|\leq \epsilon$, we have
\begin{align}
\left|\ln{\hat{w}\over 1-\hat{w}}-\ln{{w}\over 1-{w}}\right|\leq \left({1\over w_{\min}}+{1\over 1-w_{\min}}\right)\epsilon\leq {2\epsilon\over w_{\min}}.\label{eq:term2-bd}
\end{align}
Note that since $w\in(w_{\min},0.5)$, and since $w\over 1-w$ is an increasing function of $w$ in this interval, we have
\begin{align}
|\ln{w\over 1-w}|\leq \ln {1-w_{\min}\over w_{\min}}.\label{eq:term3-bd}
\end{align}
Note that
\[
\ln{\sh_2\over \sh_1}=\ln{s+\tau_1\over s+\tau_2}=\ln{1+\tau_1/s\over 1+\tau_2/s}.
\]
Hence,
\begin{align}
\left|\ln{\sh_2\over \sh_1}\right|&\leq \ln{1+|{\tau_1\over s}|\over 1-|{\tau_2\over s}|}\leq\ln  {1+{4\epsilon \gamma^2  \over 1-4 \epsilon \gamma_{\max}^2}\over 1- {4\epsilon \gamma^2  \over 1-4 \epsilon \gamma_{\max}^2}}=\ln  {1\over   1-8 \epsilon \gamma_{\max}^2}\leq {8\gamma_{\max}^2\epsilon\over   1-8 \epsilon \gamma_{\max}^2}\leq 16\gamma_{\max}^2\epsilon, \label{eq:term6-bd}
\end{align}
where the last line holds if $8\gamma_{\max}^2\epsilon<{1\over 2}$.
Combining \eqref{eq:term000}, \eqref{eq:term1-bd}, \eqref{eq:term2-bd}, \eqref{eq:term3-bd} and \eqref{eq:term6-bd} with \eqref{eq:diff-t-opt-t-tilde}, it follows that
\begin{align}
\left|\tilde{t}_1-t_{\rm opt}+\mu_1\right| \leq&   \dmu \epsilon+ {2\epsilon  \over (1-2\epsilon) (1-8 \gamma^2 s\epsilon)w_{\min}\dmu s }+ {1\over \sh_2}\ln {1-w_{\min}\over w_{\min}}\left| {1 \over \Dhmu }- {1\over  (\mu_2-\mu_1)}\right|\nonumber\\
&+{1\over \dmu} \ln {1-w_{\min}\over w_{\min}}\left| {1 \over  \sh_2}- {1\over s}\right|+{16\gamma_{\max}^2\epsilon   \over (1-2\epsilon) (1-8 \gamma^2 s\epsilon)\dmu s } +\left|{f''(r)\over 2}\left({\Dhmu\sh_2 \over \sh_2-\sh_1}\right)\varepsilon^2\right|\nonumber\\
\leq&   \dmu \epsilon+ {2\epsilon  \over (1-2\epsilon) (1-8 \gamma^2 s\epsilon)w_{\min}\dmu s }+ \left({2\epsilon \over (1-2\epsilon)(1-8\gamma^2\epsilon)\dmu s}+\epsilon\dmu \right)\ln {1-w_{\min}\over w_{\min}}\nonumber\\
&+{16\gamma_{\max}^2\epsilon   \over (1-2\epsilon) (1-8 \gamma^2 s\epsilon)\dmu s } +\left|{f''(r)\over 2}\left({\Dhmu\sh_2 \over \sh_2-\sh_1}\right)\varepsilon^2\right|\nonumber\\
=&  \left( \dmu +{2\over w_{\min}\dmu s}+\left({2\over \dmu s}+\dmu\right) \ln {1-w_{\min}\over w_{\min}}+{16\gamma_{\max}^2   \over \dmu s }\right) \epsilon+{f''(r)\over 2}\left|{\Dhmu\sh_2 \over \sh_2-\sh_1}\right|\varepsilon^2+o(\epsilon).\label{eq:diff-t-opt-t-tilde-level2}
\end{align}

Finally, we need to bound $\varepsilon$, defined in \eqref{eq:def-varepsilon}. By the triangle inequality and \eqref{eq:term1-bd}, it follows that.
\begin{align}
|\varepsilon|\leq&{|\sh_2-\sh_1| \over  \dmu^2(1-2\epsilon)^2 (\sh_2)^2}\left( \dmu^2(1+2\epsilon)^2\sh_2 +2\left|\ln {\hat{s}_2\over \hat{s}_1}\right|+2\left|\ln{\hat{w}\over 1-\hat{w}}\right|\right)\nonumber\\
=&{|\sh_2-\sh_1| \over  \dmu^2(1-2\epsilon)^2 (\sh_2)^2}\left( \dmu^2(1+2\epsilon)^2\sh_2 +2\left|\ln {\hat{s}_2\over \hat{s}_1}\right|+2\left|\ln{\hat{w}\over 1-\hat{w}}-\ln{w\over 1-w}+\ln{w\over 1-w}\right|\right)\nonumber\\
\stackrel{(a)}{\leq} &{|\sh_2-\sh_1| \over  \dmu^2(1-2\epsilon)^2 (1-8\gamma^2 \epsilon)^2s^2}\left( \dmu^2(1+2\epsilon)^2(1+8\gamma^2 \epsilon)s +32\gamma_{\max}^2\epsilon+{4\epsilon \over w_{\min}}+2\ln {1-w_{\min}\over w_{\min}}\right)\nonumber\\
\stackrel{(b)}{=}&{|\sh_2-\sh_1| \over s} \left( 1 +{1\over 2\gamma} \ln {1-w_{\min}\over w_{\min}}+O(\epsilon)\right),\label{eq:bound-varepsilon}
\end{align}
where $(a)$ follows from \eqref{eq:bd-tau-i}, \eqref{eq:term2-bd}, \eqref{eq:term3-bd} and \eqref{eq:term6-bd}, and $(b)$ holds because $\dmu^2{s}=4\gamma$. Also, note that, from \eqref{eq:bd-tau-i},
\begin{align}
|\varepsilon|\leq& 8\gamma \left( 2\gamma + \ln {1-w_{\min}\over w_{\min}}+O(\epsilon) \right)\epsilon.
\end{align}
Therefore, if $(16\gamma_{\max}^2+8\gamma_{\max}\ln{1-w_{\min})\over w_{\min}}+2\gamma_{\max}\epsilon)\epsilon<{1\over 2}$,
\[
|\varepsilon|\leq {1\over 2},
\] and $|f''(r)|={1\over 4}(1+r)^{-{3\over 2}}\leq {1\over \sqrt{2}}<1$.
Combining \eqref{eq:bound-varepsilon} with \eqref{eq:diff-t-opt-t-tilde-level2}, it follows that
\begin{align}
\left|\tilde{t}_1-t_{\rm opt}+\mu_1\right|  \leq&   \left( \dmu +{2\over w_{\min}\dmu s}+\left({2\over \dmu s}+\dmu\right) \ln {1-w_{\min}\over w_{\min}}+{16\gamma_{\max}^2   \over \dmu s }\right) \epsilon\nonumber\\
&+|f''(r)||\Dhmu\sh_2|  |\sh_2-\sh_1| \left(  1+{1\over 2\gamma}\ln {1-w_{\min}\over w_{\min}}+O(\epsilon)\right)^2+o(\epsilon)\nonumber\\
\leq&   \left( \dmu +{2\over w_{\min}\dmu s}+\left({2\over \dmu s}+\dmu\right) \ln {1-w_{\min}\over w_{\min}}+{16\gamma_{\max}^2   \over \dmu s }\right) \epsilon\nonumber\\
&+|f''(r)||\delta_{\mu} s|(1+\epsilon)(16\gamma^2\epsilon)   \left(  1+{1\over 2\gamma}\ln {1-w_{\min}\over w_{\min}}+O(\epsilon)\right)^2+o(\epsilon)\nonumber\\
\leq&   \left( \dmu +{2\over w_{\min}\dmu s}+\left({2\over \dmu s}+\dmu\right) \ln {1-w_{\min}\over w_{\min}}+{16\gamma_{\max}^2   \over \dmu s }\right.\nonumber\\
&\;\;\;\;\left.+|f''(r)||\delta_{\mu} s|   \left(  4\gamma+2\ln {1-w_{\min}\over w_{\min}}\right)^2\right) \epsilon+o(\epsilon).\label{eq:final-bd-t-tilde1-t-opt}
\end{align}
Dividing both sides of \eqref{eq:final-bd-t-tilde1-t-opt} by $\sigma_1$, and noting that $\dmu/(2\sigma_1)=\gamma$ and $|f''(r)|\leq 1$, we derive
\begin{align}
{\left|\tilde{t}_1-t_{\rm opt}+\mu_1\right| \over \sigma_1}
\leq&   \left( 2\gamma +{1\over w_{\min}\gamma}+\left({1\over \gamma}+2\gamma\right) \ln {1-w_{\min}\over w_{\min}}+{8\gamma_{\max}^2   \over \gamma }\right.\nonumber\\
&\;\;\;\;\left.+{2\gamma}   \left(  4\gamma+2\ln {1-w_{\min}\over w_{\min}}\right)^2\right) \epsilon+o(\epsilon).\label{eq:final-bd-t-tilde-1}
%\leq&   \left( 2c_{\max} +{1\over w_{\min}c_{\min}}+\left({1\over c_{\min}}+2c_{\max}\right) \ln {1-w_{\min}\over w_{\min}}+{8c_{\max}^2   \over c_{\min} }\right.\nonumber\\
%&\;\;\;\;\left.+{2c_{\max}f''(r)}   \left(  4c_{\max}+2\ln {1-w_{\min}\over w_{\min}}\right)^2\right) \epsilon+\epsilon_3.
\end{align}

Finally, as a reminder, from \eqref{eq:t-tilde2}, $\tilde{t}_2={2\Dhmu\sh_2 \over \sh_2-\sh_1}-\tilde{t}_1$. From \eqref{eq:bd-tau-i},  $ |\sh_2-\sh_1|\leq 16\gamma^2s\epsilon$. Hence, if $\hat{\sigma}_1\leq \hat{\sigma}_2$,
\[
-{\tilde{t}_2\over \hat{\sigma}_1}={1\over \hat{\sigma}_1}\left({2\Dhmu\sh_2 \over \sh_1-\sh_2}+\tilde{t}_1\right)\geq {1 \over 4 \gamma \epsilon } +o\left({1\over \epsilon}\right).
\]
Similarly, if $\hat{\sigma}_2\leq \hat{\sigma}_1$,
\[
{\tilde{t}_2-\Dhmu\over \hat{\sigma}_2}={1\over \hat{\sigma}_2}\left({2\Dhmu\sh_2 \over \sh_2-\sh_1}-\tilde{t}_1-\Dhmu\right)\geq {1 \over 4 \gamma \epsilon } +o\left({1\over \epsilon}\right).
\]
Combining \eqref{eq:final-bd-t-tilde-1} and the above equations with \eqref{eq:e-min-e-opt-1} and \eqref{eq:e-min-e-opt-2} yields the desired result.
 \end{proof}

%\end{thebibliography}
\end{document}